\pdfoutput=1
\documentclass{article}

\PassOptionsToPackage{numbers}{natbib}
\usepackage[main, final]{neurips_2026}

\usepackage[utf8]{inputenc}%
\usepackage[T1]{fontenc}%
\usepackage{hyperref}%
\usepackage{url}%
\usepackage{booktabs}%
\usepackage{amsfonts}%
\usepackage{nicefrac}%
\usepackage{microtype}%
\usepackage{xcolor}%
\usepackage{amsmath}
\usepackage{algorithm}
\usepackage{algorithmic}
\usepackage{thmtools}
\usepackage{enumitem}
\usepackage{graphicx}%
\usepackage{subcaption}%
\usepackage{float}
\usepackage{placeins}%
\newcommand{\oom}{{\color{gray}\textsc{OOM}}}

\usepackage{amsthm}

\newcommand{\G}{\mathcal{G}}%
\newcommand{\V}{\mathcal{V}}%
\newcommand{\E}{\mathcal{E}}%

\newtheorem{remark}{Remark}
\title{Graph Anomaly Detection as Finite-Horizon Control:\\Training-Free Scoring via Empirical Bayes}

\author{%
  Fred Xu\thanks{Work done during an internship at Block, Inc.} \\
  University of California, Los Angeles \\
  \texttt{fredxu@cs.ucla.edu} \\
  \And
  Thomas Markovich \\
  Block \\
  \texttt{tmarkovich@squareup.com} \\
  \AND
  Florence Regol \\
  Mila -- Quebec AI Institute \\
  \texttt{florence.regol@mila.quebec} \\
  \And
  Yizhou Sun \\
  University of California, Los Angeles \\
  \texttt{yzsun@cs.ucla.edu} \\
}

\begin{document}

\maketitle

\begin{abstract}
Node-level graph anomaly detection (GAD) identifies nodes whose attributes and interactions deviate from dominant graph regularities. Existing GAD models encode normality and anomaly scoring indirectly through architectures, message passing, reconstruction or contrastive objectives, and tuned score families. This entangles graph trust (how strongly graph structure should define normality), graph-spectral weighting, and anomaly-score choice, yielding scores that are costly, opaque, and unstable across graph regimes. We propose EB-GAD (Empirical-Bayes GAD), a training-free framework that models normality as graph-aware generalized Ornstein--Uhlenbeck (GOU) relaxation toward a graph-filtered template. Empirical Bayes fits the graph precision from the residual-field likelihood; the GOU then turns scoring into a closed-form finite-horizon control energy, the minimum effort to steer a feature-neutral node to its observed endpoint along graph-spectral relaxation. Sweeping relaxation horizon and endpoint tolerance yields a bank of scores that share one fitted prior: equilibrium Mahalanobis scoring is one limit, while finite-horizon control-energy and scale-normalized ratio scores reveal anomalies that static equilibrium scoring can mask. A label-free selector chooses the score family from feature homophily, edge density, and feature dimension, then ranks candidates by fitted-null deviation and rank stability. On 11 benchmarks and without labels at any step, EB-GAD has the best or tied-best AUROC on 9: the four financial fraud networks (up to 3.7M nodes), the YelpChi and Amazon review graphs, Weibo, Reddit and Facebook, with margins of up to 21.7 points. It is second on BlogCatalog and ACM.
\end{abstract}

\section{Introduction}

Node-level graph anomaly detection (GAD) asks which nodes look abnormal relative to both their attributes and graph neighborhoods, with applications in fraud detection, cybersecurity, and scientific networks \cite{akoglu2015graph,ma2021comprehensive}. This is difficult without labels because edges are not always equally trustworthy: in homophilic graphs they reveal normal communities, while in fraud, camouflage, or heterophilic settings they can hide abnormal behavior \cite{liu2020graphconsis,dou2020caregnn,liu2021pcgnn,wang2023gaga,xiang2023gtan,xu2024revisiting}. A detector must therefore decide how much graph structure should define normality (graph trust), which graph-scale variations matter, and how to turn deviations into an anomaly score.

Most GAD methods make these choices indirectly through architectures, message passing, reconstruction or contrastive losses, and hyperparameter sweeps \cite{ding2019deep,li2019specae,fan2020anomalydae,Liu_2022_cola,jin2021anemone,zheng2021slgad,li2024diffgad,qiao2023truncated}. This makes scores hard to interpret and unstable across graph regimes: in native-score evaluation, several baselines fall below chance on Elliptic and Facebook (Table~\ref{tab:main_auroc}). It also leaves no way to ask whether a node is anomalous because it is graph-rough, slow to match a normal template, or large in residual magnitude.

EB-GAD makes these choices explicit. We fit a structured prior in which normal features relax toward a graph-filtered template, while empirical Bayes (EB) selects graph trust and length-scale from the residual-field likelihood \cite{robbins1956empirical,efron2012large}; since anomalies are rare, this likelihood is dominated by normal nodes. The prior is the stationary law of a graph-aware generalized Ornstein--Uhlenbeck (GOU) process \cite{yue2023goub} with graph Mat\'ern precision \cite{borovitskiy2021materngaussianprocessesgraphs,xu2025uncertaintyestimationgraphsstructure}. This connects modeling to scoring: starting from neutral features, how much control effort is needed to reach the observed node as graph modes relax? The answer is a \emph{finite-horizon control energy}, available in closed form. Varying the relaxation horizon $\Gamma$ and endpoint tolerance $\lambda_c$ gives a \emph{bank} of such scores. The bank keeps one fitted prior but tests different views: short-horizon effort, equilibrium roughness, and scale-normalized roughness.

We call the resulting framework EB-GAD (Empirical-Bayes GAD) and describe it as \emph{training-free}. By this we mean that no neural parameters are learned, no gradient descent or epochs are run, and no labels or validation sets are used. EB-GAD does fit three scalars $(\rho,\kappa,\gamma)$, by maximizing a closed-form likelihood on the unlabeled graph (a clamped Newton iteration in $\rho$, small grids over $\kappa$ and $\gamma$), and it ranks a fixed bank of closed-form scores with a label-free selector whose grids and constants are fixed in advance (Appendix~\ref{app:selector_details}). Training-free does not mean hyperparameter-free; a longer name is closed-form scoring with empirical-Bayes calibration. Each ingredient (graph Gaussian priors, likelihood-based fitting, OU processes, Mahalanobis scores) is established; our contribution is their calibrated composition and the evidence that it works without training:

\begin{enumerate}[nosep,leftmargin=*]
\item \textbf{Anomaly score as finite-horizon control energy.}
We extend equilibrium graph-Mahalanobis scoring to a finite-horizon bank using the GOU's closed-form Gaussian transitions. The scores ($C_{\Gamma,\lambda_c}$, $CR_{\Gamma,\lambda_c}$, $J^\star$, $R$) share one EB-fitted graph precision, hence one plug-in null; $J^\star$ is a limiting case, not the whole model.

\item \textbf{Label-free selection over the bank.}
A predeclared selector chooses among equilibrium, control-energy, control-ratio, and profile scores without labels. Graph statistics pick the family; within it, fitted-null deviation and neighboring-rank stability pick a stable score rather than an isolated AUROC-favored configuration.

\item \textbf{Training-free empirical Bayes prior selection.}
The linear-Gaussian structure makes the residual-space likelihood closed-form. Algorithm~\ref{alg:prior_opt} fits the prior by scalar optimizations; its parameters are interpretable ($\rho$: graph trust; $\kappa$: inverse length-scale; $\gamma$: template bandwidth).
\end{enumerate}

On 11 benchmarks and without labels at any step, EB-GAD has the best or tied-best AUROC on 9: the four financial fraud networks (up to 3.7M nodes), the YelpChi and Amazon review graphs, Weibo, Reddit and Facebook, with margins of up to 21.7 points. It is second on BlogCatalog and ACM. Section~\ref{sec:experiments} and Section~\ref{sec:conclusion} state where it trails, and which entries depend on a design choice made for this version.

\begin{figure*}[t]
    \centering
    \includegraphics[width=\linewidth]{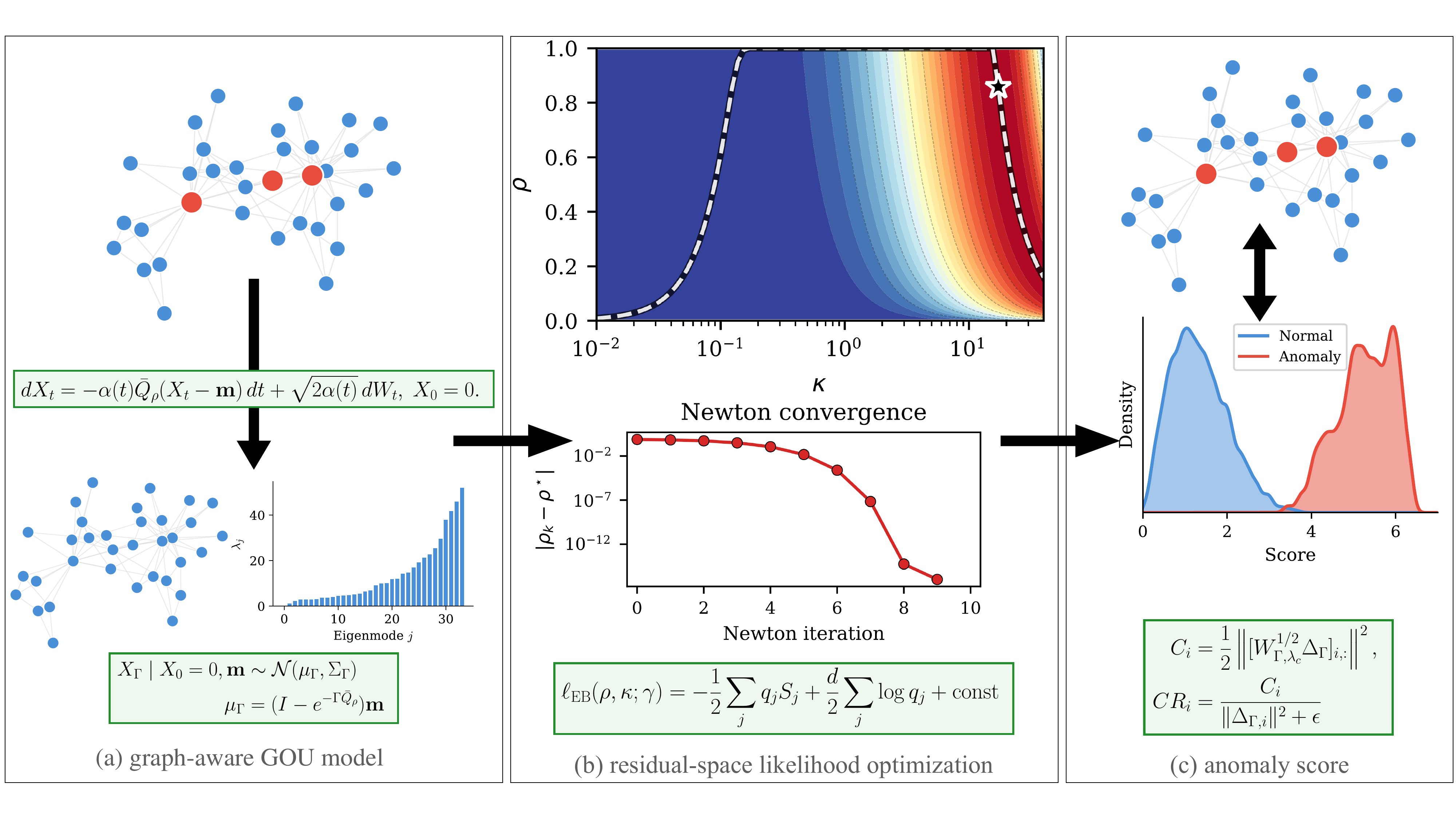}
    \caption{%
        \textbf{EB-GAD overview.}
        \textbf{(a)} A graph-aware GOU around template $\mathbf m$ defines finite-horizon Gaussian transitions from a feature-neutral start.
        \textbf{(b)} Residual-space empirical Bayes fits $(\rho,\kappa)$ for each template bandwidth $\gamma$ without labels ($d$ in (b) is the feature dimension $D$).
        \textbf{(c)} The fitted dynamics produce control-energy scores $C_{\Gamma,\lambda_c}$, $CR_{\Gamma,\lambda_c}$ and equilibrium scores $J^\star,R$; a fixed selector routes each dataset to one of four score families.
    }
    \label{fig:overview}
\end{figure*}

\section{Background}

\subsection{Graph Anomaly Detection}
\label{subsec:gad_background}

Graph anomaly detection (GAD) identifies atypical nodes whose attributes and structural interactions deviate from graph regularities. We focus on \emph{node-level} GAD on attributed graphs \cite{akoglu2015graph,ma2021comprehensive}. Formally, given $\G=(\V,\E,X)$ with $|\V|=n$, adjacency $A$, and node features $X\in\mathbb{R}^{n\times D}$, unsupervised node-level GAD outputs scores $s_i=f(i,A,X)$ whose large values rank nodes $i\in\V$ as anomalous. Existing methods span reconstruction/generative approaches \cite{ding2019deep,li2019specae,fan2020anomalydae,li2024diffgad}, self-supervised/contrastive methods \cite{Liu_2022_cola,jin2021anemone,zheng2021slgad}, and one-class methods \cite{qiao2023truncated}. In fraud settings, they must also handle class imbalance, camouflage, and heterophily \cite{liu2020graphconsis,dou2020caregnn,liu2021pcgnn,wang2023gaga,xiang2023gtan,xu2024revisiting}. Across these families, graph trust, graph-spectral weighting, and anomaly scoring are often tied to architectures and hyperparameter sweeps.
EB-GAD instead casts these decisions as parameters of a single linear-Gaussian model and selects them from data without labels.

\subsection{Empirical Bayes for Anomaly Detection}
\label{subsec:bayes_background}

Unsupervised anomaly detection can be cast as evaluating \emph{Shannon surprise} under a normal-data model, using $s(x_i) = -\log p(x_i\mid\varphi)$. We use $\varphi$ for prior or marginal-model hyperparameters, not arbitrary trained weights; in EB-GAD these are graph trust, graph length-scale, and template bandwidth. Many methods approximate this score through density estimation \cite{lof_2000}, reconstruction error \cite{ding2019deep,fan2020anomalydae}, or representation-space distance \cite{qiao2023truncated,Liu_2022_cola}; in each case, the fitted normality notion determines the anomaly score.
In \emph{empirical Bayes} \cite{robbins1956empirical,efron2012large}, these hyperparameters are selected from data $\mathcal{D}$ by maximizing the marginal likelihood:
\begin{equation}
\varphi^\star = \arg\max_\varphi\; \log p(\mathcal{D}\mid\varphi),
\label{eq:eb_bg}
\end{equation}

which integrates out latent variables when present and balances fit against complexity (Occam's razor) \cite{efron2012large}.
Since anomalies are rare, this criterion is dominated by normal nodes and $\varphi^\star$ approximately recovers a normal-data prior. When the model is \emph{conjugate} (Gaussian prior with linear-Gaussian likelihood), the marginal likelihood is available in closed form; otherwise one typically optimizes a variational lower bound (ELBO). In our setting, empirical Bayes is applied to the residual field induced by a low-pass graph-filtered template rather than directly to the raw features. The likelihood is a graph-coupled density over the full residual field, while the node score is the nodewise contribution to the same joint Gaussian energy. Bayesian outlier detection has been explored in non-graph settings \cite{dpmmoutlier2011,ting2007bayesoutlier}; in graph anomaly detection, prior structure is usually fixed rather than selected from data.

\subsection{Generalized Ornstein--Uhlenbeck Dynamics}
\label{subsec:gou_background}

We need a normal-data model that can encode prior knowledge about where node features should concentrate, while remaining tractable enough for label-free fitting. The Ornstein--Uhlenbeck process provides exactly this: a linear Gaussian model whose stationary mean can be set to a template $\mathbf m$. We use its matrix-drift generalization,
\begin{equation}
dX_t = -\alpha(t)\,\bar Q_\rho\,(X_t - \mathbf{m})\,dt + \sqrt{2\alpha(t)}\,dW_t,
\label{eq:gou_background}
\end{equation}
where $\alpha(t)>0$ is a scalar schedule, $W_t$ is Brownian motion, and $\bar Q_\rho$ is a symmetric positive definite (SPD) drift operator. Coupling the noise scale to $\alpha(t)$ gives stationary distribution $\mathcal{N}(\mathbf m,\bar Q_\rho^{-1})$ and closed-form Gaussian transitions. Section~\ref{subsec:graph_prior} instantiates $\mathbf m$ as a graph-filtered template and $\bar Q_\rho$ as a graph-aware precision, injecting graph prior knowledge into both the stationary mean and covariance while keeping EB fitting and node scoring closed form.

\section{Graph Anomaly Detection via Finite-Horizon Control}

The components of EB-GAD are standard: a graph Gaussian prior, the OU dynamics attached to it, and Gaussian quadratic scores. The contribution is their composition, calibrated by EB and selected without labels. We therefore state the model first and its control reading afterwards: \S\ref{subsec:graph_prior}--\ref{subsec:anomaly_score} define a graph-aware GOU and derive its bank of finite-horizon scores; \S\ref{subsec:prior_selection} fits the prior and selects a score.

\subsection{Graph-Aware GOU Process}
\label{subsec:graph_prior}

\paragraph{Notation.}
We use three forms of the same node-feature data: input matrix $X\in\mathbb{R}^{n\times D}$, its row-major vectorization $\bar x=\operatorname{vec}(X)\in\mathbb{R}^{nD}$, and the GOU state $X_t\in\mathbb{R}^{nD}$ (with $X_0=0$ and $X_T=\bar x$ at scoring time; lowercase $x_T$ in SOC formulae denotes the same realization). The objects $X,M,\Delta,\Delta_\Gamma$ are used in either $n\times D$ matrix or $\mathbb{R}^{nD}$ vec form depending on context (covariance equations use the vec form; nodewise scoring uses the matrix); $[A]_{i,:}\in\mathbb{R}^D$ denotes the $i$-th row of an $n\times D$ matrix.

\paragraph{Reference dynamics.}
Let $W_t \in \mathbb{R}^{nD}$ be a standard Brownian motion. We define a graph-aware GOU reference process around a template $\mathbf{m}\in\mathbb{R}^{nD}$:
\begin{equation}
dX_t
=
-\alpha(t)\,\bar Q_{\rho}\,\bigl(X_t - \mathbf{m}\bigr)\,dt
+
\sqrt{2\alpha(t)}\,dW_t,
\qquad t\in[0,T],
\label{eq:ref_graph_ou}
\end{equation}
where $\alpha(t)>0$ is a deterministic scalar schedule and $\bar{Q}_\rho$ is a graph-aware precision operator.
The matrix drift $\bar Q_\rho$ embeds the graph into the dynamics: in eigencoordinates of $\bar Q_\rho$, mode $j$ relaxes at rate $\alpha(t)q_j$, so smooth (low Laplacian frequency, small $q_j$) modes drift slowly toward the template while rough modes equilibrate fast.
The closed-form Gaussian transition that follows therefore inherits a graph-spectral schedule of relaxation times.

\paragraph{Role of the dynamics for static data.}
We do not claim that the data evolved by OU dynamics, and static data could not support such a claim. The choice is representational. Among linear diffusions with isotropic noise, \eqref{eq:ref_graph_ou} is the unique one, up to time scale, that is reversible with stationary law $\mathcal N(\mathbf m,\bar Q_\rho^{-1})$; it is the canonical dynamics of the graph Gaussian prior that EB fits and adds no modeling freedom at equilibrium. What it adds is a horizon-indexed family of closed-form quadratic scores that share this one prior, so that every horizon is calibrated by the same fitted law; a heat or polynomial filter bank changes the implied prior at each scale. The Brownian increments are independent of the current state by definition of the linear-Gaussian transition, not as an assumption on the data: a state-dependent innovation would be absorbed into a different drift and stationary precision, which is what $(\rho,\kappa)$ fit, and since only the stationary law and its finite-horizon interpolations enter the scores, such dependence is not identifiable from a static graph.

\paragraph{Precision blend: graph trust and inverse length-scale.}
We parameterize the precision operator with a graph trust parameter $\rho\in[0,1]$ and Mat\'ern inverse length-scale $\kappa \ge 0$:
\begin{equation}
\bar Q_{\rho}:=Q_{\rho}\otimes I_D,
\qquad
Q_{\rho}:=\rho\,Q_{\mathrm{prior}}+(1-\rho)\,I_n,
\qquad
Q_{\mathrm{prior}}=(\kappa^2 I_n+ L)^\nu,
\label{eq:Q_blend}
\end{equation}
where $\otimes$ denotes the Kronecker product, $\nu>0$ controls the spectral roll-off, i.e., how quickly precision grows with graph frequency (fixed to $\nu=1$ throughout), and $L$ is the Laplacian of the selected graph operator (the input graph by default, or a predeclared feature-affinity reweighting that keeps the same edge set; Appendix~\ref{app:selector_details}). In eigenspace, the precision on mode $j$ is $q_j = \rho(\kappa^2+\lambda_j)^\nu + (1-\rho)$, where $\lambda_j$ is the $j$-th selected-operator eigenvalue. The two parameters play distinct roles:
$\rho$ controls the overall \emph{graph trust}: $\rho=1$ gives full graph-spectral precision, while $\rho=0$ gives isotropic precision. $\kappa$ is the Mat\'ern inverse length-scale: modes with $\lambda_j \gg \kappa^2$ are dominated by graph-frequency weighting, while modes with $\lambda_j \ll \kappa^2$ share the approximately constant precision level $(1-\rho)+\rho\kappa^{2\nu}$. Thus $\kappa$ controls where the transition from low-frequency floor to graph-spectral growth occurs, rather than forcing the low-frequency precision to equal the isotropic value one. This follows from Mat\'ern kernels on graphs \cite{borovitskiy2021materngaussianprocessesgraphs, xu2025uncertaintyestimationgraphsstructure}.

\paragraph{Prior template.}
The template $\mathbf m\in\mathbb{R}^{nD}$ encodes a prior notion of normal node attributes, defined as a low-pass graph-filtered version of the features:
\begin{equation}
\mathbf m
:=\bar S_{\gamma,\nu}\,\bar x,
\qquad
S_{\gamma,\nu}:=(\gamma^2 I_n+ L)^{-\nu},
\qquad \gamma>0,\ \nu>0,
\label{eq:template}
\end{equation}
where $\bar S_{\gamma,\nu}:=S_{\gamma,\nu}\otimes I_D$ is the unnormalized Mat\'ern resolvent used in our implementation \cite{borovitskiy2021materngaussianprocessesgraphs,xu2025uncertaintyestimationgraphsstructure}.
It is low-pass in graph frequency; because it is unnormalized, the profiled $\gamma$ controls both bandwidth and low-frequency scale, and eigenvalues $1/(\gamma^2+\lambda_j)^\nu$ can exceed one when $\gamma^2+\lambda_j<1$ (e.g., DC gain $\gamma^{-2\nu}$ on a Laplacian zero mode). We treat this as a fixed template family rather than a post-hoc normalization: EB fitting and scoring always use the same chosen residual operator, while a unit-DC-gain variant would simply define a different predeclared template family.
The low-pass construction is canonical, while heterophilic high-dimensional regimes may use the same Mat\'ern construction with a predeclared affinity-weighted template without changing the Gaussian or control-energy derivation.
Define $T_{\gamma,\nu}:=I_n-S_{\gamma,\nu}$ and $\Delta=T_{\gamma,\nu}X$.
All likelihoods and scores are computed on this residual field rather than on raw features; truncated eigenspace and null-mode details are given in Appendix~\ref{app:modeling_details}.

\paragraph{Transition kernel.}
For $0\le s < t \le T$, define the integrated schedule $\Gamma_{s:t}:=\int_s^t\alpha(r)\,dr$ and the propagator $\Phi_{s:t}:=\exp(-\Gamma_{s:t}\bar Q_\rho)$.

\begin{restatable}{proposition}{reftransition}\label{prop:ref_transition_score}
Under \eqref{eq:ref_graph_ou}, the transition kernel $p(X_t\mid X_s)$ is Gaussian with mean
$\mu_{t|s}(x_s) = \mathbf m + \Phi_{s:t}(x_s-\mathbf m)$ and covariance
$\Sigma_{t|s} = \bar Q_\rho^{-1}(I_{nD}-\exp(-2\Gamma_{s:t}\bar Q_\rho))$ whenever $q_j>0$ on all modeled modes (equivalently, $\bar Q_\rho$ is SPD on the modeled subspace).
\end{restatable}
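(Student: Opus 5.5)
The SDE \eqref{eq:ref_graph_ou} is linear with a time-varying scalar coefficient times a \emph{fixed} matrix $\bar Q_\rho$. The plan is to solve it exactly by variation of constants and read off the Gaussian law of $X_t$ given $X_s=x_s$.

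\textbf{Step 1: solve the SDE.} Set $Y_t := X_t-\mathbf m$, so that $dY_t = -\alpha(t)\bar Q_\rho Y_t\,dt + \sqrt{2\alpha(t)}\,dW_t$. For fixed $s$, apply It\^o's product rule to $\exp(\Gamma_{s:t}\bar Q_\rho)Y_t$. Since $\partial_t \exp(\Gamma_{s:t}\bar Q_\rho) = \alpha(t)\bar Q_\rho\exp(\Gamma_{s:t}\bar Q_\rho)$, and $\bar Q_\rho$ commutes with its own exponentials, the drift terms cancel. This leaves
\[
Y_t = \Phi_{s:t}Y_s + \int_s^t \exp(-\Gamma_{r:t}\bar Q_\rho)\sqrt{2\alpha(r)}\,dW_r ,
\]
where we use $\Gamma_{s:t}-\Gamma_{s:r}=\Gamma_{r:t}$. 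Commutativity is the key point: the propagator is a genuine matrix exponential only because the time dependence enters through the scalar $\alpha$. Otherwise a time-ordered exponential would be needed.

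\textbf{Step 2: Gaussianity and mean.} The stochastic integral has a deterministic integrand, so it is a centered Gaussian independent of $Y_s$ (equivalently, of $X_s$ by the Markov property). Conditioning on $X_s=x_s$ therefore gives a Gaussian with mean $\mathbf m+\Phi_{s:t}(x_s-\mathbf m)$.

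\textbf{Step 3: covariance via It\^o isometry.} By the It\^o isometry, and using the symmetry of $\bar Q_\rho$,
\[
\Sigma_{t|s} = \int_s^t 2\alpha(r)\exp(-2\Gamma_{r:t}\bar Q_\rho)\,dr .
\]
Diagonalize $\bar Q_\rho = U\,\mathrm{diag}(q_j)\,U^\top$; this is possible because $Q_\rho\otimes I_D$ is symmetric and shares the eigenvectors of $L$. On mode $j$, substitute $u=\Gamma_{r:t}$, so $du=-\alpha(r)\,dr$. The integral becomes $\int_0^{\Gamma_{s:t}} 2e^{-2q_j u}\,du = q_j^{-1}(1-e^{-2q_j\Gamma_{s:t}})$. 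Reassembling the modes gives $\bar Q_\rho^{-1}(I-\exp(-2\Gamma_{s:t}\bar Q_\rho))$.

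\textbf{Main obstacle.} The delicate step is the last one, where division by $q_j$ is required. This is exactly why the hypothesis $q_j>0$ on the modeled modes is imposed. We would note that for $q_j\to 0$ the mode-wise variance tends continuously to $2\Gamma_{s:t}$ (Brownian behaviour), but the stated formula needs SPD on the modeled subspace. With $\nu=1$, $\rho\in[0,1]$ and $\kappa\ge0$, we have $q_j=\rho(\kappa^2+\lambda_j)+(1-\rho)\ge 0$, and $q_j>0$ except in the edge case $\rho=1,\kappa=0,\lambda_j=0$. That null mode is excluded by restricting to the modeled subspace, as in Appendix~\ref{app:modeling_details}.
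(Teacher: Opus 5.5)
Your proof is correct and follows the paper's argument essentially step for step: the integrating factor $\exp(\Gamma_{s:t}\bar Q_\rho)=\Phi_{s:t}^{-1}$, the It\^o isometry, and the substitution $u=\Gamma_{r:t}$. The only differences are cosmetic: you evaluate the covariance integral mode by mode after diagonalizing, where the paper applies the matrix identity $\int_0^a 2e^{-2uA}\,du=A^{-1}(I-e^{-2aA})$ directly, and in your closing remark the paper does not remove Laplacian null modes from the modeled subspace but keeps them and relies on $q_j>0$ there (i.e.\ $\rho<1$ or $\kappa>0$), which does not affect your proof since it only uses the stated hypothesis.
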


\noindent In eigenspace, the covariance on mode $j$ at time $\Gamma$ is $\sigma_j^2(\Gamma) = (1-e^{-2\Gamma q_j})/q_j$, which grows from $0$ (at $\Gamma=0$) toward the equilibrium $1/q_j$ (as $\Gamma\to\infty$). The rate of growth is governed by $q_j$: high-frequency modes ($q_j$ large) reach equilibrium quickly, while low-frequency modes relax slowly. The proof is given in Appendix~\ref{sec:proofs}.

\subsection{Finite-Horizon Generative Model and Control Energy}
\label{subsec:generative_model}

We score a node by the dynamical effort needed to generate its observed features from a feature-neutral start under the graph-aware GOU.
Concretely: imagine steering the GOU from the origin to the observed endpoint $\bar x$ at time $\Gamma$; the minimum steering energy (under hard endpoint matching) is exactly the negative log-density of the GOU's unconditioned endpoint distribution, up to a constant.
This identification turns the closed-form Gaussian transition kernel into an anomaly score, the \emph{finite-horizon control energy}, and it lets a single fitted GOU yield a bank of scores by varying the horizon $\Gamma$ and the endpoint tolerance $\lambda_c$. It is the minimum control energy of a GOU bridge to one observed endpoint: no coupling or transport plan is optimized, so it is not an optimal-transport cost.

\paragraph{Generative model.}
We model normal data as endpoints of an unconditioned GOU process mean-reverting toward $\mathbf m$ after effective duration $\Gamma:= \int_0^T \alpha(r)\,dr$, started at the feature-neutral origin $X_0=0$ (justified because features are z-scored).
This gives predictive mean $\mu_\Gamma=(I_{nD}-e^{-\Gamma\bar Q_\rho})\mathbf m$, which converges to $\mathbf m$ as $\Gamma\to\infty$.
Throughout this subsection, we assume $q_j>0$ on every modeled mode, equivalently that $\bar Q_\rho$ is SPD on the modeled subspace.
For a fixed template (equivalently, fixed $\gamma$), Proposition~\ref{prop:ref_transition_score} gives the residual-space law
\begin{equation}
p(\Delta_\Gamma \mid \varphi, \Gamma)
= \mathcal{N}\!\bigl(\Delta_\Gamma;\; 0,\; \Sigma_\Gamma\bigr),
\qquad
\Delta_\Gamma := X_\Gamma - \mu_\Gamma,
\qquad
\Sigma_{\Gamma} = \bar Q_\rho^{-1}(I_{nD} - e^{-2\Gamma \bar Q_\rho}).
\label{eq:finite_time_predictive}
\end{equation}
Equivalently, conditional on the template and neutral start, $X_\Gamma \sim \mathcal{N}(\mu_\Gamma,\Sigma_\Gamma)$; at scoring time this endpoint is the observed $\bar x$, so the realized residual is $\Delta_\Gamma=\bar x-\mu_\Gamma$.
Normal residuals are plausible samples from this process; anomalies are not.

\paragraph{Time-dependent precision.}
The precision of the finite-horizon distribution in eigenspace is
\begin{equation}
\tilde q_j(\Gamma)
:= \Sigma_{\Gamma,j}^{-1}
= \frac{q_j}{1 - e^{-2\Gamma q_j}} \;\ge\; q_j,
\label{eq:time_dep_precision}
\end{equation}
with equality in the limit $\Gamma \to \infty$.

\begin{restatable}{proposition}{equilibriumopt}\label{prop:equilibrium_opt}
\textbf{(Equilibrium convergence.)}
For any fixed prior parameters $(\rho, \kappa, \gamma)$ such that $q_j>0$ on all modeled modes, the time-dependent precision satisfies $\tilde q_j(\Gamma) \ge q_j$ for all $j$ and all finite $\Gamma>0$, with exponential convergence:
\begin{equation}
\frac{\tilde q_j(\Gamma)}{q_j} = \frac{1}{1-e^{-2\Gamma q_j}} = 1 + O(e^{-2\Gamma q_j}).
\label{eq:precision_convergence}
\end{equation}
Consequently, for any fixed residual object $Y$, the finite-horizon and equilibrium precision-weighted residual-space likelihood objectives $\ell_\Gamma(Y;\varphi)$ and $\ell_\infty(Y;\varphi)$ satisfy
$\ell_\Gamma(Y;\varphi) \to \ell_\infty(Y;\varphi)$
as $\Gamma \to \infty$.
Moreover, for $\Gamma q_{\min}\ge \tfrac12\log 2$ with $q_{\min}=\min_{1\le j\le k} q_j$, the objective difference is bounded by
\[
\bigl|\ell_\Gamma(Y;\varphi)-\ell_\infty(Y;\varphi)\bigr|
\le C(Y,\varphi)\,e^{-2\Gamma q_{\min}},
\qquad
C(Y,\varphi):=\sum_{j=1}^{k}(q_j S_j(Y)+D).
\]
Here $S_j(Y)$ is the spectral energy of $Y$ on retained mode $j$.
\end{restatable}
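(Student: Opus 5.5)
The plan is to work entirely in the eigenbasis of $Q_\rho$, where everything decouples mode by mode. By Proposition~\ref{prop:ref_transition_score} and \eqref{eq:finite_time_predictive}, $\Sigma_\Gamma$ is diagonal in the eigenbasis of $\bar Q_\rho=Q_\rho\otimes I_D$. Each graph mode $j$ appears with multiplicity $D$ and has variance $(1-e^{-2\Gamma q_j})/q_j$, so its precision is $\tilde q_j(\Gamma)$ as in \eqref{eq:time_dep_precision}.

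\textbf{First claim.} The inequality $\tilde q_j(\Gamma)\ge q_j$ is immediate because $0<1-e^{-2\Gamma q_j}<1$ for $q_j>0$ and $\Gamma>0$. Writing $\epsilon_j:=e^{-2\Gamma q_j}$, we have $\tilde q_j/q_j=1/(1-\epsilon_j)=1+\epsilon_j/(1-\epsilon_j)$. Since $\epsilon_j\to0$, this gives $1+O(\epsilon_j)$, with explicit constant $\epsilon_j/(1-\epsilon_j)\le 2\epsilon_j$ once $\epsilon_j\le\tfrac12$.

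\textbf{Form of the objective.} I will make the precision-weighted Gaussian objective explicit. Let $Y$ be expanded in the retained eigenvectors $u_j$ of $Q_\rho$, and set $S_j(Y):=\|u_j^\top Y\|_2^2$, the energy of the $n\times D$ residual on mode $j$ summed over the $D$ feature columns. Up to a $\Gamma$-independent constant, the negative log-likelihood is
\[
\ell_\Gamma(Y;\varphi)=\tfrac12\sum_{j=1}^k\bigl(\tilde q_j(\Gamma)S_j(Y)-D\log\tilde q_j(\Gamma)\bigr),
\]
and $\ell_\infty$ is the same expression with $\tilde q_j$ replaced by $q_j$. (If the paper's convention drops the factor $\tfrac12$, the same argument goes through with the constant doubled; I would match whichever normalization Appendix~\ref{app:modeling_details} uses.) Pointwise convergence $\ell_\Gamma\to\ell_\infty$ then follows from continuity, since $k$ is finite and each $\tilde q_j\to q_j$.

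\textbf{Quantitative bound.} I split the difference into a quadratic term and a log-determinant term, mode by mode. The quadratic term is
\[
(\tilde q_j-q_j)S_j=q_jS_j\,\frac{\epsilon_j}{1-\epsilon_j}.
\]
The log term is
\[
D\log\frac{\tilde q_j}{q_j}=-D\log(1-\epsilon_j)\le D\,\frac{\epsilon_j}{1-\epsilon_j},
\]
using $-\log(1-x)\le x/(1-x)$. The hypothesis $\Gamma q_{\min}\ge\tfrac12\log2$ gives $\epsilon_j\le e^{-2\Gamma q_{\min}}\le\tfrac12$ for every $j$. Hence $\epsilon_j/(1-\epsilon_j)\le 2e^{-2\Gamma q_{\min}}$, and the triangle inequality yields
\[
|\ell_\Gamma-\ell_\infty|\le\tfrac12\sum_j 2\,(q_jS_j+D)\,e^{-2\Gamma q_{\min}}=C(Y,\varphi)\,e^{-2\Gamma q_{\min}}.
\]

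\textbf{Main obstacle.} The analysis is elementary; the hard part is bookkeeping. I need to pin down the exact definition of $\ell_\Gamma$ (the factor $\tfrac12$, the multiplicity $D$ of the log-determinant, and the treatment of null or truncated modes) so that the constant comes out exactly as $C(Y,\varphi)$. Null-mode issues are avoided by assuming, as the statement does, that $q_j>0$ on the $k$ retained modes.
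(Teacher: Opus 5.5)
Your proposal is correct and matches the paper's proof step for step. The paper likewise bounds $r_j-1=\epsilon_j/(1-\epsilon_j)\le 2e^{-2\Gamma q_j}$ when $\Gamma q_j\ge\tfrac12\log 2$, controls the log-determinant term by $\log r_j\le r_j-1$ (your inequality $-\log(1-x)\le x/(1-x)$), and sums $\tfrac12(q_jS_j+D)(r_j-1)$ over the retained modes to obtain $C(Y,\varphi)e^{-2\Gamma q_{\min}}$. Your normalization concern resolves in your favor: the paper's objective is the log-likelihood of \eqref{eq:joint_ll}, with the factor $\tfrac12$ and multiplicity $D$, which is the negative of your $\ell_\Gamma$, so the absolute difference is identical.
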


\noindent At the optimized $(\rho^\star,\kappa^\star)$, $q_{\min}>0$ on every retained mode (because $\rho^\star<1$ or $\kappa^\star>0$ in every selected configuration), so the bound is exponentially tight; the endpoint residual used for scoring satisfies the same $O(e^{-\Gamma q_{\min}})$ convergence (Step~4 of the proof in Appendix~\ref{sec:proofs}). The final method deliberately retains finite-horizon scores before this collapse, because different graph regimes expose their anomalous signal at different relaxation horizons.

\paragraph{Control energy interpretation.}
For a fixed template $\mathbf m$, the conditional Gaussian score induced by \eqref{eq:finite_time_predictive} equals, up to an $x_T$-independent constant, the minimum control energy in a stochastic optimal control (SOC) formulation, since both reduce to the quadratic form $\frac{1}{2}(x_T - \mu_\Gamma)^\top \Sigma_\Gamma^{-1}(x_T - \mu_\Gamma)$.
Consider the GOU with steering input:
\begin{equation}
dX_t = -\alpha(t)\,\bar Q_\rho(X_t - \mathbf{m})\,dt + g(t)\,u_t(X_t)\,dt + g(t)\,dW_t,
\qquad X_0 = 0,
\label{eq:steered_gou}
\end{equation}
with $g(t) = \sqrt{2\alpha(t)}$, where $u_t$ steers the neutral-start process toward the observed data $x_T$ while the drift mean-reverts toward the template $\mathbf m$. The uncontrolled endpoint mean is $\mu_\Gamma=(I_{nD}-e^{-\Gamma\bar Q_\rho})\mathbf m$. The control energy under hard endpoint matching ($\lambda_c \to \infty$) is
\begin{equation}
\mathcal{E}(0 \to x_T;\, \Gamma)
= \frac{1}{2}(x_T - \mu_\Gamma)^\top \Sigma_\Gamma^{-1} (x_T - \mu_\Gamma)
= -\log p(x_T \mid X_0=0,\mathbf m,\varphi, \Gamma) + \mathrm{const}.
\label{eq:transport_energy}
\end{equation}
This gives the score a dynamical interpretation: it measures the effort needed to steer the GOU to the observed features at horizon~$\Gamma$ from a neutral start under attraction to the template. The graph-aware precision operator makes spectrally rough deviations expensive and spectrally smooth ones cheap.

\subsection{Per-Node Anomaly Scoring}
\label{subsec:anomaly_score}

We now decompose the global Gaussian energy of \eqref{eq:finite_time_predictive} into per-node contributions, then generalize from hard endpoint matching to a finite-horizon bank parameterized by an endpoint tolerance.

\paragraph{Graph signal model.}
The Kronecker structure $\bar Q_\rho = Q_\rho \otimes I_D$ means the $D$ feature dimensions are conditionally independent given the graph.
At scoring time the GOU endpoint equals the observation, $X_T=\bar x$, so the realized finite-horizon residual is the matrix $\Delta_\Gamma\in\mathbb{R}^{n\times D}$ obtained by reshaping $\bar x-\mu_\Gamma$, with rows $\delta_{\Gamma,i}:=[\Delta_\Gamma]_{i,:}=[X]_{i,:}-[\mu_\Gamma]_{i,:}$, representing $D$ conditionally independent graph signals.
Its hard-endpoint Gaussian log-likelihood, up to the additive constant $-\tfrac{kD}{2}\log 2\pi$, is
\begin{equation}
\log p(\mathcal{D}\mid\varphi, \Gamma)
=
-\frac{1}{2}\sum_{j=1}^{k} \tilde q_j(\Gamma)\,S_j(\Gamma)
+\frac{D}{2}\sum_{j=1}^{k} \log \tilde q_j(\Gamma)
\label{eq:joint_ll}
\end{equation}
where $S_j(\Gamma) := \|\Delta_\Gamma^\top v_j\|^2$ and $k \le n$ is the number of computed eigenpairs.
Equation~\eqref{eq:joint_ll} is the hard-endpoint case ($\lambda_c\to\infty$) of a more general soft-endpoint Gaussian.
We retain the finite-horizon GOU transition instead of taking $\Gamma\to\infty$, because short horizons ask whether the node is surprising before the template has fully relaxed, while long horizons approach the equilibrium residual $\Delta=\bar x-\mathbf m$.

\paragraph{Finite-horizon score bank.}
Hard endpoint matching is a knife-edge: it asserts that the GOU sample at time $\Gamma$ exactly equals the observation $\bar x$. Allowing endpoint uncertainty $\lambda_c<\infty$ tests whether the anomaly evidence persists under softer matching, where small deviations from the noiseless endpoint are tolerated.
We model this by adding observation noise: if $Z_\Gamma\sim\mathcal{N}(\mu_\Gamma,\Sigma_\Gamma)$ and $\bar x=Z_\Gamma+\eta$ with $\eta\sim\mathcal{N}(0,\lambda_c^{-1}I)$, then marginalizing $Z_\Gamma$ gives $\bar x\sim\mathcal{N}(\mu_\Gamma,\Sigma_\Gamma+\lambda_c^{-1}I)$.
The effective finite-horizon precision on spectral mode $j$ is therefore
\begin{equation}
c_j(\Gamma,\lambda_c)
:=
\left(\lambda_c^{-1}+\frac{1-e^{-2\Gamma q_j}}{q_j}\right)^{-1}.
\label{eq:finite_transport_precision}
\end{equation}
Thus $c_j$ is not $\tilde q_j+\lambda_c$, which would correspond to adding a second precision penalty after evaluating the endpoint density.
The two limits recover familiar special cases: $\lambda_c\to\infty$ gives hard endpoint matching ($c_j=\tilde q_j(\Gamma)$), and the joint $\Gamma,\lambda_c\to\infty$ limit gives equilibrium ($c_j=q_j$).
The corresponding soft-endpoint log-likelihood is
\begin{equation}
\log p(\mathcal D\mid\varphi,\Gamma,\lambda_c)
=
-\frac{1}{2}\sum_{j=1}^{k} c_j(\Gamma,\lambda_c)\,S_j(\Gamma)
+\frac{D}{2}\sum_{j=1}^{k}\log c_j(\Gamma,\lambda_c)
+\mathrm{const}.
\label{eq:soft_endpoint_ll}
\end{equation}
The nodewise control-energy score uses the quadratic data-fit part of \eqref{eq:soft_endpoint_ll}; the log-determinant term is global and does not decompose over nodes.
The corresponding nodewise contribution to the control energy is
\begin{equation}
C_i(\Gamma,\lambda_c)
:=
\frac{1}{2}\Big\|\sum_{j=1}^k \sqrt{c_j(\Gamma,\lambda_c)}\,[v_j]_i\,
\bigl(v_j^\top \Delta_\Gamma\bigr)\Big\|^2 ,
\label{eq:finite_transport_score}
\end{equation}
The nodewise scores sum to the projected global quadratic energy, $\sum_i C_i=\frac{1}{2}\sum_{j=1}^k c_jS_j$.
Because $Q_\rho$ couples nodes, $C_i$ is a nodewise contribution to the global control energy, not an independent nodewise energy.

In some regimes the absolute scale of $C_i$ is confounded by overall residual magnitude, so a high-magnitude normal node can look anomalous under raw $C_i$ even when its spectral profile is consistent with the template.
We address this with a scale-normalized control ratio,
\begin{equation}
CR_i(\Gamma,\lambda_c)
:=
\frac{C_i(\Gamma,\lambda_c)}{\|[\Delta_\Gamma]_{i,:}\|^2+\epsilon},
\label{eq:finite_transport_ratio}
\end{equation}
where $\epsilon>0$ is a small numerical-stability constant (set to $10^{-8}$ in practice). This isolates graph-spectral roughness from raw residual scale.
The equilibrium scores are recovered as the joint $\Gamma,\lambda_c\to\infty$ limit of the finite-horizon scores:
\begin{equation}
J^\star_i := \lim_{\Gamma,\lambda_c\to\infty} C_i(\Gamma,\lambda_c)
= \frac{1}{2}\Big\|\sum_{j=1}^k \sqrt{q_j}\,[v_j]_i\,(v_j^\top\Delta)\Big\|^2,
\qquad
R_i := \lim_{\Gamma,\lambda_c\to\infty} CR_i(\Gamma,\lambda_c).
\label{eq:anomaly_score}
\end{equation}
When $k=n$, $J^\star_i=\frac{1}{2}\|[Q_\rho^{1/2}\Delta]_{i,:}\|^2$ and the total score is the graph-Gaussian Mahalanobis energy.
When $k<n$, the control-energy numerator operates on the retained spectral subspace while the ratio denominator uses the full nodewise residual norm: the latter is intentional, because anomalies may place substantial residual energy in truncated rough modes, so the ratio asks how much of a node's total residual is explained by the retained spectral geometry.

Together, the family of $C_{\Gamma,\lambda_c}$ and $CR_{\Gamma,\lambda_c}$ at varying $(\Gamma,\lambda_c)$, with their $\Gamma,\lambda_c\to\infty$ limits $J^\star$ and $R$, constitutes the \emph{finite-horizon score bank} (the bank).
By Proposition~\ref{prop:equilibrium_opt} the soft-endpoint likelihood collapses to its equilibrium limit (Step~4 of the proof handles the endpoint-residual variant), so $\Gamma$ and $\lambda_c$ are predeclared resolutions rather than fitted parameters; their usefulness is judged downstream by unlabeled tail and stability diagnostics (\S\ref{subsec:prior_selection}).

\subsection{Empirical Bayes Prior Selection}
\label{subsec:prior_selection}

For each candidate template bandwidth $\gamma$, empirical Bayes selects $(\rho,\kappa)$ from the exact equilibrium residual-space likelihood, and the fitted precision is reused by every finite-horizon score in \eqref{eq:finite_transport_score}--\eqref{eq:finite_transport_ratio}. EB estimates the graph-Gaussian geometry, while the finite-horizon bank probes how anomaly evidence appears along the GOU relaxation path. Because $\Delta=T_{\gamma,\nu}X$ is $\gamma$-dependent, the outer $\gamma$ step is \emph{residual-space profile selection}: it optimizes the Gaussian fit in the transformed residual field and omits the raw-feature Jacobian $D\sum_{j=1}^k\log|1-(\gamma^2+\lambda_j)^{-\nu}|$. The selected $\gamma^\star$ is used by the finite-horizon families, whose $p$-values are computed under the fitted law of that residual; it is also the bandwidth that removes the most feature energy (Table~\ref{tab:gamma_profile}). The equilibrium anchor instead uses the grid value $\gamma_a$ nearest to the graph-statistic center $\gamma_0$, with its own EB fit (Appendix~\ref{app:regeneration}).

\begin{restatable}{theorem}{marglikelihood}\label{thm:marginal_likelihood}
At equilibrium ($\Gamma\to\infty$), assuming $q_j>0$ on every retained mode, the residual-space log-likelihood projected onto the $k$ retained modes (full likelihood when $k=n$) reduces, for fixed $\gamma$ and up to an additive $(\rho,\kappa)$-independent constant, to
\begin{equation}
\log p(\mathcal{D}\mid\varphi)
=
-\underbrace{\frac{1}{2}\sum_{j=1}^k q_j\,S_j}_{\text{data fit (control energy)}}
\;+\;\underbrace{\frac{D}{2}\sum_{j=1}^k\log q_j}_{\text{Gaussian normalizer}}
\label{eq:marginal_likelihood}
\end{equation}
where $S_j=\|\Delta^\top v_j\|^2$ is the spectral energy on eigenmode $j$.
\end{restatable}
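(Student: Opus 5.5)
The plan is to start from the equilibrium law supplied by Proposition~\ref{prop:ref_transition_score} and Proposition~\ref{prop:equilibrium_opt}, then diagonalize. Letting $\Gamma\to\infty$ in \eqref{eq:finite_time_predictive}, the covariance $\Sigma_\Gamma$ tends to $\bar Q_\rho^{-1}$ because $e^{-2\Gamma\bar Q_\rho}\to 0$ when every modeled $q_j>0$. The mean $\mu_\Gamma$ tends to $\mathbf m$, so the residual becomes $\Delta=\bar x-\mathbf m=T_{\gamma,\nu}X$, vectorized. Hence for fixed $\gamma$ the residual field is modeled as $\operatorname{vec}(\Delta)\sim\mathcal N(0,(Q_\rho\otimes I_D)^{-1})$. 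I would take this equilibrium law as the definition of the objective, which Proposition~\ref{prop:equilibrium_opt} justifies as the limit of $\ell_\Gamma$. The Gaussian log-density is
\[
-\tfrac12\operatorname{vec}(\Delta)^\top(Q_\rho\otimes I_D)\operatorname{vec}(\Delta)+\tfrac12\log\det(Q_\rho\otimes I_D)-\tfrac{nD}{2}\log 2\pi .
\]

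Next I diagonalize. The operator $Q_\rho=\rho(\kappa^2I+L)^\nu+(1-\rho)I$ is a spectral function of $L$, so it shares eigenvectors $v_j$ with $L$ and has eigenvalues $q_j=\rho(\kappa^2+\lambda_j)^\nu+(1-\rho)$. This gives $Q_\rho=\sum_j q_j v_jv_j^\top$.

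For the quadratic form, the Kronecker identity $(Q\otimes I_D)$ acting on the row-major vectorization gives $\operatorname{vec}(\Delta)^\top(Q_\rho\otimes I_D)\operatorname{vec}(\Delta)=\operatorname{tr}(\Delta^\top Q_\rho\Delta)$. Substituting the spectral decomposition yields $\sum_j q_j\|\Delta^\top v_j\|^2=\sum_j q_jS_j$.

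For the normalizer, $\det(Q_\rho\otimes I_D)=\det(Q_\rho)^D$. Its logarithm is therefore $D\sum_j\log q_j$. The $-\tfrac{nD}{2}\log 2\pi$ term depends on neither $\rho$ nor $\kappa$, and it is absorbed into the constant. When $k=n$ these steps give \eqref{eq:marginal_likelihood} directly.

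For $k<n$, I would define the projected likelihood as the density of the coordinates $V_k^\top\Delta\in\mathbb R^{k\times D}$, where $V_k=[v_1,\dots,v_k]$ is orthonormal. By orthonormality these coordinates are Gaussian with independent rows, and row $j$ is distributed as $\mathcal N(0,q_j^{-1}I_D)$. Summing the $k$ independent Gaussian log-densities gives exactly $-\tfrac12\sum_{j\le k}q_jS_j+\tfrac D2\sum_{j\le k}\log q_j-\tfrac{kD}{2}\log2\pi$.

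The main subtlety is how the residual-space likelihood is defined: it uses the law of $\Delta$ at fixed $\gamma$, not the law of $X$. If it were defined through $X$, the Jacobian of $T_{\gamma,\nu}$ would enter. That Jacobian, however, depends only on $\gamma$ and $\lambda_j$ and not on $(\rho,\kappa)$, so under either reading it goes into the additive constant. I would state this explicitly, together with the requirement that all retained $q_j>0$, which is needed for the logarithm and for the Gaussian to be proper.
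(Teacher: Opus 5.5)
Your proposal is correct and follows essentially the same route as the paper. Both arguments take the modeled residual law $\operatorname{vec}(\Delta)\sim\mathcal N(0,Q_\rho^{-1}\otimes I_D)$, pass to the spectral coordinates $V_k^\top\Delta$, whose entries are independent with variances $1/q_j$, sum the resulting Gaussian log-densities, and absorb $-\tfrac{kD}{2}\log 2\pi$ (and, under your reading, the $\gamma$-only Jacobian) into the constant. The differences are only presentational: you obtain the law as the $\Gamma\to\infty$ limit of \eqref{eq:finite_time_predictive} and handle $k=n$ through the trace identity and $\det(Q_\rho\otimes I_D)=\det(Q_\rho)^D$, whereas the paper factorizes directly over the feature columns $z_\ell=V_k^\top\Delta_{:,\ell}$ for every $k$.
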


\noindent The first term rewards high precision on low-energy modes; the second discourages making the covariance uniformly too diffuse. When $k=n$, the data-fit term equals the total equilibrium control energy $\sum_i J_i^\star$.

\begin{algorithm}[t]
\caption{Empirical Bayes fit and finite-horizon score selection}
\label{alg:prior_opt}
\begin{algorithmic}[1]
\REQUIRE Graph $(A, X)$
\ENSURE Optimized prior $\varphi^\star = (\rho^\star, \kappa^\star, \gamma^\star)$; final anomaly score
\STATE Compute/cache Laplacian eigenpairs $\{(\lambda_j,v_j)\}$ for each predeclared graph operator (input graph or deterministic affinity-refined operator)
\STATE Set candidate template, $\gamma$, PCA, horizon $\Gamma$, and endpoint-tolerance $\lambda_c$ grids from graph statistics (Appendix~\ref{app:selector_details})
\FOR{each candidate $\gamma$ in range}
    \STATE Compute template $\mathbf{m}(\gamma)$ (Eq.~\ref{eq:template}), residuals $\delta_i$, spectral energies $S_j$
    \FOR{each candidate $\kappa \in [0, \kappa_{\max}]$ \textit{(profile $\rho^\star(\kappa)$)}}
        \STATE $\rho^\star(\kappa) \leftarrow$ maximize over $\rho\in[0,1]$ by clamped Newton updates with endpoint check
    \ENDFOR
    \STATE $\kappa^\star(\gamma) \leftarrow \arg\max_{\kappa} \log p(\mathcal{D}\mid\rho^\star(\kappa),\kappa,\gamma)$
    \STATE $\rho^\star(\gamma)\leftarrow \rho^\star(\kappa^\star(\gamma))$; store $\mathcal L^\star(\gamma)=\log p(\mathcal D\mid\rho^\star(\gamma),\kappa^\star(\gamma),\gamma)$
\ENDFOR
\STATE $\gamma^\star \leftarrow \arg\max_\gamma \mathcal L^\star(\gamma)$; set $\rho^\star\leftarrow\rho^\star(\gamma^\star)$ and $\kappa^\star\leftarrow\kappa^\star(\gamma^\star)$
\STATE Construct finite-horizon scores $C_{\Gamma,\lambda_c}$, $CR_{\Gamma,\lambda_c}$ at $\gamma^\star$; anchor $J^\star,R$ at $\gamma_a$ with $(\rho^\star(\gamma_a),\kappa^\star(\gamma_a))$
\STATE Select score family from graph statistics (Appendix~\ref{app:selector_details})
\STATE Select the final score or rank-fused profile by fitted-null deviation and neighboring-score stability
\STATE \textbf{return} anomaly scores
\end{algorithmic}
\end{algorithm}

\paragraph{Four score families.}
The bank is used through four families:
\begin{itemize}[nosep,leftmargin=*]
\item \emph{Equilibrium anchor} ($J^\star$ or $R$): the stationary graph-Mahalanobis energy or its scale-normalized ratio. It is the default, used when the finite-horizon diagnostics are not stable enough to justify leaving the stationary score, and on sparse, heavily truncated graphs.
\item \emph{Control energy} ($C_{\Gamma,\lambda_c}$): the absolute effort to reach a node at a finite horizon. It is used when residual magnitude is itself informative: on dense graphs with homophilic low-dimensional features, or with high-dimensional features of near-zero homophily (then on the affinity template).
\item \emph{Control ratio} ($CR_{\Gamma,\lambda_c}$): the share of a node's residual that the fitted graph geometry makes expensive. It is used when residual magnitude would otherwise dominate: sparse graphs with high-dimensional, weakly homophilic features.
\item \emph{Profile aggregation}: per-node $p$-values of the bank members under the shared plug-in null, combined across horizons and tolerances. It is used when evidence is spread over horizons: very dense graphs, or graphs whose profile diagnostics pass.
\end{itemize}

\paragraph{Label-free selector.}
After the EB fit, a fixed rule chooses one family using only label-free statistics: cosine feature homophily $h$, edge density $d_e$, feature dimension $D_x$, and the predeclared truncation $k$.
Within the chosen family, scores in the admissible pool $\mathcal B$ are ranked by percentile-normalized quality plus neighboring-rank stability,
$A(r)=\operatorname{pct}_{\mathcal B}(Q(r))+\operatorname{pct}_{\mathcal B}(S(r))$, where $Q(r)$ is a family-specific fitted-null diagnostic and $S(r)=\operatorname{median}_{r'\in N(r)}\operatorname{corr}(\operatorname{rank}(s_r),\operatorname{rank}(s_{r'}))$ is local rank-stability over predeclared neighbors $N(r)$ of the node-score vector $s_r\in\mathbb{R}^n$; ranks average ties, so the selection does not depend on the order of the nodes.
The selector is label-free model selection: the graph statistics decide which family's null model fits the regime, and the within-family diagnostics are goodness-of-fit statistics against that family's plug-in null. It is sound in this sense, but it is not derived from a single principle: its cutoffs are implementation defaults, identical on all datasets, whose provenance and tested sensitivity are given in Appendix~\ref{app:selector_details} with the ordered rules.

The likelihood is concave in $\rho$, while $\kappa$ and $\gamma$ are searched on bounded grids (Remark~\ref{rem:landscape}); Remark~\ref{rem:outlier} discusses the sensitivity of the fit to anomalies.

\FloatBarrier
\section{Experiments}
\label{sec:experiments}

\begin{table}[t]
  \centering
  \caption{%
    AUROC (\%) on 11 GAD benchmarks. \textbf{Bold}: best; \underline{underline}: second; \oom{} = out of memory. Baselines use their native anomaly-score direction (Appendix Table~\ref{tab:score_orientation_audit}). Cell sources: $^{*}$ published in the TAM benchmark \cite{qiao2023truncated}; $^{\dagger}$ published by the baseline's authors (out of memory on our hardware); unmarked baseline cells are our runs with published hyperparameters. For truncated spectra (YelpChi, Elliptic, Elliptic++, DGraph, T-Finance) EB-GAD is the mean $\pm$ standard deviation over five node orders (Appendix~\ref{app:regeneration}).
  }
  \label{tab:main_auroc}
  \scriptsize
  \setlength{\tabcolsep}{2.4pt}
  \renewcommand{\arraystretch}{0.9}
  \resizebox{\textwidth}{!}{%
  \begin{tabular}{l cccccccccc}
    \toprule
    Dataset
      & LOF & DIF & ANOM. & DOMINANT & AnomDAE & CONAD & CoLA & DiffGAD & TAM
      & \textbf{EB-GAD} \\
    \midrule
    Weibo
      & $57.9$ & $38.2\,{\scriptstyle\pm\,1.8}$ & $\underline{94.5}\,{\scriptstyle\pm\,0.0}$ & $85.0\,{\scriptstyle\pm\,14.6}$ & $91.5\,{\scriptstyle\pm\,1.2}$ & $85.4\,{\scriptstyle\pm\,14.3}$ & $28.1\,{\scriptstyle\pm\,0.0}$ & $93.4\,{\scriptstyle\pm\,0.3}$ & $70.9\,{\scriptstyle\pm\,0.0}$ & $\mathbf{95.1}$ \\
    Reddit
      & $57.2$ & $53.5\,{\scriptstyle\pm\,1.4}$ & $54.9\,{\scriptstyle\pm\,5.6}$ & $56.0\,{\scriptstyle\pm\,0.2}$ & $55.7\,{\scriptstyle\pm\,0.4}$ & $56.1\,{\scriptstyle\pm\,0.1}$ & $\underline{60.3}\,{\scriptstyle\pm\,0.7}$$^{*}$ & $56.3\,{\scriptstyle\pm\,0.1}$ & $60.2\,{\scriptstyle\pm\,0.4}$$^{*}$ & $\mathbf{60.6}$ \\
    Amazon
      & $47.4$ & $\underline{77.1}\,{\scriptstyle\pm\,2.5}$ & $44.6\,{\scriptstyle\pm\,0.3}$$^{*}$ & $60.0\,{\scriptstyle\pm\,0.4}$$^{*}$ & $64.6\,{\scriptstyle\pm\,7.1}$ & $55.5\,{\scriptstyle\pm\,1.6}$ & $59.0\,{\scriptstyle\pm\,0.8}$$^{*}$ & $55.9\,{\scriptstyle\pm\,0.1}$ & $70.6\,{\scriptstyle\pm\,1.0}$$^{*}$ & $\mathbf{78.0}$ \\
    YelpChi
      & $\underline{57.2}$ & $56.0\,{\scriptstyle\pm\,0.9}$ & $49.6\,{\scriptstyle\pm\,0.3}$$^{*}$ & $41.3\,{\scriptstyle\pm\,1.0}$$^{*}$ & $56.6\,{\scriptstyle\pm\,2.6}$ & $28.7\,{\scriptstyle\pm\,0.2}$ & $46.4\,{\scriptstyle\pm\,0.1}$$^{*}$ & $29.1\,{\scriptstyle\pm\,0.0}$ & $56.4\,{\scriptstyle\pm\,0.7}$$^{*}$ & $\mathbf{71.7}\,{\scriptstyle\pm\,0.6}$ \\
    BlogCatalog
      & $68.6$ & $57.4\,{\scriptstyle\pm\,2.1}$ & $56.5\,{\scriptstyle\pm\,2.5}$$^{*}$ & $75.9\,{\scriptstyle\pm\,1.0}$$^{*}$ & $74.7\,{\scriptstyle\pm\,0.0}$ & $68.7\,{\scriptstyle\pm\,0.0}$ & $77.5\,{\scriptstyle\pm\,0.9}$$^{*}$ & $76.6\,{\scriptstyle\pm\,0.0}$ & $\mathbf{82.5}\,{\scriptstyle\pm\,0.3}$$^{*}$ & $\underline{78.6}$ \\
    Facebook
      & $47.7$ & $37.8\,{\scriptstyle\pm\,2.8}$ & $90.2\,{\scriptstyle\pm\,0.5}$$^{*}$ & $56.8\,{\scriptstyle\pm\,0.2}$$^{*}$ & $54.7\,{\scriptstyle\pm\,0.6}$ & $14.2\,{\scriptstyle\pm\,1.5}$ & $84.3\,{\scriptstyle\pm\,1.1}$$^{*}$ & $28.0\,{\scriptstyle\pm\,4.7}$ & $\mathbf{91.4}\,{\scriptstyle\pm\,0.8}$$^{*}$ & $\mathbf{91.4}$ \\
    ACM
      & $72.0$ & $49.2\,{\scriptstyle\pm\,1.2}$ & $68.6\,{\scriptstyle\pm\,6.3}$$^{*}$ & $85.7\,{\scriptstyle\pm\,2.0}$$^{*}$ & $74.8\,{\scriptstyle\pm\,0.0}$ & $80.0\,{\scriptstyle\pm\,0.1}$ & $82.3\,{\scriptstyle\pm\,0.1}$$^{*}$ & $81.8\,{\scriptstyle\pm\,1.9}$ & $\mathbf{88.8}\,{\scriptstyle\pm\,2.4}$$^{*}$ & $\underline{86.0}$ \\
    Elliptic
      & $45.1$ & $21.8\,{\scriptstyle\pm\,0.5}$ & \oom & $11.2\,{\scriptstyle\pm\,3.5}$ & $14.2\,{\scriptstyle\pm\,1.2}$ & $11.3\,{\scriptstyle\pm\,0.2}$ & $\underline{65.3}\,{\scriptstyle\pm\,1.3}$ & $26.6\,{\scriptstyle\pm\,1.0}$ & $36.0\,{\scriptstyle\pm\,0.2}$ & $\mathbf{74.5}\,{\scriptstyle\pm\,0.3}$ \\
    Elliptic++
      & $45.9$ & $22.5\,{\scriptstyle\pm\,0.7}$ & \oom & $39.7\,{\scriptstyle\pm\,0.0}$ & $23.7\,{\scriptstyle\pm\,3.9}$ & $39.3\,{\scriptstyle\pm\,0.9}$ & $\underline{62.9}\,{\scriptstyle\pm\,6.7}$ & $18.4\,{\scriptstyle\pm\,2.3}$ & $34.4\,{\scriptstyle\pm\,0.2}$ & $\mathbf{72.5}\,{\scriptstyle\pm\,0.0}$ \\
    DGraph
      & $40.4$ & $\underline{64.2}\,{\scriptstyle\pm\,1.8}$ & \oom & \oom & \oom & $34.7\,{\scriptstyle\pm\,1.2}$$^{\dagger}$ & $54.8\,{\scriptstyle\pm\,0.7}$ & $52.4\,{\scriptstyle\pm\,0.0}$$^{\dagger}$ & $35.7\,{\scriptstyle\pm\,0.8}$ & $\mathbf{66.8}\,{\scriptstyle\pm\,0.1}$ \\
    T-Finance
      & $34.5$ & $35.5\,{\scriptstyle\pm\,1.5}$ & $55.4\,{\scriptstyle\pm\,0.0}$ & $53.8\,{\scriptstyle\pm\,0.0}$ & $32.1\,{\scriptstyle\pm\,0.0}$ & $\underline{63.1}\,{\scriptstyle\pm\,0.1}$ & $48.3\,{\scriptstyle\pm\,0.0}$ & $53.6\,{\scriptstyle\pm\,0.1}$ & $61.8\,{\scriptstyle\pm\,0.0}$$^{\dagger}$ & $\mathbf{84.8}\,{\scriptstyle\pm\,0.0}$ \\
    \bottomrule
  \end{tabular}
  }
\end{table}

\textbf{Setup.}
We evaluate on 11 benchmarks (7 standard GAD graphs and 4 financial networks up to 3.7M nodes; Table~\ref{tab:main_auroc}) against LOF, DIF, and seven graph-based baselines \cite{lof_2000,DIF,chen2025pyod,anomalous,ding2019deep,fan2020anomalydae,conad,Liu_2022_cola,pygod,qiao2023truncated,li2024diffgad}, including TAM, a strong local-affinity baseline.
On Amazon and YelpChi we follow the TAM/DiffGAD evaluation setup \cite{qiao2023truncated,li2024diffgad}; all AUROCs are computed on the same node set across methods.
EB-GAD is run by one fixed procedure on every dataset: the grids, the ordered rules and their cutoffs are those of Appendix~\ref{app:selector_details}, and no label enters any step before evaluation.
Every EB-GAD number was regenerated for this version and passes a \emph{relabeling gate}: a score must not change when the nodes are renumbered. Appendix~\ref{app:regeneration} gives the protocol and lists what changed from the submitted version.

\textbf{Main results.}
EB-GAD has the best or tied-best AUROC on 9 of 11 datasets.
It leads on the four financial networks (T-Finance, Elliptic, Elliptic++, DGraph) and on YelpChi by $2.6$ to $21.7$ points over the best baseline, leads on Amazon by $0.9$ (within the standard deviation of DIF, $77.1\pm2.5$) and on Weibo by $0.6$, and ties TAM on Facebook ($91.4$).
On Reddit its $60.6$ is level with CoLA ($60.3\pm0.7$) and TAM ($60.2$); it is second to TAM on BlogCatalog and ACM.
The Weibo and Reddit entries depend on the anchor bandwidth $\gamma_a$ (\S\ref{subsec:prior_selection}), a rule we adopted after evaluating the anchor at the likelihood-selected bandwidth, which gives $92.5$ and $48.5$ and a count of 7 of 11 (Appendix~\ref{app:regeneration}).
AUPRC (Appendix Table~\ref{tab:auprc}) confirms the five fraud graphs, with the largest margin on T-Finance ($53.0$ against $6.6$ for the best baseline), and is less favorable elsewhere: EB-GAD is ahead on Weibo by $0.3$, level with the best on Reddit ($4.5$), second on Amazon, third on BlogCatalog and ACM, and fourth on Facebook, where it trails TAM, CoLA and ANOMALOUS despite the tied AUROC.
BlogCatalog ($h=0.009$) and ACM are routed to the feature-affinity template; high-pass and band-pass variants of it do not improve BlogCatalog (Appendix~\ref{app:templates}).
Appendix Tables~\ref{tab:ebgad_hyperparams}--\ref{tab:selector_graph_stats} give the configuration, rule and statistics behind each cell.

\textbf{What the finite horizon contributes.}
Table~\ref{tab:bank_ablation} compares each selected score with the equilibrium score of the same configuration, so the fitted model and everything else are held fixed.
Six datasets are routed to an equilibrium anchor.
Of the five routed to a finite-horizon family, three gain ($+3.6$ on Facebook and $+4.7$ on T-Finance by profile aggregation, $+4.7$ on ACM by the control ratio) and the two control-energy fusions are at parity ($+0.1$, $+0.2$).
A narrower comparison agrees: a single finite-horizon score, with no tolerance grid, fusion or aggregation, is within $0.3$ points of its equilibrium limit or below it (Appendix~\ref{app:horizon_studies}).
The gain does not come from one fortunate horizon but from aggregating and normalizing a family of scores that share one fitted prior and one plug-in null.

\begin{table}[t]
  \centering
  \scriptsize
  \setlength{\tabcolsep}{2.2pt}
  \caption{Contribution of the finite horizon: AUROC (\%) of the selected score against the equilibrium score ($J^\star$ or $R$, by NullKS) of the \emph{same} configuration (operator, template, bandwidth, PCA, truncation, EB fit). \emph{Family}: the family the selector routes to; for an anchor the two rows coincide.}
  \label{tab:bank_ablation}
  \resizebox{\textwidth}{!}{%
  \begin{tabular}{l ccccccccccc}
    \toprule
    & Weibo & Reddit & Amazon & YelpChi & BlogCat. & Facebook & ACM & Elliptic & Ell.++ & DGraph & T-Fin. \\
    \midrule
    Family & anchor & anchor & energy & anchor & energy & profile & ratio & anchor & anchor & anchor & profile \\
    Equilibrium & 95.1 & 60.6 & 77.9 & 71.7 & 78.4 & 87.8 & 81.3 & 74.5 & 72.5 & 66.8 & 80.1 \\
    Selected & 95.1 & 60.6 & \textbf{78.0} & 71.7 & \textbf{78.6} & \textbf{91.4} & \textbf{86.0} & 74.5 & 72.5 & 66.8 & \textbf{84.8} \\
    $\Delta$ & $0.0$ & $0.0$ & $+0.1$ & $0.0$ & $+0.2$ & $+3.6$ & $+4.7$ & $0.0$ & $0.0$ & $0.0$ & $+4.7$ \\
    \bottomrule
  \end{tabular}
  }
\end{table}

\textbf{Isolation, selector and robustness studies.}
The appendix reports the studies behind these claims, all rerun for this version.
\emph{Isolation} (Appendix~\ref{app:isolation}): with residuals and selection fixed and only the spectral precision changed, the GOU bank leads heat-kernel, polynomial and identity precisions by $10.2$ to $12.6$ points on Elliptic, Elliptic++ and T-Finance; elsewhere no family is consistently better, and on Facebook a heat-kernel bank is ahead.
\emph{Selector} (Appendix~\ref{app:selector_studies}): perturbing each of its 21 cutoffs by $\pm25\%$ leaves the selection unchanged in $99.4\%$ of the evaluations ($97.4\%$ at $\pm50\%$); quality or stability alone fail on Facebook ($17.6$ and $25.5$ AUROC) where their combination gives $91.4$; a supervised combination of the same scores would gain less than one point on four datasets and $8$ to $17$ points on four.
\emph{Robustness} (Appendix~\ref{app:robustness}): $24\%$ contamination costs $1.1$ points on Weibo, heavy-tailed residuals at most $0.7$, a multimodal normal population $9$ to $19$, and binarized features defeat the Gaussian residual model.

\textbf{Encoders and scalability.}\label{subsec:scalability}
Replacing PCA by a trained MLP or GraphSAGE encoder \cite{hamilton2017graphsage} breaks the linear-Gaussian residual model that EB exploits in closed form: the training-free pipeline is ahead on 8 of 11 datasets, and a trained encoder on Weibo, Facebook and, by $0.1$, T-Finance (Appendix~\ref{app:encoder}).
On 16 CPU threads the complete run behind a cell of Table~\ref{tab:main_auroc} takes from $8$ seconds (Facebook) to $33$ minutes (DGraph, of which $30$ are the eigendecomposition); the two trained graph baselines that fit in memory on DGraph need $65$ and $69$ minutes on a GPU (Appendix Table~\ref{tab:runtime}).

\section{Scope, Limitations and Conclusion}
\label{sec:conclusion}

EB-GAD is a training-free empirical-Bayes model for node-level graph anomaly detection: a graph-aware GOU defines a bank of finite-horizon control-energy scores, the residual-space likelihood selects the graph precision without labels, and a fixed selector chooses one of four score families. The equilibrium Mahalanobis score is one limiting case; aggregating or normalizing the finite-horizon family under its shared null adds $3.6$ to $4.7$ AUROC points on three datasets and is at parity elsewhere. Run without labels, EB-GAD has the best or tied-best AUROC on 9 of 11 benchmarks.

\textbf{Scope and limitations.}
(i) \emph{Selection.} The model does not identify the template bandwidth: the likelihood sets it for the finite-horizon families and graph statistics set it for the anchor, a rule adopted after the first had been evaluated and without which Weibo and Reddit are $92.5$ and $48.5$. The selector is label-free and stable under perturbation of its cutoffs, but it is not derived from one principle: its cutoffs are implementation defaults developed on the benchmarks of this paper, and we do not claim that they transfer to other graph families.
(ii) \emph{Gaussian model.} In a controlled test heavy-tailed residuals cost at most $0.7$ points and a two-component normal population $9$ to $19$; binarized features fall outside the Gaussian residual model (AUROC below $61$) and need a categorical likelihood.
(iii) \emph{Contamination.} With up to $24\%$ contamination the fitted prior moves little and detection of the original anomalies loses $1.1$ points on Weibo; large contiguous anomalous regions, which can become their own context, were not studied.
(iv) \emph{Graph operators.} BlogCatalog and ACM trail TAM, and high-pass or band-pass templates do not close the gap; heterophily is handled only through feature-affinity reweighting.
(v) \emph{Truncated spectra.} With many small components the truncated eigenbasis is not unique; on YelpChi the AUROC ranges over two points across node orders.
(vi) \emph{Fairness.} Scores on user graphs can encode degree and activity biases and should be audited across user segments before deployment.

\label{endofmain}
\par\typeout{ENDOFMAIN page=\thepage\space used=\the\pagetotal\space of \the\pagegoal}

\begin{ack}
We thank Block for supporting this work; part of it was carried out while Fred Xu was an intern at Block. This work was partially supported by the National Science Foundation under grant 2531008.
\end{ack}

\bibliographystyle{plain}
\bibliography{paper}

\newpage
\appendix
\section{Derivation of Technical Results}
\label{sec:proofs}

We provide proofs of the propositions and theorem stated in Section~\ref{subsec:graph_prior}--\ref{subsec:prior_selection}.

\subsection{Modeling Details}
\label{app:modeling_details}

\paragraph{Notation: vec convention.}
We use the row-major vectorization throughout, so $\bar Q_\rho=Q_\rho\otimes I_D$ acts as $\bar Q_\rho\,\operatorname{vec}(X)=\operatorname{vec}(Q_\rho X)$ for any $X\in\mathbb{R}^{n\times D}$. Equivalently, $Q_\rho$ couples nodes while $I_D$ leaves the feature direction untouched.

\paragraph{Residual template and modeled subspace.}
The template operator $S_{\gamma,\nu}$ in \eqref{eq:template} is low-pass but is not normalized to have unit gain on the Laplacian nullspace; consequently $\gamma$ controls both bandwidth and low-frequency scale.
This scale is part of the profiled template family, and the same residual operator is used consistently in likelihood evaluation and scoring.
For fixed $\gamma$, each feature column of the residual field $\Delta=T_{\gamma,\nu}X$ has the modeled full-spectrum law $\mathcal{N}(0,Q_\rho^{-1})$; equivalently, the vectorized residual has covariance $\bar Q_\rho^{-1}=Q_\rho^{-1}\otimes I_D$.
The Laplacian supplies graph-aware anisotropy and the $(1-\rho)I_n$ term supplies an isotropic background.
When $k<n$, the likelihood and the energy score $J^\star$ are computed on the projected residual $P_k\Delta$, while the ratio score $R$ normalizes this projected energy by the full nodewise residual norm.
Feature dimensions are z-score standardized before template construction and scoring.
Disconnected graphs may still have additional Laplacian null modes; we do not remove them explicitly, and instead require only $q_j>0$ on every retained mode so graph-uninformative directions are handled by the isotropic part of $Q_\rho$.

\begin{remark}[Optimization landscape]\label{rem:landscape}
Writing $a_j:=(\kappa^2+\lambda_j)^\nu-1$ and $q_j=\rho a_j+1$, the residual-space log-likelihood has gradient $\partial\log p/\partial\rho=\sum_j a_j(-S_j/2+D/(2q_j))$ and Hessian $\partial^2\log p/\partial\rho^2=-\sum_j D a_j^2/(2q_j^2)\le 0$, so it is concave in $\rho$ with at most one interior root; Algorithm~\ref{alg:prior_opt} locates it by clamped Newton steps and otherwise compares the boundary values $\rho\in\{0,1\}$. The profile likelihood is non-concave in $\kappa$ and the template-induced objective is non-convex in $\gamma$, so both are searched on bounded grids.
\end{remark}

\begin{remark}[Outlier sensitivity]\label{rem:outlier}
The squared-residual data-fit $\tfrac{1}{2}\sum_j q_j S_j$ is in principle sensitive to extreme anomalies, but three properties mitigate this here: anomaly fractions are 1--10\% so the bulk of the likelihood is set by normal nodes; rough-mode energy that anomalies inflate is dropped by the truncated eigenspace on large graphs; and the $\tfrac{D}{2}\sum_j\log q_j$ normalizer prevents a small high-residual group from collapsing $\rho$ to zero. Appendix~\ref{app:robustness} measures the effect of added contamination; robust pseudolikelihoods are a natural extension.
\end{remark}

\paragraph{Connection to stochastic optimal control.}
\label{rem:soc_connection}
We use $\lambda_c$ as an inverse endpoint-tolerance variance: the latent GOU endpoint has covariance $\Sigma_\Gamma$, the observation tolerance has covariance $\lambda_c^{-1}I$, and marginalizing the latent endpoint gives the quadratic form induced by $(\lambda_c^{-1}I+\Sigma_\Gamma)^{-1}$.
Under hard endpoint matching, $\lambda_c\to\infty$, this reduces to $\frac{1}{2}(x_T-\mu_\Gamma)^\top\Sigma_\Gamma^{-1}(x_T-\mu_\Gamma)$.
Equation~\eqref{eq:finite_transport_score} is the per-node \emph{contribution} to this total control energy, not an independent nodewise control energy because $Q_\rho$ couples nodes. The energy is that of steering one process to one observed endpoint; no coupling between distributions is optimized.
For general nonlinear bridge models, computing the analogous endpoint control energy usually requires a learned score, controller, or reverse transition model \cite{zhou2024ddbm,yue2023goub,zhu2025unidb}; here the Gaussian transition kernel gives it directly, making such training unnecessary.

\reftransition*

\begin{proof}
Fix $0\le s<t\le T$. Define the centered process $Y_t:=X_t-\mathbf m$. Then \eqref{eq:ref_graph_ou} becomes
\begin{equation}
dY_t = -\alpha(t)\bar Q_\rho\,Y_t\,dt + \sqrt{2\alpha(t)}\,dW_t,
\qquad Y_s = x_s-\mathbf m.
\label{eq:centered_SDE}
\end{equation}
This is a linear SDE with time-dependent scalar coefficient $\alpha(t)$ multiplying the constant symmetric positive-definite matrix $\bar Q_\rho$ on the modeled subspace.

\paragraph{Step 1: Variation of constants.}
Let $\Phi_{s:t}:=\exp(-\Gamma_{s:t}\bar Q_\rho)$ with $\Gamma_{s:t}:=\int_s^t\alpha(r)\,dr$ be the fundamental solution, satisfying $\partial_t \Phi_{s:t} = -\alpha(t)\bar Q_\rho\,\Phi_{s:t}$.
Applying It\^o's product rule to $\Phi_{s:t}^{-1}Y_t$:
\[
d(\Phi_{s:t}^{-1}Y_t) = \alpha(t)\bar Q_\rho\,\Phi_{s:t}^{-1}Y_t\,dt + \Phi_{s:t}^{-1}\bigl(-\alpha(t)\bar Q_\rho\,Y_t\,dt + \sqrt{2\alpha(t)}\,dW_t\bigr) = \Phi_{s:t}^{-1}\sqrt{2\alpha(t)}\,dW_t.
\]
Integrating and left-multiplying by $\Phi_{s:t}$:
\[
Y_t = \Phi_{s:t}Y_s + \int_s^t \Phi_{\tau:t}\sqrt{2\alpha(\tau)}\,dW_\tau.
\]

\paragraph{Step 2: Conditional mean and covariance.}
Taking expectation: $\mathbb{E}[Y_t\mid Y_s]=\Phi_{s:t}Y_s$, giving $\mu_{t|s}(x_s) = \mathbf m + \Phi_{s:t}(x_s-\mathbf m)$.
By the It\^o isometry (using $\bar Q_\rho$ symmetric, so $\Phi_{\tau:t}^\top = \Phi_{\tau:t}$):
\[
\Sigma_{t|s} = \int_s^t 2\alpha(\tau)\,\Phi_{\tau:t}^2\,d\tau
= \int_s^t 2\alpha(\tau)\,\exp(-2\Gamma_{\tau:t}\bar Q_\rho)\,d\tau.
\]

\paragraph{Step 3: Closed form via substitution.}
Substituting $u=\Gamma_{\tau:t}=\int_\tau^t\alpha(r)\,dr$ gives $du = -\alpha(\tau)\,d\tau$. The limits are $u=\Gamma_{s:t}$ when $\tau=s$ and $u=0$ when $\tau=t$, so:
\[
\Sigma_{t|s}
= \int_0^{\Gamma_{s:t}} 2\,e^{-2u\bar Q_\rho}\,du
= \bar Q_\rho^{-1}\bigl(I_{nD}-e^{-2\Gamma_{s:t}\bar Q_\rho}\bigr),
\]
where the last equality uses $\int_0^a 2e^{-2uA}\,du = A^{-1}(I-e^{-2aA})$ for any SPD matrix $A$.
\end{proof}

\paragraph{Stationary distribution.}
When $\bar Q_\rho$ is SPD (or after restricting to the modeled subspace), the GOU process \eqref{eq:ref_graph_ou} has stationary distribution $\mathcal{N}(\mathbf m, \bar Q_\rho^{-1})$.
This follows from Proposition~\ref{prop:ref_transition_score}: if $X_0 \sim \mathcal{N}(\mathbf m, \bar Q_\rho^{-1})$, then in each retained mode
\[
\mathrm{Var}(X_T)_j = e^{-2\Gamma q_j}/q_j + (1-e^{-2\Gamma q_j})/q_j = 1/q_j,
\]
recovering the initial covariance.
The mean is also preserved because $\mathbb{E}[X_T]=\mathbf m+e^{-\Gamma\bar Q_\rho}(\mathbb{E}[X_0]-\mathbf m)=\mathbf m$ whenever $\mathbb{E}[X_0]=\mathbf m$.
Because every retained mode has $q_j>0$, the mean term $e^{-\Gamma q_j}(x_0-\mathbf m)_j$ and the initial-covariance contribution $e^{-2\Gamma q_j}\operatorname{Var}(X_0)_j$ vanish as $\Gamma\to\infty$ for any initial law with finite second moment, so this fixed point is also the limiting stationary distribution.
The equilibrium covariance $\bar Q_\rho^{-1}$ is the $\Gamma\to\infty$ limit of the finite-horizon covariance $\Sigma_\Gamma$ used in the generative model \eqref{eq:finite_time_predictive}.

\marglikelihood*

\begin{proof}
Fix $\gamma$ and let $\Delta = T_{\gamma,\nu}X$ with $T_{\gamma,\nu}=I_n-S_{\gamma,\nu}$ as in Section~\ref{subsec:graph_prior}.
The Kronecker structure $\bar Q_\rho = Q_\rho \otimes I_D$ means the $D$ feature dimensions are conditionally independent given the graph.
Let $V_k=[v_1,\dots,v_k]\in\mathbb{R}^{n\times k}$ and $P_k = V_kV_k^\top$ be the projection onto the $k$ computed eigenmodes.
For each feature dimension $\ell$, define the retained spectral coordinates
\[
z_\ell := V_k^\top \Delta_{:,\ell}\in\mathbb{R}^k,\qquad \Delta_{:,\ell}\in\mathbb{R}^n \text{ the $\ell$-th column of } \Delta\in\mathbb{R}^{n\times D}.
\]
Under the modeled residual law, these coordinates are Gaussian with diagonal covariance
\[
z_\ell \sim \mathcal{N}\!\bigl(0,\operatorname{diag}(1/q_1,\dots,1/q_k)\bigr),
\]
since $Q_\rho v_j = q_j v_j$ on the retained subspace.
Therefore the log-density factorizes across feature dimensions:
\[
\log p(P_k\Delta\mid\varphi) = \sum_{\ell=1}^D \log p(z_\ell\mid\varphi)
= -\tfrac{1}{2}\sum_{j=1}^k q_j S_j + \tfrac{D}{2}\sum_{j=1}^k\log q_j - \tfrac{kD}{2}\log 2\pi,
\]
where
\[
S_j = \sum_{\ell=1}^D (v_j^\top\Delta_{:,\ell})^2 = \|\Delta^\top v_j\|^2.
\]
When $k=n$, $P_n = I_n$ and the modeled residual object is $\Delta$ itself, yielding \eqref{eq:marginal_likelihood} up to the explicit additive constant $-\tfrac{nD}{2}\log 2\pi$.
When $k < n$, the projected likelihood is the marginal likelihood of the data restricted to the computed spectral subspace; the additive constant becomes $-\tfrac{kD}{2}\log 2\pi$, which is independent of $(\rho,\kappa)$ and therefore irrelevant for model selection.
This is consistent with the energy part of \eqref{eq:anomaly_score}, which uses the same $k$-mode projection; ratio scores then normalize that projected energy by the full nodewise residual norm as described in Section~\ref{subsec:anomaly_score}.
\end{proof}

\equilibriumopt*

\begin{proof}
\noindent\textbf{Step 1: Precision bound.}
Since $0 < 1-e^{-2\Gamma q_j} < 1$ for all finite $\Gamma > 0$ and $q_j > 0$:
\[
\tilde q_j(\Gamma) = \frac{q_j}{1-e^{-2\Gamma q_j}} > q_j.
\]
Equality holds only in the limit $\Gamma \to \infty$.

\noindent\textbf{Step 2: Convergence rate.}
Let $r_j(\Gamma) := \tilde q_j(\Gamma)/q_j = 1/(1-e^{-2\Gamma q_j})$.
For $\Gamma q_j \ge 1$, $e^{-2\Gamma q_j} \le e^{-2}$, so $r_j \le 1/(1-e^{-2}) \approx 1.16$.
More generally, $r_j - 1 = e^{-2\Gamma q_j}/(1-e^{-2\Gamma q_j}) \le 2e^{-2\Gamma q_j}$ for $\Gamma q_j \ge \frac{1}{2}\log 2$, establishing the exponential bound \eqref{eq:precision_convergence}.

\noindent\textbf{Step 3: Objective convergence for a fixed residual object.}
Let $Y$ be any fixed residual object with retained spectral energies $S_j(Y)$.
Write $\tilde q_j = q_j \cdot r_j$ and expand the log-likelihood difference:
\begin{align*}
\ell_\Gamma(Y;\varphi) - \ell_\infty(Y;\varphi)
&= \sum_j \Bigl[-\frac{q_j S_j(Y)}{2}(r_j - 1) + \frac{D}{2}\log r_j\Bigr].
\end{align*}
Since $r_j \to 1$ exponentially as $\Gamma \to \infty$ (Step~2), both $(r_j-1)$ and $\log r_j$ vanish, giving pointwise convergence.
Using $\log r_j\le r_j-1$ and the bound from Step~2, if $\Gamma q_{\min}\ge \frac{1}{2}\log 2$ then
\[
\bigl|\ell_\Gamma(Y;\varphi)-\ell_\infty(Y;\varphi)\bigr|
\le \frac{1}{2}\sum_{j=1}^k (q_jS_j(Y)+D)(r_j-1)
\le C(Y,\varphi)e^{-2\Gamma q_{\min}},
\]
with $C(Y,\varphi)=\sum_{j=1}^k(q_jS_j(Y)+D)$, matching the bound in Proposition~\ref{prop:equilibrium_opt}.
Moreover, the convergence is uniform over modes: for $\Gamma \ge c/q_{\min}$, $\max_j |r_j - 1| \le 2e^{-2c}$.

\noindent\textbf{Step 4: Endpoint-residual convergence.}
The proposition above fixes the residual object because this is the quantity optimized by the residual-space likelihood.
For the endpoint score, the residual also depends on $\Gamma$:
\[
\Delta_\Gamma=\Delta+e^{-\Gamma Q_\rho}M,
\qquad
v_j^\top\Delta_\Gamma=v_j^\top\Delta+e^{-\Gamma q_j}v_j^\top M,
\]
where $M$ is the template reshaped as an $n\times D$ matrix; this is matrix form, so $\bar Q_\rho=Q_\rho\otimes I_D$ acts trivially on the feature direction.
On the retained subspace,
\[
\|P_k(\Delta_\Gamma-\Delta)\|_F\le e^{-\Gamma q_{\min}}\|P_kM\|_F .
\]
Thus the endpoint residual converges to the equilibrium residual, and the full endpoint score converges to the equilibrium score with an $O(e^{-\Gamma q_{\min}})$ contribution from the mean and an $O(e^{-2\Gamma q_{\min}})$ contribution from the precision.

\noindent\textbf{Step 5: Practical indistinguishability.}
At the selected $(\rho^\star, \kappa^\star)$ and $\nu=1$, the minimum precision is bounded below by
\[
q_{\min}=(1-\rho^\star)+\rho^\star(\kappa^{\star\,2}+\lambda_{\min}^{\mathrm{ret}})
\ge (1-\rho^\star) + \rho^\star \kappa^{\star\,2},
\]
where $\lambda_{\min}^{\mathrm{ret}}$ is the smallest retained eigenvalue; equality holds when a zero Laplacian mode is retained.
This quantity is strictly positive whenever $\rho^\star<1$ or $\kappa^\star>0$.
Hence for any $c \ge \tfrac{1}{2}\log 2$ and any horizon satisfying $\Gamma \ge c/q_{\min}$,
\[
\max_j (r_j-1) \le 2e^{-2c}.
\]
For example, $c=3$ gives a uniform relative precision error below $0.5\%$ on every retained mode. Thus the finite-horizon and equilibrium objectives become practically indistinguishable once $\Gamma$ is a few multiples of $1/q_{\min}$, exactly as stated in Proposition~\ref{prop:equilibrium_opt}.
\end{proof}

\section{Experimental Details}

\subsection{Dataset Specifications}

Here we provide the details of each dataset. The detailed dataset statistics can be found in Table \ref{tab:dataset_stats}.

\begin{itemize}
    \item \textbf{Weibo} \cite{zhao_weibo}: Social network from the Tencent-Weibo platform. Nodes are users and edges represent follower/followee relationships. Labels mark suspicious or abusive accounts.
    \item \textbf{Reddit} \cite{wang_reddit}: Reddit interaction graph used in GAD benchmarks, with text-derived node attributes and labels associated with banned or anomalous users.
    \item \textbf{Amazon} \cite{dou2020caregnn,qiao2023truncated}: Review-derived user graph from Amazon. Nodes are reviewers; relations connect reviewers through shared review behavior such as reviewing the same product. Labels indicate suspicious or fraudulent review behavior.
    \item \textbf{Yelp-Chi} \cite{dou2020caregnn,qiao2023truncated}: Review graph from YelpChi. Nodes are reviews; relations connect reviews through shared user/product/rating/time information, and our selected operator uses the review--user--review relation. The task is spam review detection.
    \item \textbf{BlogCatalog} \cite{tang_blogcatalog,qiao2023truncated}: Blog author social network with nodes as bloggers and edges as friendships. Node features are derived from blog content; the GAD benchmark uses injected contextual/structural anomalies.
    \item \textbf{Facebook} \cite{conad,qiao2023truncated}: Facebook social network benchmark with users as nodes, friendship edges, and user attributes. Labels mark anomalous users in the benchmark.
    \item \textbf{ACM} \cite{tang_acm,qiao2023truncated}: Academic network derived from ArnetMiner/ACM data, with papers as nodes and content/metadata-derived attributes. The GAD benchmark uses injected contextual/structural anomalies.
    \item \textbf{T-Finance} \cite{tang2022bwgnn}: Financial transaction graph for fraud detection. Nodes are accounts/entities, edges represent transaction relationships, and anomalies correspond to fraudulent actors.
    \item \textbf{Elliptic} \cite{weber2019amlbitcoin_elliptic}: Bitcoin transaction graph with $\sim$200K nodes and $\sim$234K edges. Nodes are transactions; edges are payment flows. Features include local and aggregated transaction attributes. Labels indicate licit vs.\ illicit transactions.
    \item \textbf{Elliptic++} \cite{elmougy2023ellipticplusplus}: Extension of Elliptic that augments Bitcoin transaction data with wallet/address graph information for AML research; we use the transaction-level graph/features in the benchmark.
    \item \textbf{DGraph} \cite{huang2022dgraph}: Large-scale financial graph with $\sim$3.7M nodes and $\sim$4.3M edges from a real-world lending platform. Nodes are users, directed edges encode emergency-contact relationships, and labels identify users with fraudulent borrowing behavior.
\end{itemize}

\begin{table*}[t]
  \centering
  \caption{%
    Raw benchmark statistics for graph datasets used in Table~\ref{tab:main_auroc}.
    \#Feat = feature dimension; Anom.\% = anomaly rate (\%).
  }
  \label{tab:dataset_stats}

  \footnotesize
  \setlength{\tabcolsep}{4pt}
  \renewcommand{\arraystretch}{1.05}

  \begin{tabular}{l r r r r}
    \toprule
    Dataset & \#Nodes & \#Edges & \#Feat & Anom.\% \\
    \midrule
    \multicolumn{5}{l}{\textit{Standard benchmarks (Table~\ref{tab:main_auroc})}} \\
    Weibo       & 8,405   & 407,963   & 400   & 4.1 \\
    Reddit      & 10,984  & 168,016   & 64    & 3.3 \\
    Amazon      & 11,944  & 4,398,392 & 25    & 6.9 \\
    YelpChi     & 45,954  & 3,846,979 & 32    & 4.9 \\
    BlogCatalog & 5,196   & 171,743   & 8,189 & 5.8 \\
    Facebook    & 1,081   & 55,104    & 576   & 2.5 \\
    ACM         & 16,484  & 71,980    & 8,337 & 3.6 \\
    \midrule
    \multicolumn{5}{l}{\textit{Fraud benchmarks (Table~\ref{tab:main_auroc})}} \\
    T-Finance   & 39,357  & 1,222,543 & 10    & 4.6 \\
    Elliptic    & 203,769   & 234,355   & 166 & $\sim$2.2 \\
    Elliptic++  & 203,769   & 234,355   & 182 & $\sim$2.2 \\
    DGraph      & 3,700,550 & 4,300,999 & 17  & 1.3 \\
    \bottomrule
  \end{tabular}
  \vspace{2pt}
  \parbox{0.96\textwidth}{\scriptsize\textit{Note.}
  This table reports the edge count supplied by the benchmark source.
  The selector statistics in Table~\ref{tab:selector_graph_stats} are computed on the graph operator actually used by EB-GAD after deterministic symmetrization, relation/operator selection, affinity reweighting, or transaction aggregation.
  Therefore $\#\mathrm{Edges}/(2n)$ here need not equal the operator-level $d_e$ used by the fixed selector rule; the mismatched cases are due to the selected operators spelled out in Table~\ref{tab:selector_graph_stats}.
  Anomaly rates are computed from the label files as loaded by our pipeline; for Weibo the released PyGOD label file contains 347 anomalous accounts of 8{,}405 (4.1\%), not the 868 (10.3\%) listed in the source documentation, and all methods in Table~\ref{tab:main_auroc} are evaluated against the same released labels.
  Evaluation sets: Amazon and YelpChi are scored on the graphs of the TAM benchmark (10{,}224 nodes, 6.8\% anomalous; 23{,}831 nodes, 5.1\%); Elliptic and Elliptic++ on the 46{,}564 labeled transactions (9.8\% illicit); DGraph on its 1{,}225{,}601 labeled nodes (1.3\%).}
\end{table*}

\subsection{Baseline Models}

Here we provide more details on the baseline models covered in Table~\ref{tab:main_auroc}.

\begin{itemize}
    \item \textbf{LOF} \cite{lof_2000}: Local Outlier Factor, a density-based method that assigns each point a degree of outlierness based on the local density of its neighborhood. Implemented via PYOD \cite{chen2025pyod}.
    \item \textbf{DIF} \cite{DIF}: Deep Isolation Forest, a deep learning extension of Isolation Forest that learns representations for anomaly detection. Implemented via PYOD.
    \item \textbf{ANOMALOUS} \cite{anomalous}: A graph-based method that models anomalies via residual analysis of the adjacency matrix. From the PYGOD library \cite{pygod}.
    \item \textbf{DOMINANT} \cite{ding2019deep}: Deep Anomaly Detection on Attributed Networks using a graph autoencoder with structure and attribute reconstruction. From PYGOD.
    \item \textbf{AnomalyDAE} \cite{fan2020anomalydae}: Dual autoencoder for attributed-network anomaly detection, jointly modeling structure and attribute reconstruction. From PYGOD.
    \item \textbf{CONAD} \cite{conad}: Contrastive self-supervised learning for graph anomaly detection. From PYGOD.
    \item \textbf{CoLA} \cite{Liu_2022_cola}: Contrastive Learning for Anomaly detection on graphs; leverages self-supervised contrastive objectives. From PYGOD.
    \item \textbf{TAM} \cite{qiao2023truncated}: Truncated Affinity Maximization, a one-class homophily method that learns local-affinity-based representations for graph anomaly detection.
    \item \textbf{DiffGAD} \cite{li2024diffgad}: Diffusion-based graph anomaly detector that injects discriminative content into latent representations and uses diffusion/reconstruction signals for anomaly scoring.
\end{itemize}

\subsection{Native-Score Evaluation and Orientation Audit}
\label{app:native_score_orientation}

Table~\ref{tab:main_auroc} evaluates every unsupervised baseline in its native anomaly-score direction, as fixed by the official implementation or paper: PyOD scores use \texttt{decision\_scores\_}, PyGOD scores use \texttt{decision\_score\_}, TAM uses its published local-affinity anomaly score, and DiffGAD uses its reconstruction/diffusion anomaly score. This convention is important for a label-free comparison. Reversing a score after observing test labels is an oracle operation, so it is not used in the main table even when a native score is anti-correlated with the benchmark labels.

To check whether the sub-50 entries in Table~\ref{tab:main_auroc} reflect missing data or merely native-score anti-correlation, Table~\ref{tab:score_orientation_audit} reruns the threatening PyOD/DiffGAD fraud cases and reports both the native AUROC and the diagnostic oracle value obtained by negating the scores. The flipped column is not a competing baseline; it only shows how much a label-based score reversal would change the ranking. In every audited case, normal nodes receive larger native scores on average than anomalous nodes, so the sub-50 AUROCs are genuine anti-correlations of the native baseline scores.

\begin{table}[t]
\centering
\scriptsize
\caption{Score-orientation audit for sub-50 baselines whose oracle sign reversal could affect the fraud-dataset comparison. Native AUROC uses the method's fixed label-free anomaly-score direction and is the only value used in Table~\ref{tab:main_auroc}. Oracle-flipped AUROC is computed from the negated score after labels are known and is diagnostic only.}
\label{tab:score_orientation_audit}
\setlength{\tabcolsep}{5pt}
\begin{tabular}{l l c c c}
\toprule
Dataset & Baseline & Native AUROC & Oracle-flipped AUROC & Mean score normal/anomaly \\
\midrule
Elliptic   & DIF     & $22.1{\pm}0.9$ & $77.9{\pm}0.9$ & $0.335/0.303$ \\
Elliptic++ & DIF     & $22.6{\pm}0.5$ & $77.4{\pm}0.5$ & $0.334/0.303$ \\
Elliptic   & DiffGAD & $27.1{\pm}0.4$ & $72.9{\pm}0.4$ & $20.90/13.31$ \\
Elliptic++ & DiffGAD & $27.9{\pm}7.8$ & $72.1{\pm}7.8$ & $967.09/417.67$ \\
\bottomrule
\end{tabular}
\end{table}

\subsection{Selector and Hyperparameter Specification}
\label{app:selector_details}

\paragraph{Scope.}
Main \S\ref{subsec:prior_selection} introduces the selector at the level of principles: four score families tied to graph statistics, a quality plus stability score $A(r)$, and the role of EB in fitting $(\rho,\kappa)$.
This section states the rule exactly as it is run; the released code contains the same implementation and the script that regenerates Table~\ref{tab:main_auroc}.
No rule takes a label as input.
The cutoffs are implementation defaults, identical on all 11 datasets, not derived quantities: they were fixed during the development of the method on the benchmark families studied here, so they are part of its engineering. Table~\ref{tab:provenance} classifies every constant, and Appendix~\ref{app:selector_studies} reports what happens when the cutoffs are perturbed.

\paragraph{Preprocessing, spectrum and grids.}
Features are z-scored. For the equilibrium and profile families, features with $D_x>100$ are projected on their first 64 principal components; PCA is a deterministic unsupervised linear map, so the closed-form results apply in the projected space.
The full Laplacian spectrum is used when $n<20{,}000$; larger graphs use the $k$ smallest eigenpairs with $k=500$ ($n<10^5$), $300$ ($n<5{\cdot}10^5$), $200$ ($n<10^6$) or $128$. Dense eigendecompositions run in double precision, and LOBPCG bases are orthonormalized and Ritz-refined in double precision. All computed modes, including Laplacian null modes, are kept in the modeled subspace (Appendix~\ref{app:modeling_details}).
The bandwidth grid holds the three values of $\{0.1,0.2,0.5,0.7,1,2,5\}$ that are closest on a log scale to $\gamma_0=\operatorname{clip}((0.5+h)/\sqrt{\max(d_e,10)/10},0.05,10)$ among those not larger than $2\gamma_0$ (DGraph, whose spectrum is computed once and cached, uses $\{0.5,1,2\}$). The run and its finite-horizon families are built at $\gamma^\star$, the grid value with the largest residual-space profile likelihood. The equilibrium anchor is computed at $\gamma_a$, the grid value closest to $\gamma_0$ on the log scale, with the EB fit of $(\rho,\kappa)$ at $\gamma_a$ (Table~\ref{tab:gamma_profile}; Appendix~\ref{app:regeneration} gives the reason and the alternatives).
The other grids are $\kappa\in\{0,0.001,0.003,0.01,0.03,0.1,0.3,0.5,1,2,3,5,7,10,15,20\}$, $\rho\in[0,1]$ by a bounded one-dimensional search (clamped Newton or Brent's method) with an endpoint check (Remark~\ref{rem:landscape}), and horizons $\Gamma\in\{0.02,0.05,0.1,0.2,0.5,1,2,5,10,50,\infty\}$; the control-energy and control-ratio pools use $\Gamma\in\{0.25,0.5,1,2,3,4,8,12\}$ and $\lambda_c\in\{0.3,0.5,1,2,5,10,50,100,500\}$.

\paragraph{Affinity template.}
The ``affinity'' template in Table~\ref{tab:ebgad_hyperparams} keeps the edge set of the graph operator fixed but multiplies features by deterministic node reliability weights before smoothing,
\[
m_{\rm aff}=S_{\gamma,\nu}WX,\qquad W=\operatorname{diag}(w_1,\ldots,w_n),
\]
where $w_i\in[0,1]$ is the min--max normalized local affinity, $a_i=\deg(i)^{-1}\sum_{j\in N(i)}\cos(z_i,z_j)$. When PCA is enabled, it is applied right after z-scoring and before the affinity weights. The Laplacian is that of the input graph in every run of Table~\ref{tab:main_auroc}.

\paragraph{Selector statistics.}
The graph statistics $(h,d_e,D_x)$ are computed from the graph operator and the node features as provided (no class labels enter $h$ or any selector diagnostic):
\[
h=\frac{1}{|E_s|}\sum_{(i,j)\in E_s}\frac{x_i^\top x_j}{\|x_i\|_2\|x_j\|_2+\epsilon}, \qquad d_e=|E_{\rm dir}|/(2n),
\]
where $E_s$ is a fixed sample of up to $10^5$ edges. The operator-level $d_e$ in Table~\ref{tab:selector_graph_stats} is not recomputed from the raw edge column in Table~\ref{tab:dataset_stats}: Amazon uses the filtered U--P--U user--product--user relation ($n{=}10{,}224$, $|E_{\rm dir}|{=}351{,}216$); YelpChi uses the filtered R--U--R review--user--review relation ($n{=}23{,}831$, $|E_{\rm dir}|{=}98{,}630$); BlogCatalog reports undirected raw edges but the operator is bidirectional ($|E_{\rm dir}|{=}345{,}566$); ACM uses the symmetrized input graph ($|E_{\rm dir}|{=}164{,}350$); T-Finance uses the BWGNN-released homogenized transaction graph \cite{tang2022bwgnn} ($|E_{\rm dir}|{=}42{,}445{,}086$); and Weibo, Reddit, Facebook, Elliptic, Elliptic++, DGraph use raw conventions. For DGraph, $h$ and $d_e$ are computed on the full graph and the spectrum on the subgraph of its labeled nodes ($k=128$, $n=1{,}225{,}601$). $h$ saturates near one for correlated non-negative features (e.g., Reddit), so it enters only through coarse cutoffs.

\paragraph{Ordered rules.}
The selector chooses among four score families: equilibrium anchors ($J^\star$ or $R$), control-energy scores ($C_{\Gamma,\lambda_c}$), control-ratio scores ($CR_{\Gamma,\lambda_c}$), and profile aggregation over the finite-horizon bank.
The rules are evaluated in order and the first that applies is used:
\begin{enumerate}[leftmargin=*, itemsep=1pt, topsep=2pt]
\item If $d_e<1$, $D_x\le256$, and $k/n\le5\cdot10^{-3}$, use the equilibrium ratio anchor $R$.
\item Else if $h<0.05$, $D_x>1000$, and $d_e\ge20$, use the control-energy family on the affinity template.
\item Else if $0.05\le h<0.25$, $D_x>1000$, and $d_e<10$, use the control-ratio family on the affinity template.
\item Else if $d_e\ge100$, use profile aggregation (trajectory branch); in the intermediate dense case $10\le d_e<100$, this family is used only if the dense profile diagnostic passes and the anchor guard is inactive.
\item Else if $h\ge0.50$ and $D_x\le128$: if the sparse-tail diagnostic passes, use profile aggregation (sparse-tail branch); otherwise use the control-energy family when $d_e\ge10$ and the equilibrium anchor when $d_e<10$.
\item Else if $0.25\le h<0.50$, $128<D_x\le1000$, $d_e\ge20$, and the scale-entropy diagnostic passes, use profile aggregation (scale-entropy branch).
\item Otherwise use the equilibrium anchor.
\end{enumerate}
The diagnostics are unlabeled. The \emph{dense profile} diagnostic requires $\mathrm{NullKS}(J^\star)\ge0.50$ and a Berk--Jones statistic \cite{berk1979goodness} of the profile $p$-values in $[20,120]$. The \emph{anchor guard} is active when NullKS selects $J^\star$ with $\mathrm{NullKS}(J^\star)\ge0.60$. The \emph{sparse-tail} diagnostic requires Berk--Jones in $[20,100]$, equilibrium NullKS below $0.20$, and profile NullKS below $0.35$. The \emph{scale-entropy} diagnostic requires a two-groups mixture mass $\pi\in[0.50,0.95]$ and that the tail-stability and graph-tail criteria both pick the component-entropy score.

\paragraph{Within-family score.}
\emph{Equilibrium anchor}: $J^\star$ or $R$ at the anchor bandwidth $\gamma_a$, whichever has the larger NullKS there (ties go to $J^\star$; rule 1 fixes $R$). The diagnostics that the ordered rules read are those of the EB-fitted run at $\gamma^\star$.
\emph{Profile aggregation}: the scale-entropy branch uses the entropy of the two-groups component responsibilities of the path $p$-values; the sparse-tail branch uses the posterior of a two-groups mixture over the $p$-value bank; the trajectory branch uses the path band with the largest NullKS.
\emph{Control energy and control ratio}: the admissible pool is the set of candidates of the family on the rule's template (rule 2: affinity template, $\gamma\in[0.12,0.25]$, PCA dimension $\lceil 256\cdot5/d_e\rceil$ rounded up to the grid $\{16,24,32,48,64\}$, $\Gamma\ge1$, $\lambda_c\le1$, NullKS in $[0.55,0.85]$, Gini coefficient of the score in $[0.70,0.90]$; rule 3: affinity template, $\gamma\in[0.075,0.20]$, PCA 256, $\Gamma\le0.5$, $\lambda_c\le0.5$, Gini at most $0.05$; rule 5: low-pass template, $\gamma=5$, no PCA, NullKS in $[0.20,0.50]$, Gini in $[0.25,0.70]$). In rule 5 the two pool members with the largest NullKS are rank-averaged. In rules 2 and 3 the members are ranked by $A(r)$ (stability plus a $0.1$-weighted quality term) and the top three are rank-averaged; if the pool is empty, all control-energy candidates are used.

\paragraph{Null model and diagnostic aggregation.}
Under the full-spectrum Gaussian model with fixed precision $Q_\rho$, $[Q_\rho^{1/2}\Delta]_{i,:}\sim\mathcal{N}(0,I_D)$ and so $2J_i^\star\sim\chi^2_D$ exactly. A finite-horizon member is a quadratic form with weights $c_j(\Gamma,\lambda_c)$ in place of $q_j$; under the same fitted law its null is a weighted chi-squared with known weights $c_j/q_j$. After EB fitting or spectral truncation these laws are used as plug-in nulls. For selection we summarize each empirical score distribution by a moment-matched scaled chi-squared $a_S\cdot\chi^2_{\hat k_S}$ and compute
\begin{equation}
\mathrm{NullKS}(S) := \mathrm{KS}\!\bigl(S/\hat a_S,\; \chi^2_{\hat k_S}\bigr),\qquad
\hat k_S=2\hat\mu_S^2/\hat\sigma_S^2,\quad \hat a_S=\hat\sigma_S^2/(2\hat\mu_S).
\label{eq:ks_score_select}
\end{equation}
Larger NullKS indicates stronger unlabeled deviation from a chi-squared bulk. Each scalar diagnostic is converted to a percentile rank within the admissible pool before being combined with stability terms, making KS distances and Spearman-rank stability commensurate; percentile ranks average ties. For profile aggregation, scores produce nodewise $p$-values $p_{ir}$ that are combined by ACAT \cite{liu2020cauchy}, minimum-$p$, mean $-\log p_{ir}$, or a two-groups mixture over the $p$-value bank. When $\kappa$ is fitted at the upper end of its grid the plug-in $p$-values can saturate to a constant; constant members carry no ranking information and, with averaged ties, cannot contribute any.

\begin{table*}[!htb]
  \centering
  \caption{%
    Label-free EB-GAD configuration behind each cell of Table~\ref{tab:main_auroc}.
    \emph{Template}: low-pass Mat\'ern or feature-affinity-weighted (\S\ref{app:selector_details}).
    $k$: number of retained low-frequency Laplacian eigenpairs (``all'' = full spectrum).
    $\rho,\kappa,\gamma$: EB-fitted graph trust and Mat\'ern inverse length-scale, and the template bandwidth of the selected score (the anchor bandwidth $\gamma_a$ for an anchor, the likelihood-selected bandwidth for a profile, the pool rule for control energy and ratio).
    \emph{PCA}: retained PCA dimensions. \emph{Rule}: the ordered rule of \S\ref{app:selector_details} that fires.
  }
  \label{tab:ebgad_hyperparams}
  \scriptsize
  \setlength{\tabcolsep}{3pt}
  \renewcommand{\arraystretch}{1.05}
  \resizebox{\textwidth}{!}{%
  \begin{tabular}{l c c c c c c c l}
    \toprule
    Dataset & Template & $k$ & $\rho$ & $\kappa$ & $\gamma$ & PCA & Rule & Selected family (score) \\
    \midrule
    Weibo & low-pass & all & 1.000 & 1 & 0.5 & 64 & 7 & equilibrium anchor ($J^\star$) \\
    Reddit & low-pass & all & 1.000 & 1 & 2 & no & 5 & equilibrium anchor ($R$) \\
    Amazon & low-pass & all & 0.458 & 0.5 & 5 & no & 5 & control energy (top-2 $C_{\Gamma,\lambda_c}$ fusion) \\
    YelpChi & low-pass & 500 & 1.000 & 20 & 1 & no & 5 & equilibrium anchor ($J^\star$) \\
    BlogCatalog & affinity & all & 1.000 & 1 & 0.18 & 48 & 2 & control energy (stability-ranked $C_{\Gamma,\lambda_c}$ fusion) \\
    Facebook & low-pass & all & 0.727 & 3 & 0.7 & 64 & 6 & profile aggregation (scale-entropy) \\
    ACM & affinity & all & 1.000 & 0.01 & 0.15 & 256 & 3 & control ratio (stability-ranked $CR_{\Gamma,\lambda_c}$ fusion) \\
    Elliptic & low-pass & 300 & 1.000 & 20 & 1 & 64 & 1 & equilibrium anchor ($R$) \\
    Elliptic++ & low-pass & 300 & 0.933 & 0.1 & 2 & 64 & 1 & equilibrium anchor ($R$) \\
    DGraph & low-pass & 128 & 1.000 & 20 & 1 & no & 1 & equilibrium anchor ($R$) \\
    T-Finance & low-pass & 500 & 1.000 & 0.003 & 0.2 & no & 4 & profile aggregation (trajectory) \\
    \bottomrule
  \end{tabular}
  }
\end{table*}

\begin{table*}[!htb]
\centering
\scriptsize
\caption{Unlabeled quantities read by the selector. $h$: cosine feature homophily over a fixed sample of edges, $d_e$: edges per node of the graph operator, $D_x$: raw feature dimension, $k/n$: truncation ratio. Diagnostics of the EB-fitted run: NullKS of its two equilibrium scores, Berk--Jones statistic (BJ) and NullKS of the profile score, and the two-groups mixture mass $\pi$.}
\label{tab:selector_graph_stats}
\setlength{\tabcolsep}{4pt}
\begin{tabular}{l c c c c c c c c c}
\toprule
Dataset & $h$ & $d_e$ & $D_x$ & $k/n$ & NullKS($J^\star$) & NullKS($R$) & BJ & NullKS(profile) & $\pi$ \\
\midrule
    Weibo & 0.068 & 24.27 & 400 & all & 0.663 & 0.156 & 78.5 & 0.258 & 0.786 \\
    Reddit & 0.993 & 7.65 & 64 & all & 0.945 & 0.138 & 57.6 & 0.206 & 0.878 \\
    Amazon & 0.644 & 17.18 & 25 & all & 0.277 & 0.071 & 236.2 & 0.208 & 0.740 \\
    YelpChi & 0.876 & 2.07 & 32 & $2.10{\times}10^{-2}$ & 0.000 & 0.000 & 1225.2 & 0.334 & 0.024 \\
    BlogCatalog & 0.009 & 33.25 & 8189 & all & 0.673 & 0.089 & 169.9 & 0.246 & 0.077 \\
    Facebook & 0.375 & 25.49 & 576 & all & 0.036 & 0.044 & 8.3 & 0.227 & 0.811 \\
    ACM & 0.143 & 4.99 & 8337 & all & 0.492 & 0.074 & 24.4 & 0.253 & 0.968 \\
    Elliptic & 0.751 & 0.58 & 166 & $1.47{\times}10^{-3}$ & 0.853 & 0.909 & 1810.7 & 0.289 & 0.024 \\
    Elliptic++ & 0.978 & 0.58 & 182 & $1.47{\times}10^{-3}$ & 0.874 & 0.905 & 1987.2 & 0.289 & 0.024 \\
    DGraph & 0.408 & 0.58 & 17 & $1.04{\times}10^{-4}$ & 0.000 & 0.000 & 0.0 & 0.000 & 0.024 \\
    T-Finance & 0.817 & 539.23 & 10 & $1.27{\times}10^{-2}$ & 0.203 & 0.557 & 578.5 & 0.205 & 0.996 \\
\bottomrule
\end{tabular}
\end{table*}

\begin{table}[!htb]
\centering
\scriptsize
\caption{Provenance of every quantity that enters a score or the selection. \emph{Model-derived}: follows from the fitted model. \emph{Canonical}: a standard statistic or combiner, with citation. \emph{Default}: an implementation default, fixed once and identical on all datasets, with the sensitivity study that covers it.}
\label{tab:provenance}
\setlength{\tabcolsep}{4pt}
\begin{tabular}{p{0.34\linewidth} p{0.14\linewidth} p{0.44\linewidth}}
\toprule
Quantity & Class & Source or tested range \\
\midrule
$(\rho,\kappa)$; $\gamma$ within its grid & model-derived & maximum of the residual-space likelihood (Alg.~\ref{alg:prior_opt}) \\
$J^\star$, $R$, $C_{\Gamma,\lambda_c}$, $CR_{\Gamma,\lambda_c}$ & model-derived & closed forms of \S\ref{subsec:anomaly_score} \\
plug-in nulls of the scores & model-derived & chi-squared laws under the fitted prior \\
Kolmogorov--Smirnov, Berk--Jones & canonical & \cite{berk1979goodness} \\
ACAT, minimum-$p$, mean $-\log p$ combiners & canonical & \cite{liu2020cauchy}; Fisher's method \\
two-groups mixture of $p$-values & canonical & \cite{efron2012large} \\
rank correlation as stability $S(r)$ & canonical & Spearman correlation, average ties \\
grids for $\gamma$, $\kappa$, $\Gamma$, $\lambda_c$ & default & listed in this section \\
PCA dimension, truncation $k$ by graph size & default & listed in this section \\
graph-statistic cutoffs on $h$, $d_e$, $D_x$, $k/n$ & default & $\pm25\%$ and $\pm50\%$ (Table~\ref{tab:cutoff_sensitivity}) \\
diagnostic windows (NullKS, Berk--Jones, $\pi$) & default & $\pm25\%$ and $\pm50\%$ (Table~\ref{tab:cutoff_sensitivity}) \\
pool windows of rules 2, 3, 5 (NullKS, Gini) & default & listed in this section; not perturbed \\
weights inside $A(r)$ & default & ingredient ablation (Table~\ref{tab:ar_ablation}) \\
\bottomrule
\end{tabular}
\end{table}

\begin{figure*}[t]
    \centering
    \includegraphics[width=0.75\linewidth]{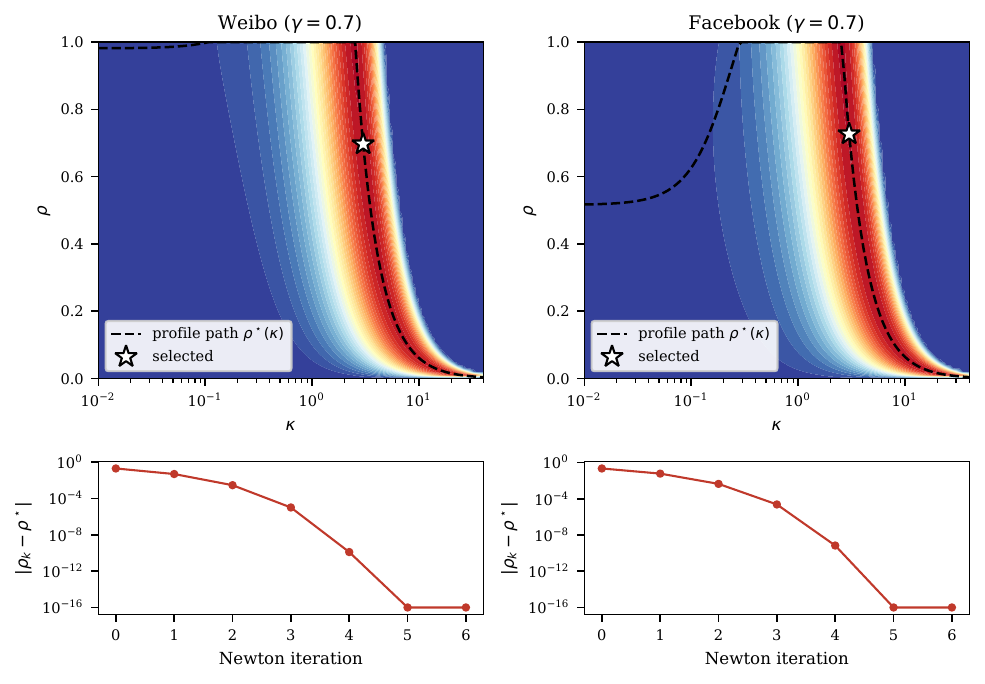}
    \caption{%
        \textbf{Top:} residual-space log-likelihood over $(\rho, \kappa)$ of the EB-fitted run (likelihood-selected bandwidth $\gamma^\star=0.7$) for Weibo and Facebook (values below the 35th percentile are clipped for display).
        Dashed curve: profile path $\rho^\star(\kappa)$; star: selected optimum on the $\kappa$ grid.
        The likelihood is concave in $\rho$ (Remark~\ref{rem:landscape}) with dataset-dependent structure in $\kappa$.
        \textbf{Bottom:} Newton iterates for $\rho$ at the selected $\kappa^\star$.
    }
    \label{fig:ml_contour}
\end{figure*}

\begin{figure}[t]
    \centering
    \includegraphics[width=\linewidth]{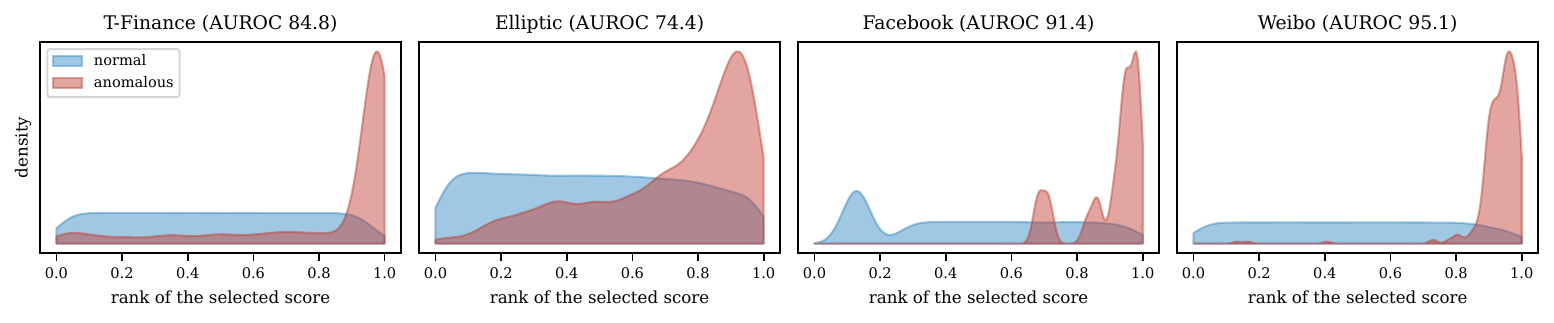}
    \caption{Density of the selected EB-GAD score for normal and anomalous nodes on four datasets (original node order). Scores are shown on their rank scale, so a method at chance would give two flat densities.}
    \label{fig:density_sep}
\end{figure}

\subsection{Equilibrium Scores}
\label{app:equilibrium}

Table~\ref{tab:ks_ablation} reports the two equilibrium scores at the anchor bandwidth and the one that NullKS selects without labels. For the six datasets routed to an anchor this is the entry of Table~\ref{tab:main_auroc}; for the other five, the comparator of main Table~\ref{tab:bank_ablation} is the equilibrium score at the configuration of the selected score. NullKS picks the better of the two on Weibo, Amazon, BlogCatalog and ACM, and the worse on Reddit, Facebook and T-Finance; on Facebook the difference is large ($47.4$ against $89.7$) and is the cost that the cutoff study of Appendix~\ref{app:selector_studies} reports. On YelpChi and DGraph the fitted $\kappa$ is at the upper end of its grid, both NullKS values are zero, and the tie goes to $J^\star$ (DGraph, like Elliptic and Elliptic++, is assigned $R$ by rule 1).

\begin{table}[!htb]
\centering
\scriptsize
\caption{Equilibrium anchor: AUROC (\%) of $J^\star$ and $R$ at the anchor bandwidth $\gamma_a$ (the grid value nearest to $\gamma_0$) with the EB fit of $(\rho,\kappa)$ at that bandwidth. \textbf{Bold}: the score chosen without labels (larger NullKS; ties go to $J^\star$; rule 1 of \S\ref{app:selector_details} fixes $R$ on sparse truncated graphs). Last row: the score selected in Table~\ref{tab:main_auroc}.}
\label{tab:ks_ablation}
\setlength{\tabcolsep}{2pt}
\begin{tabular}{l ccccccccccc}
\toprule
& Weibo & Reddit & Amazon & YelpChi & BlogCat. & Facebook & ACM & Elliptic & Ell.++ & DGraph & T-Fin. \\
\midrule
$\gamma_a$ & 0.5 & 2 & 1 & 1 & 0.2 & 0.5 & 0.7 & 1 & 2 & 1 & 0.2 \\
$J^\star$ & \textbf{95.1} & 62.0 & \textbf{75.8} & \textbf{71.7} & 54.0 & \textbf{47.4} & \textbf{75.6} & 58.3 & 30.2 & 60.5 & 81.6 \\
$R$       & 47.3 & \textbf{60.6} & 48.2 & 64.6 & \textbf{61.0} & 89.7 & 72.1 & \textbf{74.5} & \textbf{72.5} & \textbf{66.8} & \textbf{80.1} \\
\midrule
Table~\ref{tab:main_auroc} & 95.1 & 60.6 & 78.0 & 71.7 & 78.6 & 91.4 & 86.0 & 74.5 & 72.5 & 66.8 & 84.8 \\
\bottomrule
\end{tabular}
\end{table}

\subsection{Resource Utility and Scalability Measures}

Baselines were trained and evaluated with their reported hyperparameters on NVIDIA A100 40GB GPUs. The EB-GAD results of Table~\ref{tab:main_auroc} were regenerated on CPU. For medium and large graphs we use a truncated eigendecomposition; the retained $k$ for each dataset is reported in Table~\ref{tab:ebgad_hyperparams}.

To scale EB-GAD to large graphs, we adopt the following efficiency measures:
\begin{itemize}
    \item \textbf{Truncated eigendecomposition:} For graphs with $n \geq 20$K nodes we use the Locally Optimal Block Preconditioned Conjugate Gradient (LOBPCG \cite{LOBPCG_2001}) method, with a seeded initial block, to compute the $k$ smallest eigenpairs of the sparse Laplacian; the basis is then orthonormalized and Ritz-refined in double precision. These are the smooth graph modes used by the low-pass template and the Mat\'ern precision; unretained rough modes are omitted from the likelihood and the control-energy numerator, while ratio scores still normalize by the full nodewise residual norm. The eigenvector matrix is $(n \times k)$ rather than $(n \times n)$, reducing memory from $O(n^2)$ to $O(nk)$; $k$ is scaled by graph size (Table~\ref{tab:ebgad_hyperparams}).
    \item \textbf{Eigenspace operations:} All graph filters and prior operations use products $V (\mathbf{f}(\lambda) \odot (V^\top \mathbf{x}))$ in $O(n \cdot k \cdot D)$ per step, avoiding dense matrix exponentials.
    \item \textbf{Sparse Laplacian:} For graphs with $n \geq 8$K nodes, we use a sparse construction to avoid materializing dense $n \times n$ matrices.
    \item \textbf{Eigen cache:} The dense precision matrix $Q$ is skipped for large graphs; all scoring operations use the precomputed eigen cache instead.
\end{itemize}

\subsection{Computation Time}

Table~\ref{tab:runtime} reports wall-clock times. The baseline times are training plus inference on one NVIDIA A100 GPU, as measured for the submitted version. The EB-GAD times were measured for this version on 16 CPU threads of a shared server and cover the complete run behind a cell of Table~\ref{tab:main_auroc}: loading the data, one double-precision eigendecomposition per template bandwidth of the grid, the EB fits, every score family that the run computes, and the label-free selection. They were measured before the anchor bandwidth $\gamma_a$ was introduced; the anchor reuses the residual that the run computes at $\gamma_a$, a value of its grid, and adds one EB fit and two scores. The times are upper bounds for a deployment, which would cache one eigendecomposition and compute one family. The two sides use different hardware, so the table indicates orders of magnitude and is not a controlled comparison.
EB-GAD takes $8$ seconds on Facebook and $33$ minutes on DGraph, where the eigendecomposition of the labeled subgraph accounts for $30$ minutes. The candidate-pool datasets (Amazon, BlogCatalog, ACM) take $11$ to $26$ minutes because every candidate configuration recomputes the eigendecomposition; the implementation also recomputes it for every template bandwidth although the Laplacian does not depend on the bandwidth, so caching it would reduce all times.

\begin{table*}[!htb]
  \centering
  \caption{%
    Wall-clock time (seconds). Baselines: training and inference on one NVIDIA A100 GPU, as measured for the submitted version. EB-GAD: the complete run behind its cell of Table~\ref{tab:main_auroc} on 16 CPU threads, from loading the data to the selected score: eigendecompositions in double precision for every template bandwidth of the grid, EB fits, all score families of the run and the label-free selection. The two columns use different hardware and are not a controlled comparison. \oom{} = out of memory.
  }
  \label{tab:runtime}
  \scriptsize
  \setlength{\tabcolsep}{3pt}
  \renewcommand{\arraystretch}{1.05}
  \begin{tabular}{l rrrrrrr r}
    \toprule
    Dataset & ANOM. & DOMINANT & AnomDAE & CONAD & CoLA & DiffGAD & TAM & EB-GAD (CPU) \\
    \midrule
    Weibo & 78 & 63 & 1 & 95 & 130 & 134 & 858 & 84 \\
    Reddit & 38 & 35 & 2 & 57 & 8 & 110 & 5,467 & 136 \\
    Amazon & 25 & 37 & 153 & 308 & 14 & 106 & 10,739 & 708 \\
    YelpChi & 170 & 185 & 1,152 & 1,359 & 5 & 118 & 5,106 & 27 \\
    BlogCatalog & 447 & 430 & 92 & 3,047 & 18 & 3,272 & 8,764 & 675 \\
    Facebook & 3 & 20 & 9 & 56 & 3 & 118 & 479 & 8 \\
    ACM & 6,503 & 290 & 580 & 2,293 & 26 & 39,209 & 14,446 & 1,575 \\
    Elliptic & \oom & 19,802 & \oom & \oom & 263 & 294 & 1,176 & 1,101 \\
    Elliptic++ & \oom & 16,962 & \oom & \oom & 288 & 292 & 1,186 & 1,429 \\
    DGraph & \oom & \oom & \oom & \oom & 4,117 & \oom & 3,929 & 1,956 \\
    T-Finance & 963 & 3,425 & 9,761 & 5,028 & 1,107 & 384 & \oom & 1,471 \\
    \bottomrule
  \end{tabular}
\end{table*}

\subsection{Dimensionality Reduction and Nonlinear Extensions}
\label{app:encoder}

The main text uses PCA as a training-free dimensionality reduction that preserves the linear-Gaussian structure.
Here we discuss the theoretical landscape of dimensionality reduction for EB-GAD and report results with trained nonlinear encoders.

\paragraph{Linear reductions preserve the Gaussian structure.}
Feature-space linear maps preserve Gaussianity. If $z=(I_n\otimes W)x$ and $x \sim \mathcal{N}(\mathbf{m}, Q_\rho^{-1}\otimes I_D)$, then
$z \sim \mathcal{N}((I_n\otimes W)\mathbf{m}, Q_\rho^{-1}\otimes WW^\top)$.
For PCA with orthonormal retained components (and after standardization), $WW^\top=I_p$, so the same closed-form anomaly scores and residual-space likelihood apply exactly in the projected space.
PCA is deterministic, unsupervised, and maximizes retained variance among linear projections.

\paragraph{Nonlinear reductions break the Gaussian structure.}
When a nonlinear encoder $f_\theta: \mathbb{R}^D \to \mathbb{R}^p$ is applied, the latent features $z = f_\theta(x)$ are no longer Gaussian even if $x$ is.
The residual-space likelihood objective is no longer available in closed form, and a variational bound is needed for prior selection.

\paragraph{Variational formulation (ELBO).}
For this nonlinear-encoder extension only, one could construct a terminally conditioned GOU bridge via Doob's $h$-transform on a discrete time grid and optimize a reconstruction-plus-KL variational objective in the spirit of learned diffusion-bridge/reverse-transition models \cite{zhou2024ddbm,yue2023goub,zhu2025unidb}. The bound would combine a per-step KL between the analytic bridge transition $q_\varphi$ and the learned reverse transition $p_\theta$ with an endpoint reconstruction term; in the linear-Gaussian case (linear encoder or no encoder), the bridge transitions match the true posterior exactly, the variational gap vanishes, and the bound reduces to the exact residual-space objective \eqref{eq:marginal_likelihood}.
This bridge is not used in the main EB-GAD pipeline, which relies on the unconditioned GOU transition law; we mention the ELBO only to indicate how a future nonlinear-encoder extension would inherit the empirical-Bayes structure.

\paragraph{Non-neural nonlinear options.}
Beyond neural encoders, several non-neural nonlinear methods could extend the framework:
\begin{itemize}[nosep]
\item \emph{Kernel PCA}: maps features to a reproducing kernel Hilbert space via a kernel function (e.g., RBF). Non-parametric and training-free, but the infinite-dimensional feature space requires truncation, and the Gaussian assumption is approximate.
\item \emph{Random Fourier Features} \cite{rahimi2007random}: approximates kernel PCA with explicit features $z(x) = \cos(Wx + b)$ for random $W$. Deterministic, finite-dimensional, and training-free, but the nonlinear map makes the likelihood objective approximate.
\item \emph{GPLVM} \cite{lawrence2005probabilistic}: a principled Bayesian nonlinear latent variable model with a natural variational formulation. Requires optimization of variational parameters, making it no longer training-free.
\end{itemize}
These represent a spectrum from fully closed-form (PCA) to fully variational (GPLVM), trading tractability for expressive power.
We leave exploration of these extensions to future work.

\paragraph{Empirical comparison: training-free vs.\ trained encoders.}
Table~\ref{tab:encoder_comparison} compares the training-free results of Table~\ref{tab:main_auroc} against the better of two trained nonlinear encoders (MLP or GraphSAGE) for each dataset; choosing the better encoder by AUROC favors the trained side.
Trained encoders are applied as a preprocessing step: the encoder is trained separately, and the same score formula is then applied to the latent features.
\emph{Training setup:} Adam optimizer (learning rate $10^{-3}$, weight decay $5\!\cdot\!10^{-4}$), 200 epochs with early stopping on the unsupervised reconstruction loss, batch size 1024 (full-batch on small graphs), 2 hidden layers of width 64 with ReLU; for GraphSAGE, mean-aggregator with neighbor sampling sizes $[25,10]$; for the small-encoder retries, hidden width 16 and 1 hidden layer.
For T-Finance and DGraph, we use a small feature-only retry to avoid dense structure reconstruction; the DGraph row reports the best of seven completed small-encoder configurations.
The trained-encoder numbers are those of the submitted version; they were not rerun.

\begin{table}[!htb]
\centering
\caption{Training-free EB-GAD (Table~\ref{tab:main_auroc}) against the better of two trained encoders (MLP, GraphSAGE), chosen per dataset by AUROC, followed by the same score formula. AUROC (\%); gap = training-free minus trained.}
\label{tab:encoder_comparison}
\scriptsize
\begin{tabular}{lccc}
\toprule
Dataset & Training-free & Best trained encoder & Gap \\
\midrule
Weibo & 95.1 & \textbf{95.7} (MLP) & $-0.6$ \\
Reddit & \textbf{60.6} & 58.9 (MLP) & $+1.7$ \\
Amazon & \textbf{78.0} & 60.3 (MLP) & $+17.7$ \\
YelpChi & \textbf{71.7} & 68.0 (SAGE) & $+3.7$ \\
BlogCatalog & \textbf{78.6} & 77.6 (MLP) & $+1.0$ \\
Facebook & 91.4 & \textbf{95.4} (MLP) & $-4.0$ \\
ACM & \textbf{86.0} & 63.7 (MLP) & $+22.3$ \\
Elliptic & \textbf{74.5} & 56.4 (SAGE) & $+18.1$ \\
Elliptic++ & \textbf{72.5} & 57.7 (SAGE) & $+14.8$ \\
DGraph & \textbf{66.8} & 58.8 (SAGE-small) & $+8.0$ \\
T-Finance & 84.8 & \textbf{84.9} (SAGE-small) & $-0.1$ \\
\bottomrule
\end{tabular}
\end{table}

\paragraph{Reading.}
The training-free pipeline is ahead on 8 of 11 datasets, by $1.0$ to $22.3$ points (Reddit, Amazon, YelpChi, BlogCatalog, ACM, Elliptic, Elliptic++, DGraph). A trained encoder is ahead on Weibo and Facebook, by $0.6$ and $4.0$ points, and on T-Finance by $0.1$.
The encoder helps on two social graphs and hurts on five of the six fraud graphs, on ACM, on BlogCatalog and on Reddit.
Where it hurts, two causes are plausible: a nonlinear encoder breaks the Gaussian structure, so the exact residual-space objective \eqref{eq:marginal_likelihood} no longer applies to the latent features, and it can distort the relation between graph frequencies and feature variation on which the scores operate.
The training-free pipeline is also deterministic given the eigenbasis, whereas trained encoders add seed-dependent variance.

\section{Regenerated Results and Additional Studies}
\label{app:studies}

Every EB-GAD number in this paper, in the main tables and in the studies below, was produced for this version by one pipeline. Wherever a score is selected, it is selected without labels; ``best member'' columns report the best score of a pool by AUROC and are diagnostics, not results. The studies of the finite-horizon bank (isolation, contamination, misspecification, neighboring anomalies) run at the EB-fitted configuration, that is, at the likelihood-selected bandwidth $\gamma^\star$; where the anchor bandwidth $\gamma_a$ differs, the text also quotes the study at $\gamma_a$.

\subsection{Protocol, Relabeling Gate and Changes from the Submitted Version}
\label{app:regeneration}

\paragraph{Protocol.}
The procedure of Appendix~\ref{app:selector_details} follows Algorithm~\ref{alg:prior_opt} and the modeling details of Appendix~\ref{app:modeling_details} as submitted, and no label enters any step before evaluation. Four points are made explicit. (i) The equilibrium anchor is computed by Algorithm~\ref{alg:prior_opt} on every dataset, at the anchor bandwidth $\gamma_a$; no per-dataset configuration is used. (ii) All computed Laplacian modes, null modes included, are kept in the modeled subspace. (iii) Eigendecompositions run in double precision. (iv) Ranks average ties. The protocol file is part of the released code, together with the scripts that regenerate every table.

\paragraph{Relabeling gate.}
The model is equivariant to node relabeling, so every reported score must be unchanged, up to the relabeling, when the nodes are randomly renumbered. Table~\ref{tab:relabel_gate} reports this check for the scores of Table~\ref{tab:main_auroc}. On the six datasets with a full spectrum the selected score is the same under relabeling: the rank correlation is at least $0.9994$ and the AUROC differs by at most $0.03$. With a truncated spectrum the computed basis depends on the node order through the iterative eigensolver. The effect is absent on T-Finance and small on Elliptic, Elliptic++ and DGraph (standard deviation at most $0.3$ over five node orders). It is largest on YelpChi, whose graph has 7{,}308 connected components with at most 47 nodes: all 500 retained modes lie in the Laplacian nullspace, the basis is not unique, and the AUROC ranges from $70.7$ to $72.7$. Table~\ref{tab:main_auroc} reports the mean over the five node orders for the truncated datasets.

\begin{table}[!htb]
\centering
\scriptsize
\caption{Relabeling gate for the scores of Table~\ref{tab:main_auroc}. The model is equivariant to node relabeling, so the selected score must not depend on the node order. \emph{Orders}: number of node orders run (the original and random relabelings). AUROC (\%): smallest, largest and standard deviation over the orders. Spearman: rank correlation of the selected score between the original order and one relabeling.}
\label{tab:relabel_gate}
\setlength{\tabcolsep}{5pt}
\begin{tabular}{l c c c c c c}
\toprule
Dataset & Spectrum & Orders & min & max & std & Spearman \\
\midrule
    Weibo & full & 2 & 95.12 & 95.12 & 0.00 & 1.0000 \\
    Reddit & full & 2 & 60.56 & 60.56 & 0.00 & 0.9995 \\
    Amazon & full & 2 & 78.05 & 78.05 & 0.00 & 1.0000 \\
    YelpChi & truncated & 5 & 70.72 & 72.65 & 0.62 & 0.8368 \\
    BlogCatalog & full & 2 & 78.62 & 78.62 & 0.00 & 1.0000 \\
    Facebook & full & 2 & 91.40 & 91.40 & 0.00 & 1.0000 \\
    ACM & full & 2 & 86.02 & 86.02 & 0.00 & 1.0000 \\
    Elliptic & truncated & 5 & 74.01 & 74.82 & 0.27 & 0.9956 \\
    Elliptic++ & truncated & 5 & 72.45 & 72.55 & 0.03 & 0.9990 \\
    DGraph & truncated & 5 & 66.63 & 66.93 & 0.10 & -- \\
    T-Finance & truncated & 5 & 84.75 & 84.75 & 0.00 & 1.0000 \\
\bottomrule
\end{tabular}
\end{table}

\paragraph{Changes from the submitted version.}
Four EB-GAD entries of Table~\ref{tab:main_auroc} are unchanged up to rounding (Amazon, BlogCatalog, Facebook, ACM), together with their fitted configurations. The others changed; we list every change and its cause.
\begin{itemize}[leftmargin=*, itemsep=1pt, topsep=2pt]
\item \emph{YelpChi ($95.3\to71.7$).} The submitted entry was an artifact of the node order. At the fitted $\kappa$, the upper end of its grid, the plug-in $p$-values of all path bands are constant; the rank transform then broke ties by node index, and the released node order of YelpChi is correlated with the labels (the node index alone has AUROC $94.4$). With the tie-breaking of the submitted version the profile score has AUROC $93.7$ in the released order and $60.3$ after a random relabeling; with averaged ties it has $67.3$ and $66.8$ (Table~\ref{tab:ordering}). The sparse-tail diagnostic no longer passes, and the rules route YelpChi to the equilibrium anchor. The gain of $24.6$ points attributed to the finite horizon on YelpChi in the submitted ablation was the same artifact. Facebook, whose node order is also correlated with the labels (index AUROC $97.9$), is not affected: its score involves no rank step and is identical under relabeling.
\item \emph{Weibo ($95.1\to95.1$) and Reddit ($61.8\to60.6$).} The submitted equilibrium anchors used fixed per-dataset configurations (template bandwidth, $(\rho,\kappa)$ and score type) carried over from an earlier version of the pipeline, not a configuration that the algorithm selects. They are replaced by the anchor of this version: bandwidth $\gamma_a$, EB fit of $(\rho,\kappa)$ at that bandwidth, score type by NullKS. On Weibo $\gamma_a=0.5$ is also the submitted bandwidth; on Reddit $\gamma_a=2$ (submitted: $1$) and NullKS selects $R$. With the bandwidth that the residual likelihood selects, $\gamma^\star=0.7$ on both datasets, the entries are $92.5$ and $48.5$; the paragraph on the bandwidth of the anchor below explains the choice and how it was made.
\item \emph{T-Finance ($86.8\to84.8$).} The submitted value was the best path band by AUROC; the band with the largest NullKS gives $84.8$.
\item \emph{Elliptic ($73.0\to74.5$), Elliptic++ ($72.4\to72.5$), DGraph ($66.9\to66.8$).} Same selection ($R$ by rule 1); the values changed with the numerical corrections below and are now means over five node orders. The anchor bandwidth is $\gamma_a=1$ on Elliptic and DGraph and $\gamma_a=2$ on Elliptic++, where $\gamma=1$ gives $73.7$.
\item \emph{Numerics.} Dense eigendecompositions ran in single precision, in which the zero eigenvalues of a Laplacian are of order $10^{-7}$ with either sign. A flag of the submitted runs removed modes with eigenvalue below $10^{-8}$, so whether a null mode was removed depended on rounding, and with it on the node order and the platform. Eigendecompositions now run in double precision, LOBPCG uses a seeded initial block, and its basis is Ritz-refined in double precision. All modes are kept, as the modeling details state; with this convention the four entries above and their configurations are reproduced, and Table~\ref{tab:sensitivity} reports the other convention.
\item \emph{Ablation table.} The submitted bank ablation compared scores at different configurations, so its differences mixed the effect of the horizon with the effect of the configuration. Table~\ref{tab:bank_ablation} now compares at the same configuration.
\item \emph{Text and tables.} The grids, the definition of $h$ and the statistics table of Appendix~\ref{app:selector_details} now state what the code runs. The studies of this appendix were rerun with the same pipeline, so some of their numbers differ from those quoted during the review.
\end{itemize}

\begin{table}[!htb]
\centering
\scriptsize
\caption{The ordering artifact. AUROC (\%) of the profile scores and of the equilibrium scores on YelpChi and Facebook, in the released node order and after a random relabeling, with the tie-breaking of the submitted version (ties ordered by node index) and with averaged ties. The equilibrium scores involve no rank step. Last column: AUROC of the node index itself.}
\label{tab:ordering}
\setlength{\tabcolsep}{4pt}
\begin{tabular}{l l cc cc c}
\toprule
& & \multicolumn{2}{c}{ties by node index} & \multicolumn{2}{c}{averaged ties} & \\
\cmidrule(lr){3-4}\cmidrule(lr){5-6}
Dataset & Score & released & relabeled & released & relabeled & node index \\
\midrule
YelpChi & profile, mixture posterior & 93.7 & 60.3 & 67.3 & 66.8 & 94.4 \\
 & profile, mean $-\log p$ & 93.5 & 60.6 & 68.7 & 68.3 &  \\
 & profile, ACAT & 92.8 & 61.9 & 69.3 & 69.8 &  \\
 & $J^\star$ & 70.9 & 70.7 & 70.9 & 70.7 &  \\
 & $R$ & 64.2 & 64.0 & 64.2 & 64.0 &  \\
\midrule
Facebook & profile, component entropy & 91.4 & 91.4 & 91.4 & 91.4 & 97.9 \\
 & $J^\star$ & 63.7 & 63.7 & 63.7 & 63.7 &  \\
 & $R$ & 87.8 & 87.8 & 87.8 & 87.8 &  \\
\bottomrule
\end{tabular}
\end{table}

\paragraph{Bandwidth of the anchor.}
The submitted Algorithm~\ref{alg:prior_opt} selects one bandwidth, by the residual-space likelihood, for every score. In the first regeneration for this version the anchor used that bandwidth, which gave $92.5$ on Weibo and $48.5$ on Reddit. The residual $\Delta=T_{\gamma,\nu}X$ is a different function of the data at each bandwidth, and its likelihood is largest at the grid value whose residual map removes the largest share of the feature energy, on all eleven datasets ($79$ to $95\%$ on the six full-spectrum datasets; Table~\ref{tab:gamma_profile}). The criterion therefore measures how much of the features the template explains. The $p$-values of the finite-horizon families are computed under the fitted law of that residual and keep $\gamma^\star$; how well a single energy of the residual ranks anomalies is a different question, which the likelihood does not address.

In this version the anchor uses $\gamma_a$, the grid value nearest to $\gamma_0$. The center $\gamma_0=(0.5+h)/\sqrt{\max(d_e,10)/10}$ is small for dense graphs with low feature homophily and large for sparse homophilic ones. For $\gamma<1$ the residual filter $1-(\gamma^2+\lambda)^{-1}$ is negative and large in magnitude on low graph frequencies, so the anchor reads the smoothed signal of a node's neighborhood (clustered anomalies, as on Weibo); for $\gamma\ge1$ it is close to the deviation of the node itself (Reddit and the fraud graphs). The center and the grid are those of the submitted code, the rule is the same on every dataset, and no label enters it.

The decision to adopt the rule, however, was taken after the anchors at both bandwidths had been evaluated on the labels, and its effect is not uniform. Against the equilibrium score at $\gamma^\star$, the one at $\gamma_a$ is higher on Weibo, Reddit and Amazon, lower on BlogCatalog, Facebook and Elliptic++, and identical on the five datasets where the two bandwidths coincide (Table~\ref{tab:gamma_profile}). Of the six datasets where they differ, three are routed to an anchor: Table~\ref{tab:main_auroc} gains $2.6$ points on Weibo and $12.1$ on Reddit and loses $1.2$ on Elliptic++. The rule is therefore not a blind choice, and Table~\ref{tab:sensitivity} reports the alternatives next to it: the anchor at the likelihood-selected bandwidth, and the anchor averaged over the three bandwidths of the grid (mean percentile rank), which gives $95.3$ on Weibo and $60.7$ on Reddit, raises Elliptic and Elliptic++ (to $75.1$ and $75.9$) and lowers YelpChi and DGraph (to $64.0$ and $66.1$). Using $\gamma_0$ itself instead of the nearest grid value gives $95.5$ on Weibo and $59.8$ on Reddit and lowers YelpChi to $62.9$ and Elliptic to $70.8$. The rule is restricted to the anchor: applied to every family it would set $\gamma=0.5$ on Facebook, where the scale-entropy diagnostic then fails and the rules fall back to $J^\star$ ($47.4$ instead of $91.4$). The choice of the score type by NullKS is the same in every node order at every anchor bandwidth; among the other grid values it changes with the node order on Reddit at $\gamma=1$ (NullKS of $R$: $0.45$ and $0.24$) and on YelpChi at $\gamma=2$. For the anchor the bandwidth is thus a declared part of the prior and not a fitted quantity, and the Weibo and Reddit entries should be read with this in mind.

\paragraph{Sensitivity to four conventions.}
Table~\ref{tab:sensitivity} reruns Table~\ref{tab:main_auroc} under four alternative conventions, with the rules and the within-family scores unchanged. The first two are the anchor variants of the previous paragraph. If the Laplacian null modes are excluded from the modeled subspace, the entries of the six full-spectrum datasets change by $-6.6$ to $+0.1$ points; the largest changes are BlogCatalog ($78.6\to72.0$) and ACM ($86.0\to82.1$). If every bandwidth, including the anchor's, is chosen by the likelihood with the Jacobian term of the residual map, which Section~\ref{subsec:prior_selection} omits, Reddit is at $61.9$ while Weibo, YelpChi, Facebook and Elliptic fall by $1.6$ to $5.6$ points.

\begin{table}[!htb]
\centering
\scriptsize
\caption{Sensitivity of Table~\ref{tab:main_auroc} to four conventions (original node order, AUROC \%; ``='': same score and value; ``--'': not applicable, truncated spectra keep all computed modes; n/a: not run). \emph{Anchor, likelihood}: the anchor uses the bandwidth with the largest residual-space likelihood, as in the submitted Algorithm~\ref{alg:prior_opt}. \emph{Anchor, averaged}: the anchor is the mean percentile rank of the equilibrium score over the three bandwidths of the grid. \emph{Nullspace dropped}: Laplacian null modes are excluded from the modeled subspace on the full-spectrum datasets. \emph{Jacobian}: every bandwidth, including the anchor's, maximizes the likelihood with the term $D\sum_j\log\max(t_j,10^{-12})$ of the residual map $t_j=1-(\gamma^2+\lambda_j)^{-1}$, which excludes bandwidths whose map is negative on a computed mode. The ordered rules and the within-family scores are unchanged.}
\label{tab:sensitivity}
\setlength{\tabcolsep}{5pt}
\begin{tabular}{l ccccc}
\toprule
Dataset & Protocol & Anchor, likelihood & Anchor, averaged & Nullspace dropped & Jacobian \\
\midrule
Weibo & 95.1 & 92.5 & 95.3 & 95.2 & 92.5 \\
Reddit & 60.6 & 48.5 & 60.7 & 60.4 & 61.9 \\
Amazon & 78.0 & = & = & 76.5 & = \\
YelpChi & 70.7 & = & 64.0 & -- & 69.1 \\
BlogCatalog & 78.6 & = & = & 72.0 & = \\
Facebook & 91.4 & = & = & 90.8 & 85.8 \\
ACM & 86.0 & = & = & 82.1 & = \\
Elliptic & 74.4 & = & 75.1 & -- & 70.9 \\
Elliptic++ & 72.5 & 73.6 & 75.9 & -- & = \\
DGraph & 66.8 & = & 66.1 & -- & n/a \\
T-Finance & 84.8 & = & = & -- & = \\
\bottomrule
\end{tabular}
\end{table}

\begin{table}[!htb]
\centering
\scriptsize
\caption{Template bandwidth. $\gamma_0$: graph-statistic center of the grid. For every bandwidth $\gamma$ of a dataset's grid: AUROC (\%) of the equilibrium score chosen by NullKS after the EB fit at that bandwidth (low-pass template; $R$ on Elliptic, Elliptic++ and DGraph, where rule 1 fixes it) and, in parentheses, the share (\%) of the feature energy that the residual map removes (0 when the residual has more energy than the features; the shares are small on truncated spectra, where the map acts on the computed modes only). \textbf{Bold}: the anchor bandwidth $\gamma_a$ (grid value nearest to $\gamma_0$ on the log scale). $^{\ell}$: the bandwidth with the largest residual-space likelihood, used by the finite-horizon families. The AUROC values are diagnostics; no rule reads them. Original node order.}
\label{tab:gamma_profile}
\setlength{\tabcolsep}{5pt}
\begin{tabular}{l c lll}
\toprule
Dataset & $\gamma_0$ & \multicolumn{3}{l}{$\gamma$: AUROC (removed share), in increasing $\gamma$} \\
\midrule
Weibo & 0.36 & 0.2: 95.6 (0) & \textbf{0.5: 95.1} (59) & 0.7: 92.5$^{\ell}$ (86) \\
Reddit & 1.49 & 0.7: 48.5$^{\ell}$ (89) & 1: 51.8 (87) & \textbf{2: 60.6} (39) \\
Amazon & 0.87 & 0.5: 53.5 (12) & 0.7: 63.8$^{\ell}$ (79) & \textbf{1: 75.8} (77) \\
YelpChi & 1.38 & 0.7: 63.3 (0) & \textbf{1: 70.9}$^{\ell}$ (3) & 2: 69.1 (1) \\
BlogCatalog & 0.28 & 0.1: 44.9 (22) & \textbf{0.2: 61.0} (92) & 0.5: 74.8$^{\ell}$ (95) \\
Facebook & 0.55 & \textbf{0.5: 47.4} (67) & 0.7: 87.8$^{\ell}$ (86) & 1: 85.9 (77) \\
ACM & 0.64 & 0.5: 60.8 (26) & \textbf{0.7: 75.6}$^{\ell}$ (85) & 1: 78.4 (83) \\
Elliptic & 1.25 & 0.7: 70.2 (0) & \textbf{1: 74.4}$^{\ell}$ (3) & 2: 70.9 (1) \\
Elliptic++ & 1.48 & 0.7: 71.2 (0) & 1: 73.7$^{\ell}$ (3) & \textbf{2: 72.5} (1) \\
DGraph & 0.91 & 0.5: 64.4 (0) & \textbf{1: 66.8}$^{\ell}$ (0) & 2: 64.4 (0) \\
T-Finance & 0.18 & 0.1: 82.5 (0) & \textbf{0.2: 80.1}$^{\ell}$ (0) &  \\
\bottomrule
\end{tabular}
\end{table}

\subsection{AUPRC}
\label{app:auprc}

\begin{table}[!htb]
  \centering
  \caption{%
    AUPRC (\%) for the comparison of Table~\ref{tab:main_auroc}. \textbf{Bold}: best; \underline{underline}: second. $^{*}$: published in the TAM benchmark \cite{qiao2023truncated}, used exactly where Table~\ref{tab:main_auroc} follows that benchmark; other baseline cells are our runs. \oom{} = out of memory; n/a = out of memory and the published report gives AUROC only. EB-GAD cells are the AUPRC of the score selected without labels in Table~\ref{tab:main_auroc}.
  }
  \label{tab:auprc}
  \scriptsize
  \setlength{\tabcolsep}{3pt}
  \resizebox{\textwidth}{!}{%
  \begin{tabular}{l ccccccccccc}
    \toprule
    Method & Weibo & Reddit & Amazon & YelpChi & BlogCat. & Facebook & ACM & Elliptic & Ell.++ & DGraph & T-Fin. \\
    \midrule
    LOF & $7.0$ & $4.1$ & $6.7$ & $7.0$ & $24.7$ & $2.1$ & $14.6$ & $10.2$ & $8.5$ & $0.9$ & $3.2$ \\
    DIF & $3.9$ & $3.7$ & $15.4$ & $6.4$ & $7.4$ & $1.8$ & $3.5$ & $6.0$ & $6.1$ & $\underline{1.9}$ & $3.2$ \\
    ANOMALOUS & $\underline{34.6}$ & $3.9$ & $5.6$$^{*}$ & $5.2$$^{*}$ & $6.5$$^{*}$ & $19.0$$^{*}$ & $6.4$$^{*}$ & \oom & \oom & \oom & $4.8$ \\
    DOMINANT & $22.0$ & $3.7$ & $14.2$$^{*}$ & $4.0$$^{*}$ & $31.0$$^{*}$ & $3.1$$^{*}$ & $\underline{44.0}$$^{*}$ & $5.3$ & $7.4$ & \oom & $\underline{6.6}$ \\
    AnomalyDAE & $30.9$ & $3.1$ & $10.5$ & $6.7$ & $\underline{36.7}$ & $2.7$ & $27.0$ & $5.4$ & $6.0$ & \oom & $3.1$ \\
    CONAD & $22.8$ & $3.7$ & $10.8$ & $3.3$ & $11.1$ & $1.3$ & $11.3$ & $5.3$ & $7.3$ & n/a & $\underline{6.6}$ \\
    CoLA & $4.3$ & $\mathbf{4.5}$$^{*}$ & $6.8$$^{*}$ & $4.5$$^{*}$ & $32.7$$^{*}$ & $\underline{21.1}$$^{*}$ & $32.4$$^{*}$ & $\underline{15.3}$ & $\underline{12.2}$ & $1.4$ & $3.4$ \\
    DiffGAD & $32.3$ & $3.7$ & $12.3$ & $3.3$ & $30.4$ & $1.5$ & $17.1$ & $6.2$ & $5.7$ & n/a & $4.6$ \\
    TAM & $6.8$ & $\mathbf{4.5}$$^{*}$ & $\mathbf{26.3}$$^{*}$ & $\underline{7.8}$$^{*}$ & $\mathbf{41.8}$$^{*}$ & $\mathbf{22.3}$$^{*}$ & $\mathbf{51.2}$$^{*}$ & $6.9$ & $6.8$ & $0.8$ & n/a \\
    \midrule
    \textbf{EB-GAD} & $\mathbf{34.9}$ & $\mathbf{4.5}$ & $\underline{15.5}$ & $\mathbf{9.7}$ & $34.2$ & $17.2$ & $35.6$ & $\mathbf{22.1}$ & $\mathbf{19.1}$ & $\mathbf{2.3}$ & $\mathbf{53.0}$ \\
    \bottomrule
  \end{tabular}
  }
\end{table}

Table~\ref{tab:auprc} reports AUPRC for the comparison of Table~\ref{tab:main_auroc}. EB-GAD is best on five fraud columns: YelpChi ($9.7$ against $7.8$ for the best baseline), Elliptic ($22.1$ against $15.3$), Elliptic++ ($19.1$ against $12.2$), DGraph ($2.3$ against $1.9$) and T-Finance ($53.0$ against $6.6$). With the anchor bandwidth of this version it is also best or tied-best on Weibo ($34.9$ against $34.6$ for the best baseline) and Reddit ($4.5$, the value of TAM and CoLA). It is second on Amazon, third on BlogCatalog and ACM, and fourth on Facebook. On Facebook its AUROC ties the best ($91.4$) while its AUPRC ($17.2$) trails TAM ($22.3$), CoLA ($21.1$) and ANOMALOUS ($19.0$): the selected score ranks most anomalies above the normal nodes but is less precise at the very top of the ranking. On Reddit ($3.3\%$ anomalies) no method is more than $1.2$ points above the base rate, EB-GAD included ($4.5$); we read that column as unsolved. The T-Finance AUPRC of DOMINANT, CONAD and CoLA come from reruns made for this table, not from the runs behind Table~\ref{tab:main_auroc}.

\subsection{Isolation of the Spectral Precision Family}
\label{app:isolation}

A reviewer asked whether the gains come from the OU formulation or would be obtained by any spectral scoring of the same residuals. For each dataset we freeze the operator, the template, PCA, the truncation, the residuals and the label-free selection, and change only the spectral precision: the EB-fitted GOU precision with its finite-horizon members, the GOU precision at fixed $(\rho,\kappa)=(0.5,1)$, heat-kernel precisions, polynomial precisions, and the identity (Table~\ref{tab:isolation}). One configuration per dataset is shared by the five families, so the columns are compared with each other and not with Table~\ref{tab:main_auroc}, which selects over families and pools.

On Elliptic, Elliptic++ and T-Finance the GOU bank leads the best generic family by $12.6$, $11.6$ and $10.2$ points under identical label-free selection, and on T-Finance the EB fit alone adds $23.6$ points over the fixed GOU precision. The mechanism is selectability rather than the quality of the best score: on Elliptic and Elliptic++ the best members of all five families are within $1.2$ points of each other ($74.4$ to $75.2$, $73.7$ to $74.9$), so the generic banks contain an equally good score, but their label-free selection falls to $58$ to $62$. Elsewhere no family is consistently better. The GOU bank is within $1.1$ points of the best generic selection on Weibo, YelpChi, BlogCatalog, ACM and DGraph; a polynomial bank is ahead on Reddit ($+3.3$, all families near chance) and Amazon ($+6.8$), and a heat-kernel bank is ahead on Facebook by $11.6$ points and reaches $91.6$, the level of Table~\ref{tab:main_auroc}. We therefore claim no superiority of the OU structure on those datasets; the claim is specific to the three financial graphs.

\begin{table}[!htb]
\centering
\scriptsize
\caption{Isolation of the spectral precision family. Operator, template, PCA, truncation, residuals and the label-free selection (largest NullKS) are fixed per dataset; only the precision changes. AUROC (\%) of the selected score, with the best member of the family by AUROC in parentheses (a diagnostic, not a result). \textbf{Bold}: best selected score of the row. GOU-EB: EB-fitted GOU precision with its finite-horizon members; GOU fixed: $\rho=0.5$, $\kappa=1$; heat: $\exp(t\lambda/\lambda_{\max})$, 8 scales; polynomial: $(c+\lambda)^p$, 6 configurations; identity: graph-free residual energy.}
\label{tab:isolation}
\setlength{\tabcolsep}{4pt}
\begin{tabular}{l ccccc}
\toprule
Dataset & GOU-EB & GOU fixed & heat & polynomial & identity \\
\midrule
Weibo & 92.5 (92.7) & 92.0 (92.7) & 85.9 (92.5) & 80.1 (91.4) & \textbf{92.7} (92.7) \\
Reddit & 48.2 (55.8) & 48.3 (55.9) & 48.6 (57.4) & \textbf{51.5} (56.3) & 48.2 (48.2) \\
Amazon & 62.9 (67.7) & 62.9 (67.8) & 50.8 (67.4) & \textbf{69.7} (70.1) & 62.8 (62.8) \\
YelpChi & \textbf{69.1} (71.3) & \textbf{69.1} (71.3) & 66.1 (70.5) & 68.7 (70.5) & 68.7 (70.5) \\
BlogCatalog & 78.2 (78.5) & 78.2 (78.5) & 78.3 (78.3) & 77.6 (77.6) & \textbf{78.5} (78.5) \\
Facebook & 80.0 (88.0) & 79.8 (88.0) & \textbf{91.6} (92.8) & 80.9 (87.7) & 48.1 (62.0) \\
ACM & 79.6 (81.9) & 79.6 (81.3) & 79.8 (81.5) & \textbf{80.4} (83.8) & 79.6 (79.6) \\
Elliptic & \textbf{74.4} (74.4) & \textbf{74.4} (74.4) & 61.8 (75.2) & 58.5 (74.5) & 58.5 (74.4) \\
Elliptic++ & \textbf{73.7} (73.7) & \textbf{73.7} (73.7) & 62.1 (74.9) & 58.1 (73.8) & 58.1 (73.7) \\
DGraph & 62.2 (66.8) & 62.2 (66.8) & 63.2 (66.8) & \textbf{63.3} (66.8) & \textbf{63.3} (66.8) \\
T-Finance & \textbf{82.5} (82.5) & 58.9 (60.8) & 41.6 (60.3) & 72.3 (76.6) & 59.0 (60.7) \\
\bottomrule
\end{tabular}
\end{table}

\subsection{Horizon Axis: Matched Single-Score Comparison}
\label{app:horizon_studies}

A reviewer asked for a direct comparison of equilibrium-only scoring with finite-horizon scoring, with the fitted model and all other components fixed. Two comparisons answer it. Table~\ref{tab:bank_ablation} in the main text compares the score that the selector uses with the equilibrium score of the same configuration. Table~\ref{tab:matched_single} is narrower: one score type on both sides, a single finite horizon, no endpoint-tolerance grid, no fusion and no aggregation. DGraph is not included (its spectrum is computed on the labeled subgraph by a separate script).

At the granularity of single scores the answer is parity. The best finite-horizon member by AUROC is within $0.3$ points of its equilibrium limit on nine of ten datasets and $1.4$ below it on T-Finance; when NullKS picks the horizon, the finite score is within $0.2$ points of the limit on eight datasets and below it on ACM ($-1.6$) and T-Finance ($-1.4$). No single finite-horizon score reproduces the gains of Table~\ref{tab:bank_ablation} ($+3.6$ on Facebook, $+4.7$ on ACM and T-Finance). Those gains come from what is done with the family: aggregation of $p$-values across horizons (Facebook, T-Finance) and scale normalization with stability-ranked fusion (ACM). The finite-horizon contribution is therefore not one fortunate horizon. It is that the fitted GOU attaches one prior, hence one plug-in null, to every horizon, which is what makes the family comparable and aggregatable without labels; the equilibrium limit is a single member and cannot be aggregated.

\begin{table}[!htb]
\centering
\scriptsize
\caption{Matched single-score comparison at the configuration and EB fit of Table~\ref{tab:ebgad_hyperparams}, shared by both sides. \emph{Equilibrium}: $J^\star$ or $R$, whichever has the larger NullKS ($R$ where rule 1 fixes it). \emph{Finite}: a single hard-endpoint score of the same type ($C_{\Gamma,\infty}$ against $J^\star$, $CR_{\Gamma,\infty}$ against $R$) at one finite horizon $\Gamma\in\{0.02,\dots,50\}$, with no tolerance grid, fusion or aggregation; NullKS picks the horizon, and the best horizon by AUROC is reported as a diagnostic. AUROC (\%).}
\label{tab:matched_single}
\setlength{\tabcolsep}{5pt}
\begin{tabular}{l ccc cc}
\toprule
Dataset (type) & Equilibrium & Finite, selected & Finite, best & $\Delta$ selected & $\Delta$ best \\
\midrule
Weibo ($J^\star$) & 95.1 & 95.1 & 95.1 & $+0.0$ & $+0.0$ \\
Reddit ($R$) & 60.6 & 60.6 & 60.6 & $+0.0$ & $+0.0$ \\
Amazon ($J^\star$) & 77.9 & 78.0 & 78.1 & $+0.1$ & $+0.2$ \\
YelpChi ($J^\star$) & 70.7 & 70.7 & 70.7 & $+0.0$ & $+0.0$ \\
BlogCatalog ($J^\star$) & 78.4 & 78.4 & 78.6 & $+0.0$ & $+0.2$ \\
Facebook ($R$) & 87.8 & 87.8 & 88.0 & $+0.0$ & $+0.2$ \\
ACM ($J^\star$) & 81.3 & 79.7 & 81.6 & $-1.6$ & $+0.3$ \\
Elliptic ($R$) & 74.4 & 74.4 & 74.4 & $+0.0$ & $+0.0$ \\
Elliptic++ ($R$) & 72.5 & 72.5 & 72.5 & $+0.0$ & $+0.0$ \\
T-Finance ($R$) & 80.1 & 78.7 & 78.7 & $-1.4$ & $-1.4$ \\
\bottomrule
\end{tabular}
\end{table}

\subsection{Selector Studies}
\label{app:selector_studies}

\paragraph{Cutoff sensitivity.}
The ordered rules of Appendix~\ref{app:selector_details} contain 21 cutoffs. We multiply each one by $0.75$ and $1.25$, and by $0.5$ and $1.5$, one at a time, and re-evaluate the rules on the 11 datasets with the diagnostics of the runs of Table~\ref{tab:main_auroc}. At $\pm25\%$ the selected family and branch are unchanged in 459 of 462 evaluations ($99.4\%$), at $\pm50\%$ in 450 of 462 ($97.4\%$). Scaling all cutoffs jointly by $0.75$ changes the selection on 2 of 11 datasets (Reddit and Facebook); scaling them by $1.25$ changes it on one (Weibo). Table~\ref{tab:cutoff_sensitivity} lists the fifteen changes. Every one of them lowers the AUROC, by $2.2$ to $44.0$ points. Eleven select an equilibrium anchor instead of the score of Table~\ref{tab:main_auroc}, and the cost is largest where NullKS then selects $J^\star$ at the anchor bandwidth: on Facebook (six changes, $47.4$ instead of $91.4$), whose routing to profile aggregation depends on four graph-statistic cutoffs and on the upper end of the mixture-mass window ($\pi=0.81$), and on Elliptic++ (two changes, $30.3$ instead of $72.5$), where rule 1 otherwise fixes $R$. Two reroute Weibo from the anchor to profile aggregation ($-2.6$) and two reroute Reddit from the anchor to the control-energy family ($-15.7$). The selections are therefore stable under sizable perturbations of single cutoffs, but where a selection changes the cost can be large, and the choice of the score type by NullKS is the fragile step.

\begin{table}[!htb]
\centering
\scriptsize
\caption{Cutoff sensitivity of the ordered rules. Each of the 21 cutoffs is multiplied by $0.5$, $0.75$, $1.25$ and $1.5$, one at a time, and the rules are re-evaluated on the 11 datasets (231 evaluations per factor). The table lists every evaluation in which the selected family or branch changes, with the AUROC (\%) before and after (original node order).}
\label{tab:cutoff_sensitivity}
\setlength{\tabcolsep}{4pt}
\begin{tabular}{l c l l c c c}
\toprule
Cutoff & Factor & Dataset & Selection & before & after & change \\
\midrule
$d_e<1$ (rule 1) & $\times0.5$ & Elliptic++ & anchor ($R$) $\to$ anchor ($J^\star$) & 72.5 & 30.3 & $-42.2$ \\
$d_e<1$ (rule 1) & $\times0.5$ & DGraph & anchor ($R$) $\to$ anchor ($J^\star$) & 66.8 & 60.5 & $-6.3$ \\
$D_x\le256$ (rule 1) & $\times0.5$ & Elliptic++ & anchor ($R$) $\to$ anchor ($J^\star$) & 72.5 & 30.3 & $-42.2$ \\
$D_x>1000$ & $\times0.5$ & Facebook & profile ($\text{entropy}$) $\to$ anchor ($J^\star$) & 91.4 & 47.4 & $-44.0$ \\
$d_e\ge20$ & $\times1.5$ & Facebook & profile ($\text{entropy}$) $\to$ anchor ($J^\star$) & 91.4 & 47.4 & $-44.0$ \\
$h<0.25$ & $\times0.5$ & ACM & ratio ($\text{aff.}$) $\to$ anchor ($J^\star$) & 86.0 & 75.6 & $-10.4$ \\
$h<0.25$ & $\times1.5$ & Facebook & profile ($\text{entropy}$) $\to$ anchor ($J^\star$) & 91.4 & 47.4 & $-44.0$ \\
$d_e\ge10$ & $\times0.5$ & Reddit & anchor ($R$) $\to$ energy ($\text{low}$) & 60.6 & 44.9 & $-15.7$ \\
$d_e\ge10$ & $\times0.75$ & Reddit & anchor ($R$) $\to$ energy ($\text{low}$) & 60.6 & 44.9 & $-15.7$ \\
guard $0.60$ & $\times1.25$ & Weibo & anchor ($J^\star$) $\to$ profile ($\text{traj.}$) & 95.1 & 92.5 & $-2.6$ \\
guard $0.60$ & $\times1.5$ & Weibo & anchor ($J^\star$) $\to$ profile ($\text{traj.}$) & 95.1 & 92.5 & $-2.6$ \\
$h\ge0.50$ & $\times0.5$ & Facebook & profile ($\text{entropy}$) $\to$ anchor ($J^\star$) & 91.4 & 47.4 & $-44.0$ \\
$h\ge0.50$ & $\times1.5$ & Amazon & energy ($\text{low}$) $\to$ anchor ($J^\star$) & 78.0 & 75.8 & $-2.2$ \\
$\pi\le0.95$ & $\times0.5$ & Facebook & profile ($\text{entropy}$) $\to$ anchor ($J^\star$) & 91.4 & 47.4 & $-44.0$ \\
$\pi\le0.95$ & $\times0.75$ & Facebook & profile ($\text{entropy}$) $\to$ anchor ($J^\star$) & 91.4 & 47.4 & $-44.0$ \\
\bottomrule
\end{tabular}
\end{table}

\paragraph{Ingredients of $A(r)$.}
Table~\ref{tab:ar_ablation} separates the two ingredients of the within-family rule on the eight datasets routed to an equilibrium or profile bank. Quality alone (NullKS) picks the score of the rule on four of them. It fails on Facebook ($17.6$), where the two-groups posterior has the largest NullKS but ranks anomalies last, and on Elliptic++ ($30.3$), where rule 1 fixes $R$ instead. Stability alone never picks the score of the rule on these datasets and ranges from $25.5$ to $85.4$. The combined tail-stability variant picks the score of the rule on Facebook ($91.4$) and YelpChi and is worse on the ratio datasets, which is why the rule weights stability by family. The rule is within $2.0$ points of the best member of its bank on seven of the eight datasets; on Reddit it picks $R$ ($60.6$) while the best member of the bank has $62.9$.

\begin{table}[!htb]
\centering
\scriptsize
\caption{Ingredients of the within-family rule $A(r)$ on the datasets routed to an equilibrium or profile bank (original node order). AUROC (\%) of the member picked by quality only (NullKS), by stability only, by the two combined variants (tail-stability, correlation-stability), by the rule used in Table~\ref{tab:main_auroc}, and the best member of the bank by AUROC (a diagnostic). The equilibrium bank contains $J^\star$, $R$ and their rank aggregates; the rule restricts it to $J^\star$ and $R$.}
\label{tab:ar_ablation}
\setlength{\tabcolsep}{4pt}
\begin{tabular}{l l cccc cc}
\toprule
Dataset & Bank & NullKS & stability & tail-stab. & corr-stab. & rule & best \\
\midrule
Weibo & equilibrium & 95.1 & 74.7 & 90.8 & 89.6 & 95.1 & 95.1 \\
Reddit & equilibrium & 60.6 & 62.9 & 62.8 & 62.9 & 60.6 & 62.9 \\
YelpChi & equilibrium & 68.7 & 67.9 & 70.7 & 70.7 & 70.7 & 70.7 \\
Facebook & two-groups & 17.6 & 25.5 & 91.4 & 69.4 & 91.4 & 91.4 \\
Elliptic & equilibrium & 74.4 & 67.2 & 58.4 & 67.4 & 74.4 & 74.4 \\
Elliptic++ & equilibrium & 30.3 & 52.3 & 30.3 & 54.5 & 72.5 & 72.5 \\
DGraph & equilibrium & 63.3 & 64.1 & 64.1 & 64.1 & 66.8 & 66.8 \\
T-Finance & path bands & 84.8 & 85.4 & 61.3 & 83.4 & 84.8 & 86.8 \\
\bottomrule
\end{tabular}
\end{table}

\paragraph{Supervised bound.}
A reviewer asked whether weights for the scores could be learned. Table~\ref{tab:stacking} bounds what any label-driven linear combination of our scores could gain: a cross-validated logistic regression over each dataset's pool, which uses the labels, against the label-free pick. The gap is below one point on Weibo, Amazon, BlogCatalog and Facebook, $3.1$ to $4.3$ points on DGraph, YelpChi and Reddit, and $8.2$ to $17.1$ points on T-Finance, ACM, Elliptic and Elliptic++. The pools therefore contain more information than the label-free selector extracts, most of all on the financial graphs and ACM; exploiting it requires labels and leaves the training-free setting. On Reddit the best single member of the pool reaches $62.0$ and the selector picks $60.6$.

\begin{table}[!htb]
\centering
\scriptsize
\caption{Supervised bound on combining the scores. Every score of a dataset's pool is rank-transformed and a logistic regression is fitted on the labels with 5-fold cross-validation; the out-of-fold AUROC (\%) bounds what a label-driven linear combination of our scores could reach. \emph{Selector}: the label-free pick of Table~\ref{tab:main_auroc} (original node order). \emph{Best single}: best member of the pool by AUROC.}
\label{tab:stacking}
\setlength{\tabcolsep}{5pt}
\begin{tabular}{l c ccc c}
\toprule
Dataset & Scores & Selector & Best single & Stacked (CV) & Gap \\
\midrule
Weibo & 224 & 95.1 & 95.1 & 95.2 & $+0.1$ \\
Reddit & 224 & 60.6 & 62.0 & 64.9 & $+4.3$ \\
Amazon & 887 & 78.0 & 78.1 & 78.3 & $+0.3$ \\
YelpChi & 158 & 70.7 & 73.0 & 74.9 & $+4.2$ \\
BlogCatalog & 4391 & 78.6 & 78.7 & 79.5 & $+0.9$ \\
Facebook & 224 & 91.4 & 91.4 & 92.3 & $+0.9$ \\
ACM & 741 & 86.0 & 86.1 & 96.4 & $+10.4$ \\
Elliptic & 166 & 74.4 & 74.4 & 88.3 & $+13.9$ \\
Elliptic++ & 168 & 72.5 & 73.6 & 89.6 & $+17.1$ \\
DGraph & 118 & 66.8 & 66.8 & 69.9 & $+3.1$ \\
T-Finance & 232 & 84.8 & 86.8 & 93.0 & $+8.2$ \\
\bottomrule
\end{tabular}
\end{table}

\subsection{Robustness of the Fit and Scope of the Gaussian Model}
\label{app:robustness}

\paragraph{Contamination.}
A reviewer asked how the fit behaves when anomalies are not rare. We inject contextual anomalies (feature swaps with random nodes) on top of the real ones, refit EB on the contaminated graph and repeat the label-free selection (Table~\ref{tab:contamination}). On Weibo the fitted prior moves from $(\rho,\kappa)=(0.70,3)$ to $(0.74,3)$ between $4.1\%$ and $24.1\%$ total contamination, and the AUROC on the original anomalies falls from $92.5$ to $91.4$. The injected swaps themselves are barely separable on Weibo ($56.1$ to $57.2$), so there they stress the fit and not the detector. On Reddit the original anomalies stay at chance at every level; the injected ones are ranked at $78.3$ to $80.2$, and the fit stays at $\kappa=3$ up to $13.3\%$ contamination and moves to $(0.43,5)$ at $23.3\%$. The likelihood is a sum over all nodes, so a contamination $\epsilon$ perturbs the estimating equations of $(\rho,\kappa)$ by a term of order $\epsilon$ times the anomalous residual energy: the prior degrades gradually. The benchmarks themselves span anomaly rates from $1.3\%$ (DGraph) to $9.8\%$ (Elliptic, evaluated nodes). At the anchor bandwidth of Table~\ref{tab:main_auroc} ($\gamma_a=0.5$ on Weibo, $2$ on Reddit) the fit does not move, and the original anomalies stay at $95.0$ to $95.2$ on Weibo and $60.1$ to $60.6$ on Reddit at every contamination level.

\begin{table}[!htb]
\centering
\scriptsize
\caption{Contamination. Contextual anomalies (feature swaps with random nodes) are injected on top of the real anomalies, EB is refitted on the contaminated graph and the score is selected without labels (largest NullKS over the finite-horizon bank). AUROC (\%) on the original anomalies and on the injected ones (against untouched normal nodes).}
\label{tab:contamination}
\setlength{\tabcolsep}{5pt}
\begin{tabular}{l c c c c c}
\toprule
Dataset & Injected & Total contamination & Fitted $(\rho,\kappa)$ & Original anomalies & Injected anomalies \\
\midrule
Weibo & $+0\%$ & 4.1\% & (0.697, 3) & 92.5 & -- \\
Weibo & $+5\%$ & 9.1\% & (0.710, 3) & 92.0 & 57.1 \\
Weibo & $+10\%$ & 14.1\% & (0.720, 3) & 92.1 & 57.2 \\
Weibo & $+20\%$ & 24.1\% & (0.737, 3) & 91.4 & 56.1 \\
Reddit & $+0\%$ & 3.3\% & (0.910, 3) & 48.3 & -- \\
Reddit & $+5\%$ & 8.3\% & (0.997, 3) & 48.3 & 79.9 \\
Reddit & $+10\%$ & 13.3\% & (1.000, 3) & 49.2 & 80.2 \\
Reddit & $+20\%$ & 23.3\% & (0.431, 5) & 50.2 & 78.3 \\
\bottomrule
\end{tabular}
\end{table}

\paragraph{Misspecification.}
Table~\ref{tab:misspecification} is a controlled stress test of the Gaussian residual model on stochastic block model graphs, with the same pipeline in every column and seed-matched conditions. Heavy tails cost at most $0.7$ points against the Gaussian control. A two-component normal population costs $9.0$ to $19.3$ points: the quadratic score measures the distance to one template, and a second mode of normal nodes is far from it. Binarized features fall outside the model: the selected score is at $56$ to $59$ (best member of the bank: $58$ to $61$), and at $59$ to $61$ (best member: $61$ to $64$) when the anomalies are bit flips. The failure is that of a Gaussian residual on quantized data; a categorical likelihood for the residual is the natural replacement and is future work. One-hot features in the benchmarks pass through PCA first, which gives an approximately continuous input. In this test the fitted graph trust is small ($\rho=0.05$) in every condition, because the noise added to the smooth signal is white; the test probes the residual distribution, not the graph prior. At the anchor bandwidth the first three rows change by at most $0.1$ and the binarized rows are at $59.6$ to $67.2$.

\begin{table}[!htb]
\centering
\scriptsize
\caption{Misspecification stress test on stochastic block model graphs ($n=3000$, four blocks, edge probabilities $0.02$ within and $0.002$ between blocks, $D=16$, $5\%$ injected attribute anomalies, three seeds). The normal signal is a low-pass graph signal plus noise whose distribution is varied; the pipeline, grids and label-free selection are identical in every column. AUROC (\%) of the selected score and of the best member of the bank (a diagnostic), and the fitted graph trust, per seed.}
\label{tab:misspecification}
\setlength{\tabcolsep}{5pt}
\begin{tabular}{l c c c}
\toprule
Residual distribution & Selected & Best member & Fitted $\rho$ \\
\midrule
Gaussian (control) & 96.3 / 95.7 / 95.5 & 98.8 / 98.1 / 97.6 & 0.05 / 0.05 / 0.05 \\
Student-$t$, 3 d.o.f. & 95.6 / 95.7 / 95.3 & 97.4 / 97.5 / 96.8 & 0.05 / 0.05 / 0.05 \\
two-component mixture & 80.2 / 86.7 / 76.2 & 87.0 / 91.6 / 82.7 & 0.05 / 0.05 / 0.05 \\
binarized & 59.3 / 56.1 / 56.4 & 60.8 / 58.1 / 61.1 & 0.05 / 0.05 / 0.05 \\
binarized, bit-flip anomalies & 59.1 / 60.5 / 59.4 & 63.4 / 60.9 / 63.7 & 0.05 / 0.05 / 0.05 \\
\bottomrule
\end{tabular}
\end{table}

\subsection{Graph Structure, Edge Uncertainty and Neighboring Anomalies}
\label{app:structure}

\paragraph{A worked example.}
A reviewer asked how the network structure enters the score and how uncertainty in the edges is reflected. We use Zachary's karate club with the connectivity profiles (rows of the adjacency matrix) as features and run the pipeline of the paper (EB fit, equilibrium score $J^\star$). The three most anomalous profiles are those of the two faction leaders (individuals 34 and 1) and of individual 33. Zachary's published matrix is inconsistent about the edge between individuals 23 and 34; we score the graph with and without it. The three largest changes of the score are at individuals 33, 34 and 23, and $98.8\%$ of the total absolute change falls on the two endpoints and their neighbors (18 of the 34 nodes). The ranking as a whole is stable (rank correlation $0.989$); eight nodes outside that neighborhood change rank, by at most five positions. Structure enters the score through two channels: the smoothed template, which the edge changes directly, and the coupling in the precision, whose weight $\rho$ is what EB fits. Because these features are the connectivity profiles themselves, EB assigns the graph's information to the template ($\rho=0.008$). These numbers are for $\gamma^\star=1$, the likelihood-selected bandwidth, at which the template has unit gain on the constant mode. At the bandwidth nearest to $\gamma_0$ ($0.7$) the template amplifies low frequencies and the change is less local: $74.7\%$ of it falls on the same 18 nodes, and the rank correlation is $0.927$.

\paragraph{Neighboring anomalies.}
A reviewer asked whether connected anomalies can be detected, since each could make its neighbor look normal. On Reddit we inject $5\%$ feature-swap anomalies either scattered or as connected groups of at most 10 neighbors, refit EB on each graph, and select the score without labels (Table~\ref{tab:neighbors}). Connected groups are easier to detect, $92.0$ to $93.6$ against $79.9$ to $83.0$: a connected anomalous group shifts the smoothed template away from all of its members at once. A pairwise readout also works (Table~\ref{tab:edges}): scoring an edge by the smaller of its two endpoint energies separates edges between two injected nodes from edges between two normal nodes at $94.4$ to $96.7$; the standardized cross-term of the quadratic form alone gives $77.5$ to $84.0$. We did not study large contiguous anomalous regions, which can become their own context. These results are for the EB-fitted configuration ($\gamma^\star=0.7$), whose residual is contextual: most of the feature energy is removed and what remains is the deviation of a node from its smoothed neighborhood. At the anchor bandwidth of Reddit ($\gamma_a=2$) the residual is close to the features themselves, and a feature swap between two nodes is harder to see: the label-free choice gives $55.0$ to $57.4$ on connected groups (best member of the bank: $89.7$ to $91.4$) and $52.1$ to $77.6$ on scattered swaps, the smaller endpoint energy separates edges at $87.5$ to $91.1$, and the standardized cross-term is inverted ($8.4$ to $9.6$). The two bandwidths therefore see different anomalies: the labeled anomalies of Reddit are found at $\gamma_a$ and not at $\gamma^\star$, injected swaps at $\gamma^\star$ and not at $\gamma_a$.

\begin{table}[!htb]
\centering
\scriptsize
\caption{Neighboring anomalies on Reddit. $5\%$ of the nodes receive a feature swap, either scattered over the graph or as connected groups of at most 10 neighbors; EB is refitted on each contaminated graph and the score is selected without labels. AUROC (\%) of the injected nodes against untouched normal nodes, per seed.}
\label{tab:neighbors}
\setlength{\tabcolsep}{5pt}
\begin{tabular}{l c c c}
\toprule
Placement & Selected, seeds 0--4 & Range & Best member, range \\
\midrule
scattered & 79.9 / 81.3 / 82.7 / 80.5 / 83.0 & 79.9--83.0 & 79.9--83.0 \\
connected groups of at most 10 & 92.8 / 92.2 / 93.6 / 92.9 / 92.0 & 92.0--93.6 & 92.0--93.6 \\
\bottomrule
\end{tabular}
\end{table}

\begin{table}[!htb]
\centering
\scriptsize
\caption{Pairwise readout on the connected-group runs of Table~\ref{tab:neighbors}. Every undirected edge is scored and edges that join two injected nodes are ranked against edges that join two untouched normal nodes. AUROC (\%) per seed.}
\label{tab:edges}
\setlength{\tabcolsep}{5pt}
\begin{tabular}{l c c}
\toprule
Edge score & Seeds 0--4 & Range \\
\midrule
smaller endpoint energy & 95.9 / 96.0 / 94.4 / 96.7 / 94.4 & 94.4--96.7 \\
standardized cross-term & 83.0 / 84.0 / 77.5 / 83.4 / 80.2 & 77.5--84.0 \\
\bottomrule
\end{tabular}
\end{table}

\subsection{Template Variants on BlogCatalog}
\label{app:templates}

A reviewer asked whether the low-pass filter limits the method to homophilic graphs and explains the BlogCatalog result. BlogCatalog has the lowest feature homophily of the benchmarks ($h=0.009$) and is routed to the feature-affinity template, which reweights features by node reliability before smoothing. We replaced that template by a high-pass and by a mixed (band-pass) affinity variant and ran the same pipeline. The equilibrium anchor chosen by NullKS gives $77.4$ (affinity), $75.3$ (high-pass) and $74.6$ (mixed) AUROC, and the best member of each bank by AUROC is $78.6$, $76.0$ and $74.8$. Richer filters do not help: the affinity template of the paper is the best of the three, and none approaches TAM ($82.5$), which optimizes a local-affinity objective that matches the injected anomalies of this graph.

\section*{NeurIPS Paper Checklist}

\begin{enumerate}

\item {\bf Claims}
    \item[] Question: Do the main claims made in the abstract and introduction accurately reflect the paper's contributions and scope?
    \item[] Answer: \answerYes{}%
    \item[] Justification: The abstract and introduction state the results of Table~\ref{tab:main_auroc}, including the datasets on which the method trails; Section~\ref{sec:experiments} and Appendix~\ref{app:regeneration} state which entries depend on the anchor bandwidth rule, adopted after a first regeneration had been evaluated, and give the results without it.
    \item[] Guidelines:
    \begin{itemize}
        \item The answer \answerNA{} means that the abstract and introduction do not include the claims made in the paper.
        \item The abstract and/or introduction should clearly state the claims made, including the contributions made in the paper and important assumptions and limitations. A \answerNo{} or \answerNA{} answer to this question will not be perceived well by the reviewers. 
        \item The claims made should match theoretical and experimental results, and reflect how much the results can be expected to generalize to other settings. 
        \item It is fine to include aspirational goals as motivation as long as it is clear that these goals are not attained by the paper. 
    \end{itemize}

\item {\bf Limitations}
    \item[] Question: Does the paper discuss the limitations of the work performed by the authors?
    \item[] Answer: \answerYes{}%
    \item[] Justification: Section 5 has a scope-and-limitations paragraph, and Appendix~\ref{app:studies} reports the studies behind it.
    \item[] Guidelines:
    \begin{itemize}
        \item The answer \answerNA{} means that the paper has no limitation while the answer \answerNo{} means that the paper has limitations, but those are not discussed in the paper. 
        \item The authors are encouraged to create a separate ``Limitations'' section in their paper.
        \item The paper should point out any strong assumptions and how robust the results are to violations of these assumptions (e.g., independence assumptions, noiseless settings, model well-specification, asymptotic approximations only holding locally). The authors should reflect on how these assumptions might be violated in practice and what the implications would be.
        \item The authors should reflect on the scope of the claims made, e.g., if the approach was only tested on a few datasets or with a few runs. In general, empirical results often depend on implicit assumptions, which should be articulated.
        \item The authors should reflect on the factors that influence the performance of the approach. For example, a facial recognition algorithm may perform poorly when image resolution is low or images are taken in low lighting. Or a speech-to-text system might not be used reliably to provide closed captions for online lectures because it fails to handle technical jargon.
        \item The authors should discuss the computational efficiency of the proposed algorithms and how they scale with dataset size.
        \item If applicable, the authors should discuss possible limitations of their approach to address problems of privacy and fairness.
        \item While the authors might fear that complete honesty about limitations might be used by reviewers as grounds for rejection, a worse outcome might be that reviewers discover limitations that aren't acknowledged in the paper. The authors should use their best judgment and recognize that individual actions in favor of transparency play an important role in developing norms that preserve the integrity of the community. Reviewers will be specifically instructed to not penalize honesty concerning limitations.
    \end{itemize}

\item {\bf Theory assumptions and proofs}
    \item[] Question: For each theoretical result, does the paper provide the full set of assumptions and a complete (and correct) proof?
    \item[] Answer: \answerYes{}%
    \item[] Justification: all the theoretical statements in the paper are stated with conditions and proved in the appendix.
    \item[] Guidelines:
    \begin{itemize}
        \item The answer \answerNA{} means that the paper does not include theoretical results. 
        \item All the theorems, formulas, and proofs in the paper should be numbered and cross-referenced.
        \item All assumptions should be clearly stated or referenced in the statement of any theorems.
        \item The proofs can either appear in the main paper or the supplemental material, but if they appear in the supplemental material, the authors are encouraged to provide a short proof sketch to provide intuition. 
        \item Inversely, any informal proof provided in the core of the paper should be complemented by formal proofs provided in appendix or supplemental material.
        \item Theorems and Lemmas that the proof relies upon should be properly referenced. 
    \end{itemize}

    \item {\bf Experimental result reproducibility}
    \item[] Question: Does the paper fully disclose all the information needed to reproduce the main experimental results of the paper to the extent that it affects the main claims and/or conclusions of the paper (regardless of whether the code and data are provided or not)?
    \item[] Answer: \answerYes{}%
    \item[] Justification: Appendix~\ref{app:selector_details} states the procedure exactly as it is run (grids, ordered rules, cutoffs), and Appendix~\ref{app:regeneration} the protocol and the relabeling gate.
    \item[] Guidelines:
    \begin{itemize}
        \item The answer \answerNA{} means that the paper does not include experiments.
        \item If the paper includes experiments, a \answerNo{} answer to this question will not be perceived well by the reviewers: Making the paper reproducible is important, regardless of whether the code and data are provided or not.
        \item If the contribution is a dataset and\slash or model, the authors should describe the steps taken to make their results reproducible or verifiable. 
        \item Depending on the contribution, reproducibility can be accomplished in various ways. For example, if the contribution is a novel architecture, describing the architecture fully might suffice, or if the contribution is a specific model and empirical evaluation, it may be necessary to either make it possible for others to replicate the model with the same dataset, or provide access to the model. In general. releasing code and data is often one good way to accomplish this, but reproducibility can also be provided via detailed instructions for how to replicate the results, access to a hosted model (e.g., in the case of a large language model), releasing of a model checkpoint, or other means that are appropriate to the research performed.
        \item While NeurIPS does not require releasing code, the conference does require all submissions to provide some reasonable avenue for reproducibility, which may depend on the nature of the contribution. For example
        \begin{enumerate}
            \item If the contribution is primarily a new algorithm, the paper should make it clear how to reproduce that algorithm.
            \item If the contribution is primarily a new model architecture, the paper should describe the architecture clearly and fully.
            \item If the contribution is a new model (e.g., a large language model), then there should either be a way to access this model for reproducing the results or a way to reproduce the model (e.g., with an open-source dataset or instructions for how to construct the dataset).
            \item We recognize that reproducibility may be tricky in some cases, in which case authors are welcome to describe the particular way they provide for reproducibility. In the case of closed-source models, it may be that access to the model is limited in some way (e.g., to registered users), but it should be possible for other researchers to have some path to reproducing or verifying the results.
        \end{enumerate}
    \end{itemize}

\item {\bf Open access to data and code}
    \item[] Question: Does the paper provide open access to the data and code, with sufficient instructions to faithfully reproduce the main experimental results, as described in supplemental material?
    \item[] Answer: \answerYes{}%
    \item[] Justification: The code, the protocol file and the scripts that regenerate every EB-GAD number are available at \url{https://github.com/heraclixus/EBGAD}; the datasets are public and their sources are listed there.
    \item[] Guidelines:
    \begin{itemize}
        \item The answer \answerNA{} means that paper does not include experiments requiring code.
        \item Please see the NeurIPS code and data submission guidelines (\url{https://neurips.cc/public/guides/CodeSubmissionPolicy}) for more details.
        \item While we encourage the release of code and data, we understand that this might not be possible, so \answerNo{} is an acceptable answer. Papers cannot be rejected simply for not including code, unless this is central to the contribution (e.g., for a new open-source benchmark).
        \item The instructions should contain the exact command and environment needed to run to reproduce the results. See the NeurIPS code and data submission guidelines (\url{https://neurips.cc/public/guides/CodeSubmissionPolicy}) for more details.
        \item The authors should provide instructions on data access and preparation, including how to access the raw data, preprocessed data, intermediate data, and generated data, etc.
        \item The authors should provide scripts to reproduce all experimental results for the new proposed method and baselines. If only a subset of experiments are reproducible, they should state which ones are omitted from the script and why.
        \item At submission time, to preserve anonymity, the authors should release anonymized versions (if applicable).
        \item Providing as much information as possible in supplemental material (appended to the paper) is recommended, but including URLs to data and code is permitted.
    \end{itemize}

\item {\bf Experimental setting/details}
    \item[] Question: Does the paper specify all the training and test details (e.g., data splits, hyperparameters, how they were chosen, type of optimizer) necessary to understand the results?
    \item[] Answer: \answerYes{}%
    \item[] Justification: The implementation details are covered in Appendix~B.
    \item[] Guidelines:
    \begin{itemize}
        \item The answer \answerNA{} means that the paper does not include experiments.
        \item The experimental setting should be presented in the core of the paper to a level of detail that is necessary to appreciate the results and make sense of them.
        \item The full details can be provided either with the code, in appendix, or as supplemental material.
    \end{itemize}

\item {\bf Experiment statistical significance}
    \item[] Question: Does the paper report error bars suitably and correctly defined or other appropriate information about the statistical significance of the experiments?
    \item[] Answer: \answerYes{}%
    \item[] Justification: Baselines report mean and standard deviation over seeds. EB-GAD is deterministic given the eigenbasis; for truncated spectra we report mean and standard deviation over five random node relabelings (Table~\ref{tab:relabel_gate}).
    \item[] Guidelines:
    \begin{itemize}
        \item The answer \answerNA{} means that the paper does not include experiments.
        \item The authors should answer \answerYes{} if the results are accompanied by error bars, confidence intervals, or statistical significance tests, at least for the experiments that support the main claims of the paper.
        \item The factors of variability that the error bars are capturing should be clearly stated (for example, train/test split, initialization, random drawing of some parameter, or overall run with given experimental conditions).
        \item The method for calculating the error bars should be explained (closed form formula, call to a library function, bootstrap, etc.)
        \item The assumptions made should be given (e.g., Normally distributed errors).
        \item It should be clear whether the error bar is the standard deviation or the standard error of the mean.
        \item It is OK to report 1-sigma error bars, but one should state it. The authors should preferably report a 2-sigma error bar than state that they have a 96\% CI, if the hypothesis of Normality of errors is not verified.
        \item For asymmetric distributions, the authors should be careful not to show in tables or figures symmetric error bars that would yield results that are out of range (e.g., negative error rates).
        \item If error bars are reported in tables or plots, the authors should explain in the text how they were calculated and reference the corresponding figures or tables in the text.
    \end{itemize}

\item {\bf Experiments compute resources}
    \item[] Question: For each experiment, does the paper provide sufficient information on the computer resources (type of compute workers, memory, time of execution) needed to reproduce the experiments?
    \item[] Answer: \answerYes{}%
    \item[] Justification: The computational resources are described in Appendix~B.
    \item[] Guidelines:
    \begin{itemize}
        \item The answer \answerNA{} means that the paper does not include experiments.
        \item The paper should indicate the type of compute workers CPU or GPU, internal cluster, or cloud provider, including relevant memory and storage.
        \item The paper should provide the amount of compute required for each of the individual experimental runs as well as estimate the total compute. 
        \item The paper should disclose whether the full research project required more compute than the experiments reported in the paper (e.g., preliminary or failed experiments that didn't make it into the paper). 
    \end{itemize}
    
\item {\bf Code of ethics}
    \item[] Question: Does the research conducted in the paper conform, in every respect, with the NeurIPS Code of Ethics \url{https://neurips.cc/public/EthicsGuidelines}?
    \item[] Answer: \answerYes{}%
    \item[] Justification: our work can help with ML system deployment safety and there are no ethical concerns.  
    \item[] Guidelines:
    \begin{itemize}
        \item The answer \answerNA{} means that the authors have not reviewed the NeurIPS Code of Ethics.
        \item If the authors answer \answerNo, they should explain the special circumstances that require a deviation from the Code of Ethics.
        \item The authors should make sure to preserve anonymity (e.g., if there is a special consideration due to laws or regulations in their jurisdiction).
    \end{itemize}

\item {\bf Broader impacts}
    \item[] Question: Does the paper discuss both potential positive societal impacts and negative societal impacts of the work performed?
    \item[] Answer: \answerYes{}%
    \item[] Justification: EB-GAD targets fraud detection, cybersecurity, and scientific networks (\S\ref{sec:experiments} evaluates on financial-fraud benchmarks T-Finance, Elliptic, Elliptic++, and DGraph with up to 3.7M nodes). Positive impacts include improved detection of illicit transactions and abusive accounts in production systems; the training-free, deterministic pipeline also reduces compute and energy footprint relative to GNN-based detectors. Negative impacts of any anomaly detector deployed in financial or social settings include downstream harms from false positives (e.g., unwarranted account suspensions, financial exclusion) and disparate impact across user subgroups; mitigations include human review of flagged cases, calibrated thresholds, and audit of subgroup error rates before deployment.
    \item[] Guidelines:
    \begin{itemize}
        \item The answer \answerNA{} means that there is no societal impact of the work performed.
        \item If the authors answer \answerNA{} or \answerNo, they should explain why their work has no societal impact or why the paper does not address societal impact.
        \item Examples of negative societal impacts include potential malicious or unintended uses (e.g., disinformation, generating fake profiles, surveillance), fairness considerations (e.g., deployment of technologies that could make decisions that unfairly impact specific groups), privacy considerations, and security considerations.
        \item The conference expects that many papers will be foundational research and not tied to particular applications, let alone deployments. However, if there is a direct path to any negative applications, the authors should point it out. For example, it is legitimate to point out that an improvement in the quality of generative models could be used to generate Deepfakes for disinformation. On the other hand, it is not needed to point out that a generic algorithm for optimizing neural networks could enable people to train models that generate Deepfakes faster.
        \item The authors should consider possible harms that could arise when the technology is being used as intended and functioning correctly, harms that could arise when the technology is being used as intended but gives incorrect results, and harms following from (intentional or unintentional) misuse of the technology.
        \item If there are negative societal impacts, the authors could also discuss possible mitigation strategies (e.g., gated release of models, providing defenses in addition to attacks, mechanisms for monitoring misuse, mechanisms to monitor how a system learns from feedback over time, improving the efficiency and accessibility of ML).
    \end{itemize}
    
\item {\bf Safeguards}
    \item[] Question: Does the paper describe safeguards that have been put in place for responsible release of data or models that have a high risk for misuse (e.g., pre-trained language models, image generators, or scraped datasets)?
    \item[] Answer: \answerNA{}%
    \item[] Justification: The paper releases no pretrained models and no new or scraped datasets.
    \item[] Guidelines:
    \begin{itemize}
        \item The answer \answerNA{} means that the paper poses no such risks.
        \item Released models that have a high risk for misuse or dual-use should be released with necessary safeguards to allow for controlled use of the model, for example by requiring that users adhere to usage guidelines or restrictions to access the model or implementing safety filters. 
        \item Datasets that have been scraped from the Internet could pose safety risks. The authors should describe how they avoided releasing unsafe images.
        \item We recognize that providing effective safeguards is challenging, and many papers do not require this, but we encourage authors to take this into account and make a best faith effort.
    \end{itemize}

\item {\bf Licenses for existing assets}
    \item[] Question: Are the creators or original owners of assets (e.g., code, data, models), used in the paper, properly credited and are the license and terms of use explicitly mentioned and properly respected?
    \item[] Answer: \answerYes{}%
    \item[] Justification: Dataset and baseline sources are cited in Appendix~B and the references; release materials will include license notes where available.
    \item[] Guidelines:
    \begin{itemize}
        \item The answer \answerNA{} means that the paper does not use existing assets.
        \item The authors should cite the original paper that produced the code package or dataset.
        \item The authors should state which version of the asset is used and, if possible, include a URL.
        \item The name of the license (e.g., CC-BY 4.0) should be included for each asset.
        \item For scraped data from a particular source (e.g., website), the copyright and terms of service of that source should be provided.
        \item If assets are released, the license, copyright information, and terms of use in the package should be provided. For popular datasets, \url{paperswithcode.com/datasets} has curated licenses for some datasets. Their licensing guide can help determine the license of a dataset.
        \item For existing datasets that are re-packaged, both the original license and the license of the derived asset (if it has changed) should be provided.
        \item If this information is not available online, the authors are encouraged to reach out to the asset's creators.
    \end{itemize}

\item {\bf New assets}
    \item[] Question: Are new assets introduced in the paper well documented and is the documentation provided alongside the assets?
    \item[] Answer: \answerYes{}%
    \item[] Justification: The released code is documented (installation, data sources, protocol, and the script behind each table).
    \item[] Guidelines:
    \begin{itemize}
        \item The answer \answerNA{} means that the paper does not release new assets.
        \item Researchers should communicate the details of the dataset\slash code\slash model as part of their submissions via structured templates. This includes details about training, license, limitations, etc. 
        \item The paper should discuss whether and how consent was obtained from people whose asset is used.
        \item At submission time, remember to anonymize your assets (if applicable). You can either create an anonymized URL or include an anonymized zip file.
    \end{itemize}

\item {\bf Crowdsourcing and research with human subjects}
    \item[] Question: For crowdsourcing experiments and research with human subjects, does the paper include the full text of instructions given to participants and screenshots, if applicable, as well as details about compensation (if any)? 
    \item[] Answer: \answerNA{}%
    \item[] Justification: Our work does not involve crowdsourcing.
    \item[] Guidelines:
    \begin{itemize}
        \item The answer \answerNA{} means that the paper does not involve crowdsourcing nor research with human subjects.
        \item Including this information in the supplemental material is fine, but if the main contribution of the paper involves human subjects, then as much detail as possible should be included in the main paper. 
        \item According to the NeurIPS Code of Ethics, workers involved in data collection, curation, or other labor should be paid at least the minimum wage in the country of the data collector. 
    \end{itemize}

\item {\bf Institutional review board (IRB) approvals or equivalent for research with human subjects}
    \item[] Question: Does the paper describe potential risks incurred by study participants, whether such risks were disclosed to the subjects, and whether Institutional Review Board (IRB) approvals (or an equivalent approval/review based on the requirements of your country or institution) were obtained?
    \item[] Answer: \answerNA{}%
    \item[] Justification: no crowdsourcing is involved.
    \item[] Guidelines:
    \begin{itemize}
        \item The answer \answerNA{} means that the paper does not involve crowdsourcing nor research with human subjects.
        \item Depending on the country in which research is conducted, IRB approval (or equivalent) may be required for any human subjects research. If you obtained IRB approval, you should clearly state this in the paper. 
        \item We recognize that the procedures for this may vary significantly between institutions and locations, and we expect authors to adhere to the NeurIPS Code of Ethics and the guidelines for their institution. 
        \item For initial submissions, do not include any information that would break anonymity (if applicable), such as the institution conducting the review.
    \end{itemize}

\item {\bf Declaration of LLM usage}
    \item[] Question: Does the paper describe the usage of LLMs if it is an important, original, or non-standard component of the core methods in this research? Note that if the LLM is used only for writing, editing, or formatting purposes and does \emph{not} impact the core methodology, scientific rigor, or originality of the research, declaration is not required.
    \item[] Answer: \answerNA{}%
    \item[] Justification: LLMs are not a component of the method. An LLM assistant was used to polish the writing and to help orchestrate experiments.
    \item[] Guidelines:
    \begin{itemize}
        \item The answer \answerNA{} means that the core method development in this research does not involve LLMs as any important, original, or non-standard components.
        \item Please refer to our LLM policy in the NeurIPS handbook for what should or should not be described.
    \end{itemize}

\end{enumerate}

\end{document}